\newcommand*{\QEDW}{\hfill\ensuremath{\square}}%
\newcommand{\norm}[1]{\left\lVert#1\right\rVert}
\newcommand{\Tfrank}{T}
\newcommand{\br}[1]{\left\{#1\right\}}
\DeclareMathOperator*{\argmax}{arg\,max}
\DeclareMathOperator*{\argmin}{arg\,min}
\newcommand{\comment}[1]{}
\newcommand{\REAL}{\ensuremath{\mathbb{R}}}
\newcommand{\abs}[1]        {\left| #1\right|}
\newcommand{\ceil}[1]{\left \lceil #1 \right \rceil}
\newcommand{\smi}{\sum_{i=1}^n}
\newcommand{\err}{err}
\renewcommand{\hl}{\comment}
\newcommand{\hll}{}
\newenvironment{proof}{\noindent\normalfont {\bf Proof}.\ }{\QEDW \par\vskip 4mm\par}
 \newtheorem{theorem}{Theorem}[section]
 \newtheorem{corollary}[theorem]{Corollary}
 \newtheorem{lemma}[theorem]{Lemma}
\newtheorem{definition}[theorem]{Definition}
 \newtheorem{claim}[theorem]{Claim}
\newcommand{\eps}{\varepsilon}
\newcommand{\co}{8}
\DeclarePairedDelimiter\floor{\lfloor}{\rfloor}
\newcommand{\var}{\sigma}
\newcommand{\smip}{\sum_{p\in P}}
\newcommand{\smis}{\sum_{p\in S}}
\newcommand{\size}{\frac{4}{\eps}}
\newcommand{\myk}{\floor{3.5\log{\left(\frac{1}{\delta}\right)}} +1}
\newcommand{\myn}{\frac{4k}{\eps}}
\newcommand{\weaklogdeltacoreset}{\textsc{Prob-Coreset}}
\newif\ifproofs
\title{Faster PAC Learning and Smaller Coresets via Smoothed Analysis}
\author{%
  Alaa Maalouf\\
  \texttt{alaamalouf12@gmail.com} \\
\And
  Ibrahim Jubran\\
  \texttt{ibrahim.jub@gmail.com} \\
\AND
  Murad Tukan\\
  \texttt{muradtuk@gmail.com } \\
\And
  Dan Feldman\\
  \texttt{dannyf.post@gmail.com } \\
\AND
\normalfont{The Robotics and Big Data Lab,}\\Department of Computer Science,\\University of Haifa,\\Haifa, Israel
}
\begin{document}

\maketitle

\begin{abstract}
PAC-learning usually aims to compute a small subset ($\eps$-sample/net) from $n$ items, that provably approximates a given loss function for every query (model, classifier, hypothesis) from a given set of queries, up to an additive error $\eps\in(0,1)$. Coresets generalize this idea to support multiplicative error $1\pm\eps$.

Inspired by smoothed analysis, we suggest a natural generalization: approximate the \emph{average} (instead of the worst-case) error over the queries, in the hope of getting smaller subsets. The dependency between errors of different queries implies that we may no longer apply the Chernoff-Hoeffding inequality for a fixed query, and then use the VC-dimension or union bound.

This paper provides deterministic and randomized algorithms for computing such coresets and $\eps$-samples of size independent of $n$, for any finite set of queries and loss function. Example applications include new and improved coreset constructions for e.g.  streaming vector summarization [ICML'17] and $k$-PCA [NIPS'16]. Experimental results with open source code are provided.
\end{abstract}

\section{Introduction}\label{sec:intro}
In this paper we assume that the input is a set $P$ of items, called points. Usually $P$ is simply a finite set of $n$ points in $\REAL^d$ or other metric space. In the context of PAC-learning~\cite{valiant1984theory}, or Empirical Risk Minimization~\cite{vapnik1992principles} it represents the training set. In supervised learning every point in $P$ may also include its label or class. We also assume a given function $w:P\to (0,\infty)$ called \emph{weights function} that assigns a ``weight'' $w(p)>0$ for every point $p\in P$.  The weights function represents a distribution of importance over the input points, where the natural choice is uniform distribution, i.e.,  $w(p)=1/|P|$ for every $p\in P$. We are also given a (possibly infinite) set $X$ that is the set of \emph{queries}~\cite{feldman2011unified} which represents candidate models or hypothesis, e.g. neural networks~\cite{nielsen2015neural}, SVMs~\cite{steinwart2008support} or a set of vectors in $\REAL^d$ with tuning parameters as in linear/ridge/lasso regression~\cite{tibshirani1996regression, hastie2009elements,hoerl1970ridge}.

In machine learning and PAC-learning in particular, we often seek to compute the query that best describes our input data $P$ for either prediction, classification, or clustering tasks. To this end, we define a \emph{loss function} $f:P\times X\to \REAL$ that assigns a fitting cost $f(p,x)$ to every point $p\in P$ with respect to a query $x\in X$. For example, it may be a kernel function~\cite{bergman1970kernel}, a convex function~\cite{eggleston1966convexity}, or an inner product. The tuple $(P,w,X,f)$ is called a \emph{query space} and represents the input to our problem. In this paper we wish to approximate the weighted sum of losses $f_w(P,x)=\sum_{p\in P}w(p)f(p,x)$.
\hll{Other papers considers e.g. covering problems where the maximum error $\max_{p\in P}w(p)f(p,x)$, or the $\ell_z$ loss which is proportional to $\sum_{p\in P}w(p)(f(p,x))^z$ are considered. Such overall loss functions may still be represented as query spaces, at least approximately, by e.g. replacing $f(p,x)$ with $f^z(p,x)$.}

\begin{figure}[t]
  \centering
  \includegraphics[width=\textwidth]{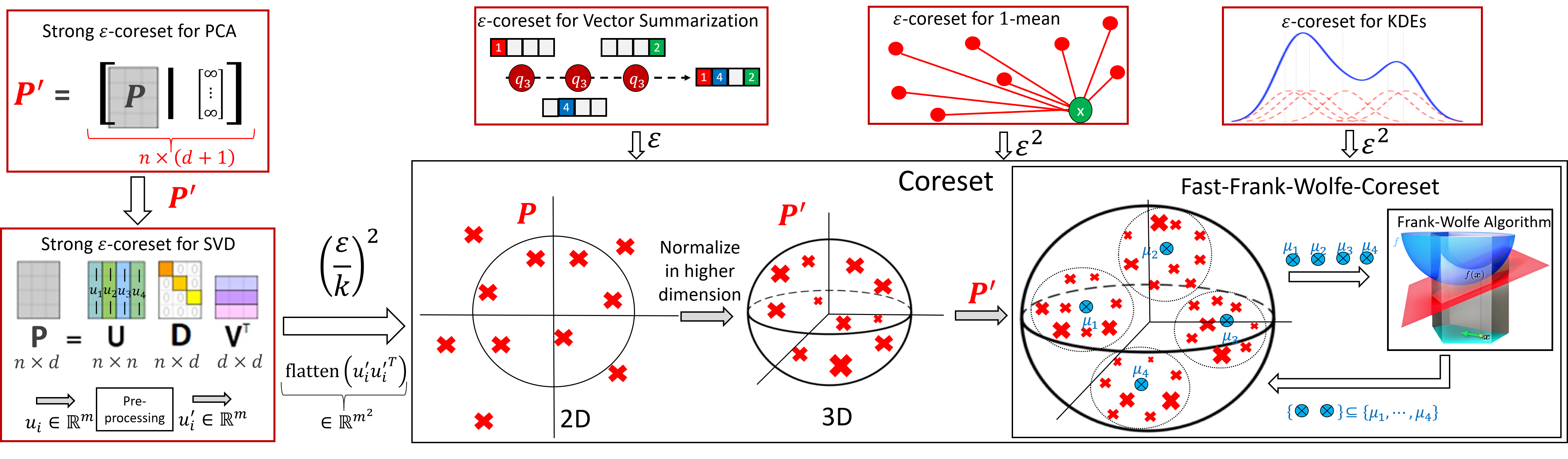}
  \caption{Illustration of Algorithm~\ref{alg:strong}, its normalization of the input, its main applications (red boxes) and their plugged parameters. Algorithm~\ref{alg:strong} utilizes and boosts the run-time of the Frank-Wolfe algorithm for those applications; see Section~\ref{sec:our_contrib}. Some images taken from~\cite{jaggi2013revisiting}.}\label{allthepaper}
\end{figure}


\textbf{$\eps$-Sample. } Suppose that we wish to approximate the mean $f(P,x)=\frac{1}{n}\sum_{p\in P}f(p,x)$ for a specific $x$ in sub-linear time. Picking a random point $p$ uniformly at random from $P$ would give this result in expectation as $E[f(p,x)]=\sum_{p\in P}f(p,x)/n=f(P,x)$. By Hoeffding inequality, the mean $f(S,x)=\frac{1}{|S|}\sum_{p\in S}f(p,x)$ of a uniform sample $S\subseteq P$ would approximate this mean $f(P,x)$ with high probability. More precisely, for a given $\eps\in(0,1)$ the probability of getting an approximation error $\err_f(x)=|f(P,x)-f(S,x)|\leq \eps$ is constant if the size of the sample is $|S|\in  O(M/\eps^2)$ where $M=\max_{p\in P}|f(p,x)|$ is the maximum absolute value of $f$.\hll{ For example, the fraction of voters to a specific candidate is usually predicted via a uniform sample from a population. Such a set $S$ of size $|S|=2$ that yields no error, i.e., $\err(x)=0$, can be computed deterministically (but in linear time) by observing that the mean $f(P,x)$ is always a convex combination (weighted mean) of the pair of points $p_{\min}$ and $p_{\max}$ that minimizes and maximizes $f(p,x)$, respectively.}

Generally, we are interested in such data summarization $S$ of $P$ that approximates \emph{every} query $x\in X$. An \emph{$\eps$-sample} is a pair $(S,u)$ where $S$ is a \emph{subset} of $P$ (unlike e.g. sketches~\cite{phillips2016coresets}), and $u:S\to [0,\infty)$ is its weights function such that the original weighted loss $f_w(P,x)$ is approximated by the weighted loss $f_u(S,x)=\sum_{p\in S}u(p)f(p,x)$ of the (hopefully small) weighted subset $S$~\cite{har2011geometric}, i.e.,
\begin{equation}\label{xx}
\forall x\in X:\left|f_w(P,x)-f_u(S,x)\right|\leq \eps.
\end{equation}
We usually assume that the input is normalized in the sense that $w$ is a distribution, and $f:P\times X\to [-1,1]$.
\hl{The absolute value $|\cdot|$ in {\eqref{xx}} may be replaced by a more general distance function $|\cdot|_\nu$ as explained in {\cite{li2001improved}}. For example, for $\nu=1/4$ yields an $\eps$-net {\cite{braverman2016new}} which is a weaker version of $\eps$-sample where $\forall x\in X:\abs{\sum_{p\in P}w(p)f(p,x)}\geq \eps \rightarrow \abs{\sum_{p\in S}u(p)f(p,x) }>0.$}
By defining the vectors $f_w(P,X)=(\sum_{p\in P}w(p)f(p,x))_{x\in X}$ and $f_u(S,X)=(\sum_{p\in S}u(p)f(p,x))_{x\in X}$, we can define the error for a single $x$ by
$err(x)=\abs{f_w(P,x)-f_u(S,x)}$, and then the error vector for the coreset $\err(X)=(\err(x))_{x\in X}$. We can rewrite~\eqref{xx} by
\begin{equation}\label{err}
\norm{\err(X)}_\infty =\norm{f_w(P,X)-f_u(S,X)}_\infty\leq \eps.
\end{equation}

\paragraph{PAC/DAC learning.} Probably Approximately Correct (PAC) randomized constructions generalizes Hoeffding inequality above from a single to multiple (usually infinite) queries and returns an $\eps$-sample for a given query space $(P,w,X,f)$ and $\delta\in(0,1)$, with probability at least $1-\delta$. Here, $\delta$ corresponds to the ``probably" part, while ``approximately correct" corresponds to $\eps$ in~\eqref{err}; see ~\cite{vapnik2013nature,braverman2016new,langberg2010universal}.
Deterministic Approximately Correct (DAC) versions of PAC-learning suggest deterministic construction of $\eps$-samples, i.e., the probability of failure of the construction is $\delta=0$.

As common in machine learning and computer science in general, the main advantage of deterministic constructions is smaller bounds \hll{(in this case, on the size of the resulting $\eps$-sample)} that cannot be obtained via random sampling \hll{e.g. due to the lower bounds that are related to the Coupon Collector Problem {\cite{flajolet1992birthday}}}. Their disadvantage is usually the slower construction time that may be unavoidable.\hll{, e.g. deterministic version of the Johnson Lindestrauss lemma {\cite{kane2010derandomized}} that takes time {$O(n^2)$} compared to linear or even sub-linear time via uniform sampling {\cite{achlioptas2003database}}.} 
\hl{Deterministic constructions are known even when the VC-dimension is not bounded.}The Caratheodory theorem~\cite{caratheodory1907variabilitatsbereich,cook1972caratheodory} suggests a deterministic algorithm that always returns an (exact) $0$-sample $(S,u)$ of size $|S|\leq |X|+1$, i.e., such that $f_w(P,X)=f_u(S,X)$; see~\cite{jubran2019introduction}.
\hll{This generalizes the deterministic example above for $|X|=1$.}
A general framework to compute deterministic $\eps$-samples in time linear in $n$ but exponential in the VC-dimension was suggested by~\cite{matousek1995approximations}. It is an open problem to compute an $\eps$-sample deterministically in time that is polynomial in the VC-dimension (as in the case of PAC-learning) even for the family of hyperplanes. See recent solutions for special cases in~\cite{batson2012twice}.

\paragraph{Sup-sampling. } As explained above, Hoeffding inequality implies an approximation of $f_w(P,x)$ by $f_u(S,x)$ where $u(p)=1/|S|$ and $S$ is a random sample according to $w$ whose size depends on $M(f)=\max_{p\in P} |f(p,x)|$. To reduce the sample size we may thus define $g(p,x)=\frac{f(p,x)}{|f(p,x)|}\in \br{-1,1}$, and $s(p)=\frac{w(p)|f(p,x)|}{\sum_{q\in P}w(q)|f(p,x)|}$. Now, $M(g)=\max_{p\in P}|g(p,x)|=1$. Now, by Hoeffding inequality, the error of approximating $g_s(P,x)$ via non-uniform random sample of size $1/\eps^2$, drawn from $s$, is $\eps$. Define $T = \sum_{q\in P}w(q)|f(q,x)|$. Since $f_w(P,x)= T \cdot g_s(P,x)$, approximating $g_s(P,x)$ up to $\eps$ error yields an error of $\eps T$ for $f_w(p,x)$. Therefore, the size is reduced from $M(f)/\eps^2$ to $T^2/\eps^2$ when $T^2 \leq M(f)$. Here, we sample $|S|= O(T^2/\eps^2)$ points from $P$ according to the distribution $s$, and re-weight the sampled points by $u'(p)=\frac{T}{|S||f(p,x)|}$.

Unlike traditional PAC-learning, the sample now is non-uniform, and is proportional to $s(p)$, rather than $w$, as implied by Hoeffding inequality for non-uniform distributions.
For sets of queries we generalize the definition for every $p\in P$ to $s(p)=\sup_{x\in X}\frac{w(p)|f(p,x)|}{\sum_{q\in P}w(q)|f(p,x)|}$ \hl{??shouldnt be absolute value since f can be negative ??} as in~\cite{braverman2016new}, which is especially useful for coresets below.




\begin{figure}[t]
\centering
  \includegraphics[scale=0.7]{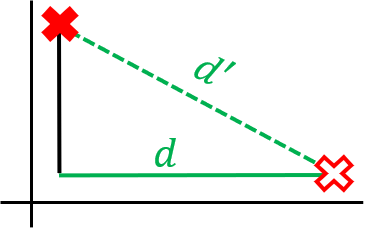}
  \caption{Two points $\left(f_w(p,x_1),f_w(p,x_2)\right)$ (solid red X) and $\left(f_u(p,x_1),f_w(p,x_2)\right)$ (empty red X), where $p$ is an input point, $X = \br{x_1,x_2}$ is the set of possible queries, and $w,u$ are the input and coreset weights respectively. The original sensitivity based coreset insures that $d \leq \varepsilon$, while the new framework in this paper insures that $d'/|X| = d'/2 \leq \varepsilon$. Since $d'/2 \leq d$, our constraint is weaker and thus allows the construction of smaller coresets.}
  \label{fig:averageQueryErr}
\end{figure}

\textbf{Coreset. }Coreset for a given query space $(P,w,X,g)$, in this and many other papers, is a pair $(C,u)$ that is similar to $\eps$-sample in the sense that $C\subseteq P$ and $u:C\to [0,\infty)$ is a weights function. However, the additive error $\eps$ is now replaced with a multiplicative error $1\pm\eps$, i.e., for every $x\in X$, $$|g_w(P,X)-g_u(C,X)|\leq \eps g_w(P,X).$$
Dividing by $g_w(P,X)$ and assuming $g_w(P,X)>0$, yields
\begin{equation}\label{core}
\forall x\in X:  \abs{1-\frac{g_u(C,X)}{g_w(P,x)}}\leq \eps.
\end{equation}

Coresets are especially useful for learning big data since an off-line and possibly inefficient coreset construction for ``small data" implies constructions that maintains coreset for streaming, dynamic (including deletions) and distributed data in parallel. This is via a simple and easy to implement framework that is sometimes called merge-reduce trees; see~\cite{bentley1980decomposable, har2004coresets}.
The fact that a coreset approximates every query (and not just the optimal one for some criterion) implies that we may solve hard optimization problems with non-trivial and non-convex constraints by running a possibly inefficient algorithm such as exhaustive search on the coreset, or running existing heuristics numerous times on the small coreset instead of once on the original data. Similarly, parameter tuning or cross validation can be applied on a coreset that is computed once for the original data as explained in~\cite{maalouf2019fast}.

An $\eps$-coreset for a query space $(P,w,X,g)$ is simply an $\eps$-sample for the query space $(P,w,X,f)$, after defining $f(p,x):=w(p)g(p,x)/g_w(P,x)$, as explained e.g. in~\cite{braverman2016new}. By defining the error for a single $x$ by $err'(x)=\abs{1-g_u(C,x)/g_w(P,x)}=\abs{f_w(P,x)-f_u(C,x)}=\err_f(x)$ we obtain an error vector for the coreset $\err'(X)=(\err'(x))_{x\in X}$. We can then rewrite~\eqref{core} as in~\eqref{err}: $$\norm{\err'(X)}_{\infty}\leq \eps$$.

In the case of coresets, the sup $\sup_{x\in X}f(p,x)=w(p)\sup_{x\in X}g(p,x)/g_w(P,x)$ of a point $p\in P$ is called sensitivity~\cite{langberg2010universal}, leverage score (in $\ell_2$ approximations)~\cite{drineas2012fast}, Lewis weights (in $\ell_p$ approximations) or simply importance sampling~\cite{cohen2015lp}. \hll{Many classic supervised and unsupervised problems in machine learning including PCA, $k$-means and their variants have a corresponding small total sensitivity $\sum_{p\in P}s(p)$, i.e.,  independent of $n=|P|$ or depends only logarithmic on $n$.}



\section{Problem Statement: Smoothed Analysis for Data Summarization. } \label{sec:problemState}
Smoothed Analysis~\cite{spielman2009smoothed} was suggested about a decade ago as an alternative to the (sometimes infamous) worst-case analysis of algorithms in theoretical computer science. The idea is to replace the analysis for the worst-case input by the average input (in some sense). Inspired by this idea, a natural variant of~\eqref{err} and its above implications is an $\eps$-sample that approximates well the \emph{average query}. We suggest to define an $(\eps,\norm{\cdot})$-sample as
\begin{equation}\label{above}
\norm{\err(X)}=\norm{f_w(P,X)-f_u(S,X)} \leq \eps,
\end{equation}
which generalizes~\eqref{err} from $\norm{\cdot}_{\infty}$ to any norm, such as the $\ell_z$ norm $\norm{err(X)}_z$.
For example, for the $\ell_2$, MSE or Frobenius norm, as shown in Fig.~\ref{fig:averageQueryErr}, we obtain
\begin{align}\label{norm2deff}
    \sqrt{\sum_{x\in X}\big(f_w(P,x)-f_u(S,x)\big)^2}\leq \eps.
\end{align}

A generalization of the Hoeffding Inequality from 1963 with tight bounds was suggested relatively recently for the $\ell_z$ norm for any $z\geq 2$ and many other norms~\cite{juditsky2008large,tropp2015introduction}. Here we assume a single query ($|X|=1$), a distribution weights function, and a bound on $\sup_{p\in P}\abs{f(p,X)}$ that determines the size of the sample, as in Hoeffding inequality. \hl{Using the union bound, we can thus obtain $\eps$-samples for finite query sets $|X|$ whose size depend logarithmical on $X$.}  \hl{We do not know of any generalization to the case of infinite query set $X$????--- what about a sample of size $1/(\eps^2\delta) $?????, even with bounded VC-dimension, but we expect that similar usage of the VC-dimension can be applied to the existing proofs in {\cite{alon2004probabilistic}} {\cite{li2001improved}} }

A less obvious question, which is the subject of this paper, is how to compute deterministic $\eps$-samples that satisfies~\eqref{above}, for norms other than the infinity norm. While Caratheodory theorem suggests deterministic constructions of $0$-samples (for any error norm) as explained above, our goal is to obtain coreset whose size is smaller or independent of $|X|$.

The next question is how to generalize the idea of sup-sampling, i.e., where the function $f$ is unbounded, for the case of norms other than $\norm{\cdot}_{\infty}$. Our main motivation for doing so is to obtain new and smaller coresets by combining the notion of $\eps$-sample and sup-sampling or sensitivity as explained above for the $\norm{\cdot}_{\infty}$ case. That is, we wish a coreset for a given query space, that would bound the non-$\ell_\infty$ norm error
$$\norm{\left(1-\frac{g_u(C,x)}{g_w(P,x)}\right)_{x\in X}}=
\norm{\err'(X)}\leq \eps.
$$

To summarize, our questions are:
\textbf{How can we smooth the error function and approximate the ``average'' query via: (i) Deterministic $\eps$-samples (for DAC-learning )? (ii) Coresets (via sensitivities/sup sampling for non-infinity norms)?}

\section{Our contribution}
\label{sec:our_contrib}
We answer affirmably these questions by suggesting $\eps$-samples and coresets for the average query. We focus on the case $z=2$, i.e., the Frobenius norm, and finite query set $X$ and hope that this would inspire the research, applications of other norms and general sets. For suggestions in this direction and future work see Section~\ref{sec:FW}.
The main results of this paper are the following constructions of an $(\eps,\norm{\cdot}_2)$-sample $(S,u)$ for any given finite query space $(P,w,X,f)$ as defined in~\eqref{norm2deff}:
\begin{enumerate}[label=(\roman*)]
    \item Deterministic construction that returns a corest of size $|S|\in O(1/\eps^2)$ in time $O\left(\min\br{nd/\eps^2,nd+d\log(n)^2/\eps^4}\right)$; see Theorem~\ref{weak-coreset-theorem} and Corollary~\ref{weak-coreset-cor}.

    \item Randomized construction that returns such a coreset (of size $|S|\in O(1/\eps^2)$) with probability at least $1-\delta$ in sub-linear time $O\left(d\left (\log{(\frac{1}{\delta})}^2 + \frac{\log{(\frac{1}{\delta})}}{\eps^2} \right) \right)$; see Lemma~\ref{prob:onemean}.
\end{enumerate}



\paragraph{Applications. }
\begin{enumerate}[label=(\roman*)]

\item  Vector summarization: maintain the sum of a (possibly infinite) stream of vectors in $\REAL^d$, up to an additive error of $\eps$ multiplied by their variance. This is a generalization of frequent items/directions~\cite{charikar2002finding}.
We propose a \emph{deterministic} algorithm which reduces each subset of $n$ vectors into $O(1/\eps)$ weighted vectors in $O(nd+d\log(n)^2/\eps^2)$ time, improving upon the $nd/\eps$ of~\cite{feldman2017coresets}, for a sufficiently large $n$; see Corollary~\ref{weak-coreset-cor}, and Fig.~\ref{fig:TwitterSizeTime}. We also provide a non-deterministic coreset construction in Lemma~\ref{prob:onemean}. The merge-and-reduce tree can then be used to support streaming, distributed or dynamic data. 

\item By replacing $\eps$ with $\eps^2$ for the vector summarization, we obtain fast construction of an $\eps$-coreset for Kernel Density Estimates (KDE) of Euclidean kernels~\cite{phillips2019near}; see Section~\ref{sec:svdAndPcaExt}.

\item Coreset for $1$-mean which approximates the sum of squared distances over a set of $n$ points to \emph{any} given center (point) in $\REAL^d$. The deterministic construction computes such a weighted \emph{subset} of size $O(1/\eps^2)$ in time $O(\min\br{nd/\eps^2,nd+d\log(n)^2/\eps^4})$. Previous results of~\cite{bachem2018scalable,barger2016k,DBLP:journals/corr/BravermanFL16,feldman2018turning} suggested coresets for such problem. However, those works are either non-deterministic, do not return a subset of the input, or return a set of size linear in $d$. 

\item Coreset for LMS solvers and dimensionality reduction. For example, a deterministic construction that gets a matrix $A\in\REAL^{n\times d}$ and returns a weighted subset of $k^2/\eps^2$ rows, such that their weighted distance to any $k$-dimensional affine or non-affine subspace approximates the distance of the original points to this subspace. In this paper we propose a \emph{deterministic} coreset construction that takes $O(nd^2 + d^2k^4\log(n)^2/\eps^4)$ time, improving upon the state of the art $O(nd^2k^2/\eps^2)$ time of~\cite{feldman2016dimensionality}; see Table~\ref{table:relatedWork}.
Many non-deterministic coresets constructions were suggested for those problems, the construction techniques apply non-uniform sampling~\cite{cohen2015dimensionality,varadarajan2012sensitivity,feldman2015more}, Monte-Carlo sampling~\cite{frieze2004fast}, and leverage score sampling~\cite{yang2017weighted,drineas2012fast,cohen2015uniform,papailiopoulos2014provable,drineas2008relative,cohen2017input,maalouf2019tight}. However, those works are non-deterministic.

\end{enumerate}

\newcommand{\poly}{poly}
\begin{table}[t]
\centering
\caption{\textbf{Known deterministic subset coresets for LMS solvers.} Our result has the fastest running time for sufficiently large $n$ and $d$.}
\begin{tabular}{|c|c|c|c|c|}
\hline
 Error & Size & Time & Citation & Notes\\
\hline
 $\eps$ & $O(k^2/\eps^2)$ & $O(nd^2/\eps^2)$  & \cite{feldman2016dimensionality} & N/A \\
 $\eps$ & $O(d/\eps^2)$  & $\poly(n,d,\eps)$ &   \cite{batson2012twice} & inefficient for large $n$ \\
  $0$   & $O(d^2)$       & $O(nd^2 + \log(n)\poly(d))$  & \cite{maalouf2019fast} & inefficient for large $d$\\
  $\eps$ & $O(k/\eps^2)$  &  $\poly(n,d,k,\eps)$ & \cite{cohen2015optimal} & inefficient for large $n$\\
  $\eps$ &  $O(k^2/\eps^2)$ & $O(nd^2 + \log(n)^2 d^2k^4/\eps^4)$ & $\star$ & N/A\\
\hline
\end{tabular}
\label{table:relatedWork}
\end{table}

\section{Preliminaries}
\paragraph{Notations.} We denote by $[n] = \br{1,\cdots, n}$. For a vector $v\in\REAL^d$, the $0$-norm is denoted by $\norm{v}_0$ and is equal to the number of non-zero entries in $v$. We denote by $e_{(i)}$ the $i$th standard basis vector in $\REAL^n$ and by $\mathbf{0}$ the vector $(0,\cdots,0)^T \in \REAL^n$.
A vector $w\in [0,1]^n$ is called a distribution vector if all its entries are non-negative and sums up to one.
For a matrix $A \in \REAL^{m\times n}$ and $i\in [m], j\in [n]$ we denote by $A_{i,j}$ the $j$th entry of the $i$th row of $A$.
A \textit{weighted set} is a pair $(Q,m)$ where $Q=\br{q_1,\cdots,q_n} \subseteq \REAL^d$ is a set of $n$ \emph{points}, and $m=(m_1,\cdots,m_n)^T \in \REAL^n$ is a \emph{weights vector} that assigns every $q_i\in Q$ a weight $m_i \in \REAL$. A matrix $X \in \REAL^{d'\times d}$ is \emph{orthogonal} if $X^TX = I \in \REAL^{d\times d}$.

\paragraph{Adaptations. }To adapt to the notation of the following sections and the query space $(P,w,X,f)$ to the techniques that we use, we restate~\eqref{above} as follows. Previously, we denote the queries $X=\br{x_1,\cdots,x_d}$, and the input set by $P=\br{p_1,\cdots,p_n}$. Each input point $p_i\in P$ now corresponds to a point $q_i=\big(f(p_i,x_1),\cdots,f(p_i,x_d)\big)\in \REAL^{d}$, i.e., each entry of $q_i$ equals to $f(p_i,x)$ for a different query $x$. Throughout the rest of the paper, for technical reason and simplicity, we might alternate between the weights function notation and a weights vector notation. In such cases,  $w:P\to[0,\infty)$ is replaced by a vector $m\in[0,\infty)^n$ of weights, where $m_i$ is the weight of $q_i$ for $i\in[n]$, and vice versa. In such cases, the $\eps$-sample is represented by a sparse vector $u\in [0,\infty)$ where $S=\br{p_i\in P\mid u_i>0, i\in[n]}$ is the chosen subset of $P$. Hence, $f_w(P,X)=\sum_{p\in P}w(p)\big(f(p,x_1),\cdots,f(p,x_d)\big)=\sum_{i=1}^n m_i q_i$,
and $f_u(S,X)=\sum_{i=1}^n u_i q_i$.



\hl{Similarly, $u$ is a coreset if $\norm{(m_i-u_i)q_i} \leq \eps \sqrt{\sum_i m_i \norm{q_i}^2}$, and the sensitivity of the $i$th input point is $s_i=m_i\norm{q_i}_2$.
Note that the definition can be generalized to any $\ell_z$ norm, where $z=\infty$ yields the existing $\eps$-samples, sensitivities and coresets.}

\paragraph{From $(\eps,\norm{\cdot}_2)$-samples to $\eps$-coresets.} In what follows is the definition of $\eps$-coreset for vector summarization, which is a re-weighting of the input weighted set $(Q,m)$ by a new weights vector $u$, such that the squared norm between the weighted means of $(Q,u)$ and $(Q,m)$ is small. This relates to Section~\ref{sec:problemState}, where an $(\sqrt{\eps},\norm{\cdot}_2)$-sample there is an
$\eps$-coreset for vector summarization here.
\begin{definition} [vector summarization $\eps$-coreset] \label{def:weakCoreset}
Let $(Q,m)$ and $(Q,u)$ be two weighted sets of $n$ points in $\REAL^d$, and $\eps \in [0,1)$. Let $\mu=\sum_{i=1}^{n} \frac{m_i}{\norm{m}_1}  q_i$, $\sigma^2 = \smi \frac{m_i}{ \norm{m}_1}\norm{q_i- \mu}^2$, and $\tilde{\mu} = \sum_{i=1}^{n} \frac{u_i}{ \norm{u}_1} q_i$.
Then $(Q,u)$ is a \emph{vector summarization $\eps$-coreset} for $(Q,m)$ if $ \norm{\tilde{\mu} - \mu}_2^2 \leq \eps\sigma^2$.
\end{definition}

\section{Vector Summarization Coreset Construction}\label{sec:detCoreset}

In what follows we assume that the points of $P$ lie inside the unit ball ($\forall_{p\in P}: \norm{p}\leq 1$). For such an input set, we present a construction of a variant of a vector summarization coreset, where the error is $\eps$ and does not depend on the variance of the input. This construction is based on the Frank-Wolfe Algorithm~\cite{clarkson2010coresets}; see Theorem~\ref{thm1} and Algorithm~\ref{alg:frankwolf}.
We then present a proper coreset construction in Algorithm~\ref{alg:strong} and Theorem~\ref{weak-coreset-theorem} for a general input set $Q$ in $\REAL^d$. This algorithm is based on a reduction to the simpler case of points inside the unit ball; see Fig.~\ref{allthepaper} for illustration. 

\newcommand{\frankwolf}{\textsc{Frank-Wolf}}

\setcounter{AlgoLine}{0}
\begin{algorithm}[t!]
\setcounter{AlgoLine}{0}
\SetAlgoLined
\caption{$\frankwolf(f,K)$; Algorithm~1.1 of~\cite{clarkson2010coresets}\label{alg:frankwolf}}
\SetKwInOut{Input}{Input}
\SetKwInOut{Output}{Output}
\Input{A concave function $f:\REAL^n \to \REAL$, and the number of iterations $K$.}
\Output{A vector $x \in \REAL^n$ that satisfies Theorem~\ref{thm1} }

Pick as $x_{(0)}$ the vertex of $S$ with largest $f$ value.

\For{$k \in \br{0, \cdots, K}$}{
    $i' := \argmax_{i}\nabla f(x_{(k)})_i $

    $\displaystyle\alpha' := \argmax_{\alpha_{[0,1]}} f\left(x_{(k)} + \alpha (e_{(i')} -x_{(k)})\right)$

    $x_{(k+1)} := x_{(k)} + \alpha'(e_{(i')} -x_{(k)} )$

    \Return $x_{(k+1)}$
}
\end{algorithm}

\newcommand{\strongepscoreset}{\textsc{CoreSet}}
\newcommand{\weak}{\textsc{Weak-CoreSet}}

\newcommand{\boost}{\textsc{Fast-FW-CoreSet}}
\setcounter{AlgoLine}{0}
\begin{algorithm}[t!]
\caption{$\strongepscoreset(Q,m,\eps)$\label{alg:strong}}
\SetKwInOut{Input}{Input}
\SetKwInOut{Output}{Output}
\Input{A weigthed set $(Q,m)$ of $n\geq 2$ points in $\REAL^d$ and an error parameter $\eps\in (0,1)$.}
\Output{A weight vector $u\in[0,\infty)^n$ with $O(1/\eps)$ non-zero entries that satisfies Theorem~\ref{weak-coreset-theorem}.}
    $w=\frac{m}{\norm{m}_1}$ \label{wcomp}

    $\mu_w:= \smi w_iq_i$ \label{mucomp}

    $\sigma_w=\sqrt{\smi w_i\norm{q_i - \mu}^2 }$\label{varcomp}

	\For{every $i \in \{1,\cdots,n\}$ \label{LineFor}} {
	    $p_i=\frac{q_i - \mu}{\sigma}$
	
		$\displaystyle{p'_i  := \frac{(p^T_i\mid 1)^T}{\norm{(p^T_i\mid 1)}^2}}$ \label{p'} \quad\tcp{$\norm{p_i'} \leq 1$.}
		
		$\displaystyle{w'_i :=\frac{ w_i\norm{(p^T_i\mid1)}^2}{2}}$ \label{w'}
	}	
	Compute a sparse vector $u^\prime$ with $O(1/\eps^2)$ non-zero entries, such that $\norm{\smi (w'_i - u_i')p'_i}^2\leq \eps$\label{define u'} \\ \tcp{E.g., using Algorithm~\ref{alg:frankwolf} (see Theorem~\ref{thm1}).}
	
	\For{every $i \in \{1,\cdots,n\}$ \label{LineFor2}} {
		$\displaystyle{u_i =\norm{m}_1 \cdot \frac{2 u_i'}{\norm{(p^T_i\mid 1)}^2}}$ \label{u}
	}
	\Return ${u}$
\end{algorithm}

\begin{theorem}[\textbf{Coreset for points inside the unit ball\label{thm1}}]
Let $P=\{p_1,\cdots,p_n\}$ be a set of n points in $\REAL^d$ such that $\norm{p_i}\leq 1$ for every $i\in [n]$. Let $\eps\in(0,1)$ and $w=(w_1,\cdots,w_n)^T\in[0,1]^n$ be a distribution vector. For every $x=(x_1,\cdots,x_n)^T\in \REAL^n$, define ${f(x) =-\norm{\smi (w_i-x_i)p_i}^2}$. Let  $\tilde{u}$ be the output of a call to $\frankwolf(f,\ceil{\frac{8}{\eps}})$; see Algorithm~\ref{alg:frankwolf}. Then
\begin{enumerate*}[label=(\roman*)]
    \item $\tilde{u}$ is a distribution vector with $\norm{\tilde{u}}_0\leq \ceil{\frac{8}{\eps}} $, \label{prop1neded}
    \item $\norm{\smi(w_i-\tilde{u_i})p_i}^2\leq \eps$,\label{prop2neded} and
    \item $\tilde{u_i}$ is computed in $O\left(\frac{nd}{\eps}\right)$ time.\label{prop4neded}
\end{enumerate*}
\end{theorem}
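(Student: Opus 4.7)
The plan is to invoke the Frank-Wolfe convergence guarantee of Clarkson (Theorem 2.3 in~\cite{clarkson2010coresets}) applied to this particular concave quadratic, and then read off the three properties as direct consequences of the geometry of the update rule.

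First, I would set up the concavity and the domain. The function $f(x)=-\norm{\sum_{i=1}^n (w_i-x_i)p_i}^2$ is a concave quadratic in $x$, because $f(x)=-(w-x)^T M (w-x)$ where $M$ is the Gram matrix of the $p_i$ (hence PSD), and Algorithm~\ref{alg:frankwolf} is run over the probability simplex $\Delta^{n-1}=\br{x\in[0,1]^n \mid \sum_i x_i=1}$ whose vertices are the standard basis vectors $e_{(i)}$. Note that $w\in\Delta^{n-1}$ itself satisfies $f(w)=0$, so $\max_{x\in\Delta^{n-1}}f(x)=0$; hence for any output $\tilde u\in\Delta^{n-1}$ the optimality gap coincides with the quantity we want to bound: $f(w)-f(\tilde u)=\norm{\sum_i(w_i-\tilde u_i)p_i}^2$.

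Second, I would plug this into Clarkson's convergence theorem. For a concave $f$ on the simplex, Frank-Wolfe after $K$ steps produces $x_{(K)}$ with $f(w)-f(x_{(K)})\le \frac{c\,C_f}{K}$ for a universal constant $c$ and the curvature constant $C_f$. For our quadratic the curvature is bounded by $2\max_{i,j}\norm{p_i-p_j}^2\le 8$ because $\norm{p_i}\le 1$ for all $i$. Choosing $K=\ceil{8/\eps}$ and tracking the Clarkson constants then yields property~\ref{prop2neded}, namely $\norm{\sum_i(w_i-\tilde u_i)p_i}^2\le\eps$.

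Third, properties~\ref{prop1neded} and~\ref{prop4neded} fall out of the Frank-Wolfe update itself. The initial iterate $x_{(0)}$ is a vertex $e_{(i_0)}\in\Delta^{n-1}$, and each update $x_{(k+1)}=x_{(k)}+\alpha'(e_{(i')}-x_{(k)})$ with $\alpha'\in[0,1]$ is a convex combination of $x_{(k)}$ with a vertex, so $x_{(k+1)}$ stays in $\Delta^{n-1}$ and its support grows by at most one coordinate per iteration; after $\ceil{8/\eps}$ iterations this gives $\tilde u\in\Delta^{n-1}$ with $\norm{\tilde u}_0\le \ceil{8/\eps}$. For the running time, I would maintain the residual vector $y_{(k)}=\sum_i(w_i-x_{(k),i})p_i\in\REAL^d$ incrementally: the update $x_{(k+1)}$ changes $y$ by $-\alpha'(p_{i'}-y_{(k)})$, an $O(d)$ operation. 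Then $\nabla f(x_{(k)})_i=2\langle y_{(k)},p_i\rangle$, so forming the full gradient and its argmax costs $O(nd)$, and the line-search $\alpha'=\argmax_{\alpha\in[0,1]}f(x_{(k)}+\alpha(e_{(i')}-x_{(k)}))$ for a one-dimensional concave quadratic has a closed-form solution computable in $O(d)$. Hence each iteration costs $O(nd)$, and $K=O(1/\eps)$ iterations yield the stated $O(nd/\eps)$ total runtime.

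The main obstacle I expect is bookkeeping the exact constants so that the $K=\ceil{8/\eps}$ choice (as opposed to, say, $\ceil{32/\eps}$) really delivers error $\eps$; this reduces to using Clarkson's sharpest form of the curvature bound together with the fact that the maximum of $f$ is exactly $0$ and is attained at a feasible point, so no extra duality-gap slack enters the analysis. Everything else is mechanical once Algorithm~\ref{alg:frankwolf} is recognized as Frank-Wolfe on the probability simplex.
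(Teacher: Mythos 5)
Your proposal follows essentially the same route as the paper's own proof: recognize Algorithm~\ref{alg:frankwolf} as Frank--Wolfe on the probability simplex for this concave quadratic, invoke Clarkson's convergence guarantee (the paper uses his Theorem~2.2, $\frac{f(x^*)-f(x_{(k)})}{4C_f}\leq\frac{1}{k+3}$) with the curvature constant bounded via the squared diameter of the convex hull of the $p_i$, and derive the sparsity and per-iteration cost directly from the update rule, exactly as you do. The one thing worth flagging is the constant bookkeeping you yourself identify as the main obstacle: the paper's intermediate claim $\sup_{i,j}\norm{p_i-p_j}_2^2\leq 2$ is too small for general points in the unit ball (the correct bound is $4$), while your bound $C_f\leq 2\max_{i,j}\norm{p_i-p_j}^2\leq 8$ carries an extra factor of two not present in Clarkson's quadratic bound $C_f\leq\mathrm{diam}(AS)^2$; neither exactly justifies the stated $\ceil{8/\eps}$ iteration count, but both give the same $O(1/\eps)$ conclusion, which is what actually matters downstream.
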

\ifproofs
\begin{proof}
See full proof in Section~\ref{sup:frankwolf} of the supplementary material.
\end{proof}
\fi

We now show how to obtain a vector summarization $\eps$-coreset of size $O(1/\eps)$ in $O(\frac{nd}{\eps})$ time.
\begin{theorem}\label{weak-coreset-theorem} Let $(Q,m)$ be a weighted set of $n$ points in $\REAL^d$, $\eps\in (0,1)$, and let $u$ be the output of a call to $\strongepscoreset(Q,m,\frac{\eps}{16})$;  see Algorithm~\ref{alg:strong}.
Then, $u\in \REAL^n$ is a vector with $\norm{u}_0\leq \frac{128}{\eps}$ non-zero entries that is computed in $O(\frac{nd}{\eps})$ time, and $(Q,u)$ is a vector summarization $\eps$-coreset for $(Q,m)$.
\end{theorem}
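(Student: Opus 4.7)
The plan is to reduce the general weighted-set problem to Theorem~\ref{thm1} via the lifting performed in lines~\ref{p'}--\ref{w'} of Algorithm~\ref{alg:strong}. First I would verify the setup: since $p_i = (q_i-\mu_w)/\sigma_w$, we have $\sum_i w_i p_i = 0$ and $\sum_i w_i \norm{p_i}^2 = 1$, so $\sum_i w'_i = \tfrac{1}{2}\sum_i w_i(\norm{p_i}^2+1) = 1$ and $w'$ is a distribution on the lifted points $p'_i\in\REAL^{d+1}$. Likewise $\norm{p'_i} = 1/\norm{(p_i^T\mid 1)} \le 1$. Thus Theorem~\ref{thm1} applies to $(p'_i, w')$ with parameter $\eps_{FW} = \eps/16$ and yields a distribution $u'$ with at most $\lceil 128/\eps\rceil$ non-zero entries in $O(nd/\eps)$ time satisfying $\norm{\sum_i (w'_i-u'_i)p'_i}^2 \le \eps/16$; the output $u$ inherits the same support, giving the claimed sparsity and running time.

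Next I would decode this lifted inequality back into the original coordinates. Letting $\hat u_i := u_i/\norm{m}_1 = 2u'_i/\norm{(p_i^T\mid 1)}^2$ and $\alpha := \sum_i \hat u_i = \norm{u}_1/\norm{m}_1$, the clean identities $w'_i p'_i = \tfrac{w_i}{2}(p_i^T\mid 1)^T$ and $u'_i p'_i = \tfrac{\hat u_i}{2}(p_i^T\mid 1)^T$ give
$$\sum_i w'_i p'_i = \tfrac{1}{2}\left(\mathbf{0}^T\mid 1\right)^T, \qquad \sum_i u'_i p'_i = \tfrac{1}{2}\left(\textstyle\sum_i \hat u_i p_i^T\mid \alpha\right)^T.$$
Taking squared norms, the Frank-Wolfe bound $\eps/16$ splits into two separate component bounds: $\norm{\sum_i \hat u_i p_i}^2 \le \eps/4$ (from the first $d$ coordinates) and $(1-\alpha)^2 \le \eps/4$ (from the last coordinate).

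Finally, using $q_i = \sigma_w p_i + \mu_w$ and $\tilde\mu = \sum_i \hat u_i q_i / \alpha$, direct algebra gives $\tilde\mu - \mu_w = (\sigma_w/\alpha)\sum_i \hat u_i p_i$. For $\eps<1$ the bound $(1-\alpha)^2 \le \eps/4$ forces $\alpha > 1/2$, hence $1/\alpha^2 < 4$, which combined with $\norm{\sum_i \hat u_i p_i}^2 \le \eps/4$ yields $\norm{\tilde\mu-\mu_w}^2 < \eps\sigma_w^2$, exactly matching Definition~\ref{def:weakCoreset}. The main obstacle is conceptual rather than computational: one must recognize that appending the extra $+1$ coordinate before invoking Frank-Wolfe lets a single $\ell_2$ guarantee simultaneously control both the mean displacement (the first $d$ coordinates) and the drift of $\norm{u}_1$ from $\norm{m}_1$ (the last coordinate). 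The one technical subtlety is ensuring $\alpha > 1/2$ so the $1/\alpha^2$ factor from renormalizing $\tilde\mu$ does not destroy the bound, which is precisely why Algorithm~\ref{alg:strong} invokes its subroutine at scale $\eps/16$ rather than $\eps$.
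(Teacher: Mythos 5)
Your proposal is correct and follows essentially the same route as the paper's proof: lift to the unit ball via the extra coordinate, invoke Theorem~\ref{thm1} at scale $\eps/16$, and then extract both the mean-displacement bound and the weight-drift bound $|1-\alpha|$ from the single Frank--Wolfe guarantee before un-normalizing. Your presentation is marginally cleaner — you observe that the lifted $\ell_2$ bound literally decomposes as $\frac{1}{4}\big(\|\sum_i\hat u_i p_i\|^2 + (1-\alpha)^2\big)\le\eps/16$ and compute $\tilde\mu-\mu_w$ directly, whereas the paper routes the same facts through the auxiliary inequalities $\|(x\mid y)\|^2\ge\|x\|^2$, $\|(x\mid y)\|^2\ge y^2$ and Lemma~\ref{weak_coreset_lema}/Corollary~\ref{cor:reduction} — but the mathematical content is identical.
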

\ifproofs
\begin{proof}
See full proof in section~\ref{sup:weakcore} of the supplementary material.
\end{proof}
\fi

\section{Boosting the running time}\label{boosterboost}
In this section, we present Algorithm~\ref{algboost}, which aims to boost the running time of Algorithm~\ref{alg:frankwolf} from the previous section. This immediately implies a faster construction time of vector sumarization $\varepsilon$-coresets for general input sets; see Corollary~\ref{weak-coreset-cor} and Fig.~\ref{allthepaper} for illustration.
Then, in Lemma~\ref{prob:onemean}, we show how to compute a vector summarization coreset with high probability in a time that is sublinear in the input size $n$.



\setcounter{AlgoLine}{0}
\begin{algorithm}[H]
\SetAlgoLined
\caption{$\boost(P,w,\eps)$\label{algboost}}
\SetKwInOut{Input}{Input}
\SetKwInOut{Output}{Output}
\Input{A weighed set $(P,w)$ of $n\geq 2$ points in $\REAL^d$ and an error parameter $\eps\in (0,1)$.}
\Output{A pair $(C,u)$ that satisfies Theorem~\ref{theorem:fastcoreset}}

$k:= \frac{2\log(n)}{\eps}$

\If{ $|P|\leq k$ \label{stoprule}}{ \label{ifififif}
\Return an vector summarization $\eps$-corset for $(P,w)$ using Theorem~\ref{thm1}. \label{returnInput} }

$\br{P_1,\cdots,P_k}:=$ a partition of $P$ into $k$ disjoint subsets, each contains at most $\ceil{n/k}$ points. \label{compPartition}

\For{every $i\in\br{1,\cdots,k}$}
{
$\displaystyle \mu_i:=\frac{1}{\sum_{q\in P_i}w(q)} \cdot \sum_{p\in P_i}w(p)\cdot p$ \\\tcp{The weighted mean of $P_i$} \label{compMui}
$w'(\mu_i) := \sum_{p\in P_i}w(p)$ \label{compMuiWeight}
}

$(\tilde{\mu},\tilde{u}) :=$ a vector summarization $\left(\frac{\eps}{\log(n)}\right)$-corset for the weighted set  $(\br{\mu_1,\cdots,\mu_k},w')$ using Theorem~\ref{thm1}. \label{compSlowCara} \\

$\displaystyle C:=\bigcup_{\mu_i\in \tilde{\mu}} P_i$ \label{compC} \tcp{$C$ is the union over all subsets $P_i$ whose mean $\mu_i$ was chosen in $\tilde{\mu}$.}

\For {every $\mu_i \in \tilde{\mu}$ and $p\in P_i$\label{forRemainingClusters}}
{
$\displaystyle u(p):=\frac{\tilde{u}(\mu_i)w(p)}{\sum_{q\in P_i}w(q)}$ \label{compW}\\
}
$(C,u):=\boost(C,u, \eps)$

\Return $(C,u)$ \label{recursiveCall}

\end{algorithm}


\begin{theorem}[Faster coreset for points inside the unit ball] \label{theorem:fastcoreset}
Let $P$ be a set of n points in $\REAL^d$ such that $\norm{p}\leq 1$ for every $p\in P$. Let $w:P \to (0,1)$ be a weights function such that $\sum_{p\in P} w(p)=1$ and let $\eps\in(0,1)$. Let $(C,{u})$ be the output of a call to $\boost(P,w,\eps)$; see Algorithm~\ref{algboost}. Then
\begin{enumerate*}[label=(\roman*)]
    \item $\abs{C}\leq \co/\eps$ and $\sum_{p\in C} {u}(p)=1$, \label{smalsubsetcp}
    \item $\norm{\sum_{p\in P}w(p) p -\sum_{p\in C}{u}(p)p}^2\leq 2\eps$, and \label{epsweakcor}
    \item $(C,{u})$ is computed in $O\left(nd + \frac{d\cdot \log{(n)}^2}{\eps^2} \right)$ time.\label{runtimeofcorest}
\end{enumerate*}

\end{theorem}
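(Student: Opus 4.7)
The plan is to prove the theorem by induction on the depth of recursion in Algorithm~\ref{algboost}, simultaneously establishing all three claims.

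In the base case $|P|\leq k=2\log(n)/\eps$, Line~\ref{returnInput} returns a coreset via Theorem~\ref{thm1}, which directly yields $|C|\leq 8/\eps$ and $\sum_{p\in C}u(p)=1$ from property~\ref{prop1neded}, the squared error bound $\eps\leq 2\eps$ from property~\ref{prop2neded}, and the runtime $O(|P|d/\eps)=O(d\log(n)/\eps^2)$ from property~\ref{prop4neded}.

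For the recursive step, I would first verify by direct calculation three identities for one non-recursive pass through Lines~\ref{compPartition}-\ref{compW}: the grouping identity $\sum_i w'(\mu_i)\mu_i = \sum_{p\in P} w(p)p$ (immediate from Lines~\ref{compMui}-\ref{compMuiWeight}); the reweighting identity $\sum_{p\in C} u(p)p = \sum_{\mu_i\in\tilde\mu}\tilde u(\mu_i)\mu_i$ (by substituting Line~\ref{compW}); and $\sum_{p\in C} u(p)=\sum_i \tilde u(\mu_i)=1$, using that $\tilde u$ is a distribution by property~\ref{prop1neded}. Combined with the inner coreset's error guarantee (property~\ref{prop2neded} applied to the call in Line~\ref{compSlowCara} with parameter $\eps/\log n$), one pre-recursive pass produces $(C,u)$ with $\norm{\sum_{p\in P} w(p)p - \sum_{p\in C} u(p)p}^2 \leq \eps/\log n$. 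Invoking the inductive hypothesis on the recursive call in Line~\ref{recursiveCall} then composes the two errors via the triangle inequality on norms. For the runtime, each recursive level contributes $O(n_i d)$ for partitioning and centroid computations, plus $O(k_i d\log(n_i)/\eps) = O(d\log^2(n_i)/\eps^2)$ for Line~\ref{compSlowCara}; summing over all levels gives $O(nd + d\log^2(n)/\eps^2)$, dominated by the first level.

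The main obstacle will be coordinating the size reduction with the error accumulation across the recursion. I would need to verify that $|\tilde\mu|$ is strictly smaller than $k$, so that $|C|$ shrinks by a constant factor per level and the recursion has depth $O(\log n)$, and that the total accumulated error $\sum_i \sqrt{\eps/\log n_i}$ in norm remains bounded by $\sqrt{2\eps}$. Rewriting this sum via the substitution $j=\log n_i$ as $\sqrt{\eps}\sum_{j=O(1)}^{\log n} 1/\sqrt{j}$ and comparing with $\int 1/\sqrt{j}\,dj=2\sqrt{j}$ gives an a priori bound of $O(\sqrt{\eps\log n})$, so pinning down the precise constants (in particular, showing that the choice $k=2\log(n)/\eps$ in Line~1 forces $|\tilde\mu|<k$ at every level) is the delicate part of the argument and is essential for obtaining the tighter $\sqrt{2\eps}$ error bound claimed in the theorem.
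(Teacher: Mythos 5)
Your structural plan matches the paper's proof almost exactly: the paper also argues by first establishing, for a single non-recursive pass, that $C\subseteq P$, $\sum_{p\in C}u(p)=1$, $\sum_{p\in C}u(p)p=\sum_{\mu_i\in\tilde\mu}\tilde u(\mu_i)\mu_i$, $\sum_i w'(\mu_i)\mu_i=\sum_{p\in P}w(p)p$, hence a per-level squared error of $\eps/\log n$, and $|C|\leq\lceil |P|/2\rceil$; it then sums over $O(\log n)$ recursion levels for the error and the running time. So the approach is not a different route.

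The gap you flagged, however, is genuine, and it is not a gap in your reconstruction --- it is a gap in the paper. For the error bound (ii), the paper writes
$\norm{\sum_{p\in P}w(p)p-\sum_{p\in C}u(p)p}^2\le \log(n)\cdot\frac{\eps}{\log(n)}=\eps$,
i.e.\ it treats the per-level \emph{squared} errors as if they were additive across the recursion. The triangle inequality only gives additivity of unsquared norms: if the mean shifts by at most $\sqrt{\eps/\log n_i}$ at level $i$, the total shift is at most $\sum_i\sqrt{\eps/\log n_i}$, whose square is of order $\eps\log n$ (your integral comparison is the right computation), not $\eps$. There is no independence or orthogonality argument in the paper that would let squared errors add linearly, and a direct induction on the theorem statement also fails: assuming the recursive call returns a mean within $\sqrt{2\eps}$ and combining with the extra $\sqrt{\eps/\log n}$ from the current level overshoots $\sqrt{2\eps}$. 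So your hesitation is well-placed; as written, neither your sketch nor the paper's own proof actually establishes $\norm{\cdot}^2\le2\eps$, and the step you singled out is precisely the one that is unjustified. A correct version of the algorithm would need either a decaying error parameter across recursion levels (e.g.\ geometrically shrinking $\eps$) or a different composition argument; with the algorithm exactly as stated, the best bound that follows from the triangle inequality is $O(\eps\log n)$ in squared norm. Your other two parts --- the size/weight property (i) via the base case and the per-level identities, and the running-time estimate (iii) via the geometric decay of $n_i$ and the $O(kd\cdot\log(n_i)/\eps)$ cost of each Frank--Wolfe call --- are sound and match the paper.
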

\ifproofs
\begin{proof}
See full proof in Section~\ref{sup:boost} of the supplementary material.
\end{proof}
\fi

\begin{corollary}\label{weak-coreset-cor}
Let $(Q,m)$ be a weighted set of $n$ points in $\REAL^d$, and let $\eps\in (0,1)$. Then in $O(nd + d \cdot \frac{\log{(n)}^2}{\eps^2})$ time, we can compute a vector $u=(u_1,\cdots,u_n)^T\in \REAL^n$, such that $u$ has $\norm{u}_0\leq 128/\eps$ non-zero entries and $(Q,u)$ is a vector summarization $(2\eps)$-coreset for $(Q,m)$.
\end{corollary}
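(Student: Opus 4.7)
The plan is to reuse the reduction embedded in Algorithm~\ref{alg:strong}, replacing its internal Frank-Wolfe call with the faster $\boost$ procedure. Concretely, given $(Q,m)$, I would compute $w=m/\norm{m}_1$, the weighted mean $\mu$ and weighted standard deviation $\sigma$ exactly as in Lines~\ref{wcomp}--\ref{varcomp}, and then map each $q_i$ to the lifted point $p_i'=(p_i^T\mid 1)^T/\norm{(p_i^T\mid 1)}^2$ with lifted weight $w_i'=w_i\norm{(p_i^T\mid 1)}^2/2$, precisely as in Lines~\ref{p'}--\ref{w'}. A short calculation (already carried out inside the proof of Theorem~\ref{weak-coreset-theorem}) shows that all $p_i'$ lie in the unit ball and that $w'$ is a distribution vector, so the lifted instance is of the type handled by Theorem~\ref{theorem:fastcoreset}.

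Next, rather than appeal to Theorem~\ref{thm1} on Line~\ref{define u'}, I would invoke $\boost((p_i')_i, w', \eps')$ with an appropriately scaled $\eps'=\Theta(\eps)$, the constant being the same $1/16$-type constant used in Theorem~\ref{weak-coreset-theorem} (determined by the algebra of the unlifting step). Theorem~\ref{theorem:fastcoreset} then yields a sparse distribution $u'$ with $O(1/\eps)$ non-zero entries satisfying $\norm{\sum_i(w_i'-u_i')p_i'}^2\leq 2\eps'$, computed in $O(nd+d\log(n)^2/\eps^2)$ time.

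I would then unlift via $u_i=\norm{m}_1\cdot 2u_i'/\norm{(p_i^T\mid 1)}^2$ as in Line~\ref{u}, and invoke exactly the algebraic identity already verified inside the proof of Theorem~\ref{weak-coreset-theorem} to convert the unit-ball error bound produced by $\boost$ into the vector-summarization bound $\norm{\tilde\mu-\mu}^2\leq 2\eps\sigma^2$ required by Definition~\ref{def:weakCoreset}. The sparsity $\norm{u}_0\leq 128/\eps$ is inherited from the $O(1/\eps)$ sparsity of $u'$ with the scaling constant chosen above, and the runtime $O(nd+d\log(n)^2/\eps^2)$ is the sum of $O(nd)$ for the reduction (computing means, variance and the lifting are all $O(nd)$) plus the $O(nd+d\log(n)^2/\eps^2)$ cost of $\boost$ from Theorem~\ref{theorem:fastcoreset}(\ref{runtimeofcorest}), which dominates.

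The essential observation is that the body of Theorem~\ref{weak-coreset-theorem} is really a black-box reduction: any sub-procedure that produces a sparse approximation to $\sum_i w_i' p_i'$ in the lifted unit-ball instance yields a vector summarization coreset for $(Q,m)$, with error and sparsity blown up by fixed universal constants. Hence the only mildly delicate point --- and essentially the only step that needs any care --- is bookkeeping the constant factor in $\eps'$ so that the $2\eps$ final error and the $128/\eps$ sparsity come out as stated; everything else is inherited directly from Theorems~\ref{weak-coreset-theorem} and~\ref{theorem:fastcoreset}.
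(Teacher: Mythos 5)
Your proposal is correct and takes essentially the same route as the paper: the paper's proof of Corollary~\ref{weak-coreset-cor} is the one-liner of replacing the internal Frank-Wolfe call on Line~\ref{define u'} of Algorithm~\ref{alg:strong} with a call to Algorithm~\ref{algboost} and appealing to Theorem~\ref{theorem:fastcoreset} in place of Theorem~\ref{thm1}. You have simply spelled out in more detail the bookkeeping that the paper leaves implicit, namely that the lifted instance lands in the unit ball with a distribution weight vector, that the unlifting algebra from the proof of Theorem~\ref{weak-coreset-theorem} is indifferent to which sparse-approximation subroutine produced $u'$, and that the $O(nd)$ reduction cost plus the $O(nd + d\log(n)^2/\eps^2)$ cost of $\boost$ gives the stated runtime.
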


\begin{algorithm}[H]
\setcounter{AlgoLine}{0}
\SetAlgoLined
\caption{$\weaklogdeltacoreset(Q,\eps,\delta)$\label{alg:smallProb}}
\SetKwInOut{Input}{Input}
\SetKwInOut{Output}{Output}
\Input{A set $Q$ of $n\geq 2$ points in $\REAL^d$,
$\eps\in (0,1)$, and $\delta \in (0,1)$.}
\Output{A subset $S\subseteq P$ that satisfies Lemma~\ref{weak-probabilistic-coreset-theorem}.}
$k:= \myk$. \label{line:defk}

$S:=$ an i.i.d sample of size $\myn$.

$\br{S_1, \cdots, S_{k}} :=$ a partition of $S$ into $k$ disjoint subsets, each contains $\size$ points .\label{line:sapledsets}

$\overline{s}_i:=$ the mean of the $i$'th subset $S_i$ for $i \in [k]$. \label{line:si}

${i^{*}:=\argmin_{j \in [k]}\sum_{i=1}^{k}\norm{\overline{s}_i -\overline{s}_j }_2}$. \label{closest_mean_33}

\Return $S_{i^{*}}$
\end{algorithm}

\begin{lemma}\label{weak-probabilistic-coreset-theorem}\label{prob:onemean}
Let $Q$ be a set of $n$ points in $\REAL^d$, $\mu = \frac{1}{n}\smip q$, and $\var^2=\frac{1}{n}\smip \norm{q - \mu}^2$. Let $\eps\in (0,1)$, $\delta \in (0,0.9]$, and let $S\subseteq \REAL^{d}$ be the output of a call to $\weaklogdeltacoreset(Q,\eps,\delta)$; see Algorithm~\ref{alg:smallProb}. Then (i) $S\subseteq Q$ and $|S| = \frac{4}{\eps}$, (ii) with probability at least $1-3\delta$ we have
$  \norm{  \frac{1}{\abs{S}} \sum_{p\in S} p - \mu }^2 \leq  33 \cdot \eps\var^2$, and (iii) $S$ is computed in $O\left(d\log{(\frac{1}{\delta})}^2 + \frac{d\log{(\frac{1}{\delta})}}{\eps}  \right)$ time.
\end{lemma}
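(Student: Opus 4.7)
The plan is to analyze Algorithm~\ref{alg:smallProb} as a high-dimensional median-of-means estimator: each sub-sample mean $\bar s_i$ only concentrates around $\mu$ weakly (via one application of Markov's inequality), but selecting $i^* = \argmin_j \sum_i \|\bar s_i - \bar s_j\|_2$ as a ``$1$-median'' of the sub-sample means boosts this into a high-probability guarantee.

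First (single sub-sample concentration), I would use that $S$ is i.i.d.\ uniform from $Q$: for every fixed $i\in[k]$, $\mathbb{E}[\bar s_i]=\mu$ and $\mathbb{E}\|\bar s_i-\mu\|^2 = \sigma^2/|S_i| = \eps\sigma^2/4$ by independence of the $|S_i| = 4/\eps$ samples. Markov's inequality then yields $\Pr[\|\bar s_i-\mu\| \le r] \ge 1 - 1/(4c_1)$ for $r := \sqrt{c_1\eps}\,\sigma$ and any constant $c_1 > 0$. Call such an $i$ \emph{good} and let $B\subseteq[k]$ be the (random) set of good indices. Second (most sub-samples are good), the indicators $\{i\in B\}_{i\in [k]}$ are i.i.d.\ Bernoulli, so a multiplicative Chernoff bound shows that $|B|/k \ge b$ for some chosen $b>1/2$ with failure probability at most $\delta$, provided that $k = \lfloor 3.5\log(1/\delta)\rfloor + 1$ is large enough; the constant $3.5$ in Line~\ref{line:defk} is tuned for the $(c_1,b,p)$ trade-off used in Step~3.

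Third (the geometric median-of-means step, which I expect to be the main obstacle), I would prove the deterministic inequality: if $|B|/k\ge b$ with $b>1/2$, then $\|\bar s_{i^*}-\mu\| \le \frac{2b+1}{2b-1}\,r$. The argument combines two triangle-inequality bounds. On one hand, for every good $i\in B$ we have $\|\bar s_i - \bar s_{i^*}\| \ge \|\bar s_{i^*}-\mu\| - r$, whence $\sum_{i\in B}\|\bar s_i - \bar s_{i^*}\| \ge |B|(\|\bar s_{i^*}-\mu\| - r)$. On the other hand, by the optimality of $i^*$ in Line~\ref{closest_mean_33}, $\sum_i \|\bar s_i - \bar s_{i^*}\| \le \sum_i \|\bar s_i - \bar s_{j^*}\|$ for any good $j^*\in B$; splitting this latter sum into indices $i\in B$ (each term $\le 2r$) and indices $i\notin B$ (each term $\le \|\bar s_i - \bar s_{i^*}\| + \|\bar s_{i^*}-\bar s_{j^*}\|$) and rearranging yields the claimed ratio. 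Squaring gives $\|\bar s_{i^*}-\mu\|^2 \le 33\,\eps\sigma^2$ for the constants chosen in Steps~1--2, and since $\bar s_{i^*} = \frac{1}{|S_{i^*}|}\sum_{p\in S_{i^*}} p$, this is conclusion (ii). The factor $3$ in the $1-3\delta$ success probability accrues from a union bound over the (at most three) failure events used above. The running time claim (iii) follows from the $O(dk/\eps)$ cost of computing the $k$ sub-sample means and the $O(dk^2)$ cost of the $k^2$ pairwise distances, matching $O\!\left(d\log(1/\delta)^2 + d\log(1/\delta)/\eps\right)$ after substituting $k = \Theta(\log(1/\delta))$.

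The main obstacle I foresee is the simultaneous calibration in Step~3: choosing the Markov threshold $c_1$ (and thus the per-index success probability $p = 1-1/(4c_1)$) together with the majority level $b > 1/2$ so that both (a) the Chernoff bound of Step~2 succeeds using only $k=\lfloor 3.5\log(1/\delta)\rfloor+1$ trials, and (b) the geometric inequality $\bigl((2b+1)/(2b-1)\bigr)^2 c_1 \le 33$ holds. Once these constants are nailed down, the rest is routine.
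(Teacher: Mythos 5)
Your proof takes a genuinely different route from the paper's. The paper invokes Minsker's Corollary~4.1 to bound, with probability at least $1-\delta$, the distance from the \emph{continuous} geometric median $\hat s := \argmin_{x\in\REAL^d}\sum_i\norm{\overline s_i - x}_2$ of the subsample means to $\mu$; it then separately shows, via Chebyshev and the fact that $(1/4)^k\le\delta$, that some $\overline s_j$ lands within $\sqrt{\eps}\,\sigma$ of $\mu$; and finally it argues that $\overline s_{i^*}$ from Line~\ref{closest_mean_33} is the $\overline s_i$ nearest to $\hat s$, closing with a chain of triangle inequalities. You instead analyze $\overline s_{i^*}$ directly as a discrete median-of-means estimator, replacing the Minsker black box by Chebyshev per subsample, a Chernoff bound over the ``good'' indicators, and a deterministic lemma bounding the distance from the discrete $1$-median of a majority-concentrated point set to $\mu$. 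Your self-contained route actually side-steps a shaky step in the paper's own argument: the claimed implication ``$f$ convex and minimized at $\hat s$, $f(q)\le f(p)\ \Rightarrow\ \norm{q-\hat s}\le\norm{p-\hat s}$'' is false for general convex $f$ (any anisotropic quadratic gives a counterexample), so the paper's reduction from the discrete selector $\overline s_{i^*}$ to the continuous median $\hat s$ is not actually justified.

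However, the calibration you defer to Step~3 is not routine --- as written it does not close. Your deterministic lemma gives $\norm{\overline s_{i^*}-\mu}\le \frac{2b+1}{2b-1}\,r$, whose prefactor is at least $3$ for any $b>1/2$; with $r=\sqrt{c_1\eps}\,\sigma$ and the target $\left(\frac{2b+1}{2b-1}\right)^2 c_1 \le 33$ you are forced to take $c_1 \le 33/9 \approx 3.7$, hence a bad-index probability $q = 1/(4c_1) \ge 0.068$, and for the majority level you also need $1-b > q$. On the other side, $k = \lfloor 3.5\log(1/\delta)\rfloor + 1$ demands a Chernoff exponent $q\big(t\log t - t + 1\big) \ge 1/3.5$ (writing $t=(1-b)/q$); together with $tq < 1/2$ this pushes $b$ below roughly $0.66$, for which $\left(\frac{2b+1}{2b-1}\right)^2 > 50 > 132q$, contradicting the first constraint. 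An additive Hoeffding bound fails for the same reason. So with your argument either the constant $33$, or the constant $3.5$ in the definition of $k$, must grow substantially; alternatively you would need the tighter deterministic inequality for the \emph{continuous} geometric median --- exactly the ingredient packaged inside Minsker's Corollary~4.1, which the paper imports wholesale and your bound for the discrete $1$-median does not match.
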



\section{Applications}\label{sec:svdAndPcaExt}


\textbf{Coreset for $1$-mean.}
A $1$-mean $\eps$-coreset for $(Q,m)$ is a weighted set $(Q,u)$ such that for \emph{every} $x\in \REAL^d$, the sum of squared distances from $x$ to either $(Q,m)$ or $(Q,u)$, is approximately the same. The following theorem computes such a coreset.
\begin{theorem}\label{strong-coreset-theorem} Let $(Q,m)$ be a weighted set of $n$ points in $\REAL^d$, $\eps\in (0,1)$. Then in $O(\min \br{nd + d\cdot \frac{\log(n)^2}{\eps^4},\frac{nd}{\eps^2}})$ time we can compute a vector $u=(u_1,\cdots,u_n)^T\in \REAL^n$, where $\norm{u}_0\leq \frac{128}{\eps^2}$, and for every $x\in \REAL^d$: $$ \left|\smi (m_i-u_i)\norm{q_i-x}^2 \right| \leq \eps \smi m_i\norm{q_i-x}^2.$$ 
\end{theorem}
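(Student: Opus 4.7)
The plan is to reduce the $1$-mean coreset problem to the vector summarization coreset of Corollary~\ref{weak-coreset-cor} (or Theorem~\ref{weak-coreset-theorem}) applied to an appropriate lift of the input into $\REAL^{d+1}$, and then pull the guarantee back through Huygens' identity. Without loss of generality rescale so that $\norm{m}_1=1$, and set $\mu=\sum_i m_i q_i$, $\sigma^2=\sum_i m_i \norm{q_i-\mu}^2$. The parallel-axis identity
\[
\sum_i m_i\norm{q_i-x}^2 \;=\; \sigma^2 + \norm{\mu-x}^2, \qquad x\in \REAL^d,
\]
shows that if we can produce weights $u$ with $\norm{u}_1=1$ such that the induced mean $\tilde\mu=\sum_i u_i q_i$ and variance $\tilde\sigma^2=\sum_i u_i\norm{q_i-\tilde\mu}^2$ satisfy $\norm{\tilde\mu-\mu}^2 \le c\,\eps^2\sigma^2$ and $|\tilde\sigma^2-\sigma^2|\le c\,\eps\sigma^2$ for a small absolute constant $c$, then for every $x$
\[
\bigl|\textstyle\sum_i(m_i-u_i)\norm{q_i-x}^2\bigr|
\;=\; \bigl|(\sigma^2-\tilde\sigma^2) + (\norm{\mu-x}^2-\norm{\tilde\mu-x}^2)\bigr|.
\]
The first summand is already $O(\eps)\sigma^2$. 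For the second, expanding $\norm{\mu-x}^2-\norm{\tilde\mu-x}^2=\langle\mu-\tilde\mu,\mu+\tilde\mu-2x\rangle$ and applying AM-GM to split the cross term $\langle \mu-\tilde\mu, 2(x-\mu)\rangle$ as $\tfrac1\alpha\norm{\mu-\tilde\mu}^2+\alpha\norm{x-\mu}^2$ with $\alpha=\eps$, yields a bound of $O(\eps)\bigl(\sigma^2+\norm{\mu-x}^2\bigr) = O(\eps)\sum_i m_i\norm{q_i-x}^2$, which is the desired inequality after rescaling $\eps$ by a constant.

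To obtain such $u$ in one call to the existing machinery, I would lift each input point to $\tilde q_i=(q_i-\mu,\,\norm{q_i-\mu}^2)\in\REAL^{d+1}$ (appropriately rescaled so the lifted set lies in a ball whose radius is controlled) and invoke Corollary~\ref{weak-coreset-cor} (respectively Theorem~\ref{weak-coreset-theorem}) on the weighted set $(\{\tilde q_i\},m)$ with error parameter $\Theta(\eps^2)$. The first $d$ coordinates of the resulting vector summarization guarantee yield $\norm{\tilde\mu-\mu}^2=O(\eps^2)\sigma^2$, and the last coordinate yields $|\sum_i u_i\norm{q_i-\mu}^2-\sigma^2|=O(\eps)\sigma^2$; combining this with the parallel-axis formula $\tilde\sigma^2=\sum_i u_i\norm{q_i-\mu}^2-\norm{\tilde\mu-\mu}^2$ and the already-established $\norm{\tilde\mu-\mu}^2=O(\eps^2)\sigma^2$ delivers the variance bound. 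The sparsity $\norm{u}_0\le 128/\eps^2$ is inherited from Corollary~\ref{weak-coreset-cor}/Theorem~\ref{weak-coreset-theorem} invoked with parameter $\Theta(\eps^2)$, and the running time follows directly: Theorem~\ref{weak-coreset-theorem} with $\eps^2$ costs $O(nd/\eps^2)$, while Corollary~\ref{weak-coreset-cor} with $\eps^2$ costs $O(nd + d\log(n)^2/\eps^4)$, and we take the better of the two.

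The main technical obstacle is \emph{scale control of the lift}: the $(d{+}1)$st coordinate has second moment proportional to $\sum_i m_i\norm{q_i-\mu}^4$, which is not automatically bounded by $\sigma^2$ and could blow up the relative-error guarantee of vector summarization. I would address this with a preliminary normalization that maps the centered points into the unit ball (dividing by $\max_i\norm{q_i-\mu}$, or equivalently re-using the centering/normalization step built into Algorithm~\ref{alg:strong}), run the coreset algorithm on this normalized lift, and carefully track how the absolute error on the lifted mean scales back to a $\sigma^2$-relative error on the original $\norm{\tilde\mu-\mu}^2$ and $|\tilde\sigma^2-\sigma^2|$. A secondary issue is ensuring $\norm{u}_1=\norm{m}_1$ exactly, so that the $(\norm{m}_1-\norm{u}_1)\norm{x}^2$ cross-term in the raw expansion vanishes; this can be enforced either by a trivial renormalization $u\leftarrow u\cdot\norm{m}_1/\norm{u}_1$ (whose multiplicative error is $O(\eps)$ and is absorbed into the final bound) or by observing, as in the proof of Theorem~\ref{weak-coreset-theorem}, that the Frank-Wolfe-based construction of Algorithm~\ref{alg:strong} already returns a distribution.
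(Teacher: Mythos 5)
Your reduction skeleton is essentially the paper's own: Huygens' identity reduces the $1$-mean question to controlling three quantities --- the mean error, the weight-sum error, and the variance-type quantity $\smi u_i\norm{q_i-\mu}^2$ --- and the AM--GM split of the cross term is the same manipulation as inequality~\eqref{2ab bound} in the proof of Lemma~\ref{strong_coreset_lema}. The gap is in how you propose to achieve all three conditions simultaneously.

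Lifting to $\tilde q_i=(q_i-\mu,\ \norm{q_i-\mu}^2)\in\REAL^{d+1}$ and invoking the vector-summarization coreset of Definition~\ref{def:weakCoreset} with parameter $\Theta(\eps^2)$ gives a bound that is relative to the variance of the \emph{lifted} set, and on the last coordinate that variance is $\smi m_i\bigl(\norm{q_i-\mu}^2-\sigma^2\bigr)^2$ --- a fourth-moment (kurtosis) quantity that has no data-independent bound in terms of $\sigma^2$. Hence neither of your two claimed consequences, $\norm{\tilde\mu-\mu}^2=O(\eps^2)\sigma^2$ from the first $d$ coordinates and $\abs{\smi u_i\norm{q_i-\mu}^2-\sigma^2}=O(\eps)\sigma^2$ from the last, actually follows; both inherit the kurtosis term. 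There is also a units mismatch built into this lift: the last coordinate carries a squared-length scale while the first $d$ carry a length scale, so a single relative-error guarantee on the concatenated vector cannot simultaneously give $\norm{\tilde\mu-\mu}^2\le O(\eps^2)\sigma^2$ (dimension length$^2$) and $\bigl(\smi u_i\norm{q_i-\mu}^2-\sigma^2\bigr)^2\le O(\eps^2)\sigma^4$ (dimension length$^4$). Your proposed remedy --- normalize by $\max_i\norm{q_i-\mu}$ and use the absolute-error Theorem~\ref{thm1} --- trades the kurtosis problem for a data-dependent sparsity: the absolute error you then need is roughly $\eps^2\sigma^2/R^2$ with $R$ the maximum (rescaled) norm, which the Frank--Wolfe iteration count converts into $\Theta(R^2/(\eps^2\sigma^2))$ non-zero entries rather than $O(1/\eps^2)$.

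The paper avoids both issues by using a different lift that does not encode the squared norm as a coordinate at all. Algorithm~\ref{alg:strong} appends a \emph{constant} coordinate, $p_i\mapsto p_i'=(p_i^T\mid 1)^T/\norm{(p_i^T\mid 1)}^2$, and re-weights by $w_i'=w_i\norm{(p_i^T\mid 1)}^2/2$. Frank--Wolfe then controls $\norm{\smi(w_i-\tilde u_i)(p_i^T\mid 1)^T}$, whose first block gives the mean condition and whose last entry gives the weight-sum condition. The variance condition (3) of Lemma~\ref{strong_coreset_lema} is then obtained not from any coordinate of the lift but from the structural fact that the Frank--Wolfe output $u'$ is a distribution: $2=2\smi u_i'=\smi\tilde u_i\norm{(p_i^T\mid 1)}^2=\smi\tilde u_i\norm{p_i}^2+\smi\tilde u_i$, so $\abs{1-\smi\tilde u_i\norm{p_i}^2}=\abs{\smi\tilde u_i-1}$ and the variance error is literally equal to the weight-sum error. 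This identity is what removes the fourth-moment dependence, and it is the ingredient your proposal lacks. Your parenthetical suggestion that re-using the normalization step of Algorithm~\ref{alg:strong} is equivalent to your $(q_i-\mu,\norm{q_i-\mu}^2)$ lift is incorrect: they are materially different lifts, and only the former works.
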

\ifproofs
\begin{proof}
See full proof in section~\ref{sup:strong} of the supplementary material
\end{proof}
\fi

\textbf{Coreset for KDE.} Given a kernel defined by the kernel map $\phi$ and two sets of points $Q$ and $Q'$, the maximal difference $\norm{KDE_Q - KDE_{Q'}}_\infty$ between the kernel costs of $Q$ and $Q'$ is upper bounded by $\norm{\mu_{\hat{Q}} - \mu_{\hat{Q'}}}_2$, where $\mu_{\hat{Q}}$ and $\mu_{\hat{Q'}}$ are the means of $\hat{Q}=\br{\phi(q) \mid q\in Q}$ and $\hat{Q}'=\br{\phi(q) \mid q\in Q'}$ respectively~\cite{desai2016improved}. Given $\hat{Q}$, we can compute a vector summarization $\eps^2$-coreset $\hat{Q}'$, which satisfies that $\norm{\mu_{\hat{Q}} - \mu_{\hat{Q'}}}_2^2 \leq \varepsilon^2$. By the above argument, this is also an $\varepsilon$-KDE coreset.

\newcommand{\pcaalg}{\textsc{PCA-CoreSet}}
\newcommand{\svdalg}{\textsc{SVD-CoreSet}}
\newcommand{\dimReductionAlg}{\textsc{DIM-CoreSet}}

\setcounter{AlgoLine}{0}
\begin{algorithm}[h]
\SetAlgoLined
\caption{$\dimReductionAlg(A,k,\eps)$\label{svdccore}}
\SetKwInOut{Input}{Input}
\SetKwInOut{Output}{Output}
\Input{A matrix $A\in \REAL^{n\times d}$, an integer $k \in [d]$, and an error parameter $\eps\in (0,1)$.}
\Output{A diagonal matrix $W\in \REAL^{n\times n}$ that satisfies Corollary~\ref{dim-cor}}

$\displaystyle{r := 1+ \max_{i\in [n]}\frac{4\norm{a_i}^2}{\eps^4}}$  \label{lineneeded} \tcp{where $a_i$ is the $i$th row of $A$ for every $i\in [n]$}

$U,\Sigma,V$ := the full SVD of the concatenated matrix $[A \mid (r,\cdots,r)^T] \in \REAL^{n\times (d+1)}$

$v_i := \left( U_{i,1}, \cdots ,U_{i,k},\frac{U_{i,k+1:d}\Sigma_{k+1:d,k+1:d}} {\norm{\Sigma_{k+1:d,k+1:d}}_F}\right)$ for every $i\in [n]$

$\tilde{v_i}$:= the row stacking of $v_iv_i^T\in \REAL^{d\times d}$ for every $i \in [n]$

$(\br{\tilde{v_1},\cdots,\tilde{v_n}}, u)$ :=  a vector summarization $(\frac{\eps}{5k})^2$-coreset for
    $(\br{\tilde{v_1},\cdots,\tilde{v_n}},(1,\cdots,1))$.

$W:=$ a diagonal matrix in $\REAL^{n\times n}$, where $W_{i,i} = \sqrt{u_i}$ for every $i\in [n]$.

\Return $W$

\end{algorithm}

\textbf{Coreset for dimensionality reduction and LMS solvers.} Given a matrix $A\in \REAL^{n\times d}$, an $\eps$-coreset for the $k$-SVD ($k$-PCA) problem of $A$ is a small scaled subset of its rows that approximates their sum of squared distances to every non-affine (affine) $k$-dimensional subspace of $\REAL^d$, up to a multiplicative factor of $1\pm\eps$; see Corollary~\ref{dim-cor}. Coreset for LMS solvers is the special case of $k=d-1$.

\begin{corollary}[Coreset for dimensionality reduction]\label{dim-cor}
Let $A\in \REAL^{n \times d}$ be a matrix, $\eps\in(0,\frac{1}{2})$ be an error parameter, $k\in [d]$ be an integer, and $W$ be the output of a call to $\dimReductionAlg(A,k,\eps)$. Then: (i) $W$ is a diagonal matrix with $O\left(\frac{k^2}{\eps^2}\right)$ non-zero entries, (ii)  $W$ is computed in $O\left(\min\br{nd^2 + \frac{d^2\log(n)^2k^4}{\eps^4}, \frac{nd^2k^2}{\eps^2}}\right)$ time, and (iii) there is a sufficiently large constant $c$, such that for every $\ell \in \REAL^d$ and an orthogonal $X \in \REAL^{d\times(d-k)}$ we have $$\abs{1- \frac{\norm{W(A-\ell)X}_F^2}{\norm{(A-\ell)X}_F^2}} \leq c\eps.$$ Here, $A-\ell$ is the subtraction of $\ell$ from every row of $A$.
\end{corollary}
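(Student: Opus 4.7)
The plan is to reduce the dimensionality-reduction coreset guarantee to the vector-summarization coreset guarantee of Corollary~\ref{weak-coreset-cor}, by an ``outer-product flattening'' reduction in the spirit of~\cite{feldman2016dimensionality}.

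Items (i) and (ii) are bookkeeping. The call in Algorithm~\ref{svdccore} invokes Corollary~\ref{weak-coreset-cor} on $n$ points in $\REAL^{d^{2}}$ with error parameter $(\eps/(5k))^{2}$, which by that corollary yields a weight vector $u$ with $128\cdot(5k/\eps)^{2}=O(k^{2}/\eps^{2})$ non-zero entries. This gives the diagonal count of $W$ in (i). For (ii), the full SVD of $[A\mid r\mathbf 1]\in\REAL^{n\times(d+1)}$ costs $O(nd^{2})$, forming the $v_i$'s and the $\tilde v_i$'s costs $O(nd^{2})$, and the coreset call costs $O(nd^{2}+d^{2}\log(n)^{2}k^{4}/\eps^{4})$; this matches the first branch of the $\min$. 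The second branch, $O(nd^{2}k^{2}/\eps^{2})$, comes from replacing the Corollary~\ref{weak-coreset-cor} call with the direct Frank--Wolfe construction of Theorem~\ref{weak-coreset-theorem}.

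For (iii), the main step is to construct, for each query $(\ell,X)$ with $X\in\REAL^{d\times(d-k)}$ orthogonal, a symmetric matrix $Y=Y(\ell,X)\in\REAL^{d\times d}$ whose row-stacking $\tilde Y$ satisfies
$$\norm{(a_i-\ell)X}^{2}\;=\;v_i^{T}Yv_i\;=\;\langle\tilde v_i,\tilde Y\rangle\qquad\text{for every }i\in[n].$$
Summing over $i$ then expresses both the original and the re-weighted costs as inner products against $\tilde Y$: $\norm{(A-\ell)X}_F^{2}=\langle\tilde Y,\sum_i\tilde v_i\rangle$ and $\norm{W(A-\ell)X}_F^{2}=\langle\tilde Y,\sum_iu_i\tilde v_i\rangle$ with $u_i=W_{ii}^{2}$. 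To build $Y$ I would use the affine lift $\tilde a_i=(a_i,r)$ and $\tilde X=\begin{pmatrix}X\\ -\ell X/r\end{pmatrix}$, under which $(a_i-\ell)X=\tilde a_i\tilde X$ is an exact identity, and then substitute the SVD $[A\mid r\mathbf 1]=U\Sigma V^{T}$ to rewrite the left-hand side as a quadratic form in $U_{i,:}$. The careful scaling in the definition of $v_i$---identity on the top-$k$ block and $\Sigma_{k+1:d}/\norm{\Sigma_{k+1:d}}_F$ on the tail---absorbs the $\Sigma$-factors into $v_i$, leaving $Y$ with a tail block carrying a compensating $\norm{\Sigma_{k+1:d}}_F^{2}$ factor.

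With this identity in hand, Cauchy--Schwarz gives
$$\bigl|\norm{W(A-\ell)X}_F^{2}-\norm{(A-\ell)X}_F^{2}\bigr|\;\le\;\norm{Y}_F\cdot\Bigl\|\sum_i(u_i-1)\tilde v_i\Bigr\|_{2},$$
and Definition~\ref{def:weakCoreset} together with Corollary~\ref{weak-coreset-cor} bounds the second factor by $O(\eps/k)\cdot n\sigma$, where $\sigma^{2}=\tfrac1n\sum_i\norm{\tilde v_i-\mu}^{2}$. The main obstacle, and the technical heart of the proof, is to upgrade this additive bound to a multiplicative $O(\eps)$-bound on the cost by showing that $\norm{Y}_F\cdot n\sigma\le O(k)\cdot\norm{(A-\ell)X}_F^{2}$ for \emph{every} valid query $(\ell,X)$. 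Two structural facts should do this work: (a) orthogonality of $U$ yields $\sum_iv_iv_i^{T}=\mathrm{diag}(I_k,\,\Sigma_{k+1:d}^{2}/\norm{\Sigma_{k+1:d}}_F^{2})$, which pins down $\mu$ and bounds $\sigma$; and (b) the inflated value $r=1+4\max_i\norm{a_i}^{2}/\eps^{4}$ chosen in Line~\ref{lineneeded} forces the top singular direction of $[A\mid r\mathbf 1]$ to align, up to $O(\eps)$ error, with the constant direction, so that the tail singular values $\sigma_{k+1},\ldots,\sigma_d$ of the lift track those of the mean-centred data up to a $(1\pm O(\eps))$ factor. Together these ensure that $\norm{(A-\ell)X}_F^{2}$ always dominates $\norm{Y}_F\cdot n\sigma/k$ up to constants, which closes the argument and yields (iii) for an absolute constant $c$.
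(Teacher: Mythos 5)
Your general strategy---reduce the dimensionality-reduction guarantee to the vector-summarization guarantee via outer-product flattening, and handle the affine shift $\ell$ by the $r$-lift---is the same one the paper uses. Where you diverge is that the paper closes the argument entirely by citing prior results as black boxes: it plugs $A'=[A\mid r\mathbf{1}]$ into Theorem~3 of~\cite{feldman2016dimensionality}, which directly delivers the \emph{relative}-error bound $\abs{1-\norm{WA'X}_F^2/\norm{A'X}_F^2}\le 5\,\norm{\smi(1-W_{i,i}^2)\tilde v_i}$, then bounds the right-hand side by the vector-summarization coreset guarantee, and finally invokes Lemma~4.1 of~\cite{maalouf2019tight} to transfer the non-affine ($\ell=0$) guarantee on $A'$ to the affine guarantee on $A$. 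You, by contrast, attempt to re-derive the content of those two cited lemmas from scratch.

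That re-derivation is where the gap is. Your Cauchy--Schwarz step leaves you needing $\norm{Y}_F\cdot n\sigma\le O(k)\cdot\norm{(A-\ell)X}_F^2$ uniformly over all queries, and you explicitly declare that your structural facts (a) and (b) ``should do this work'' without establishing it. That claim is exactly the technical heart of Theorem~3 of~\cite{feldman2016dimensionality}: one has to show that the specific normalization of $v_i$ (identity on the top-$k$ coordinates, $\Sigma_{k+1:d}/\norm{\Sigma_{k+1:d}}_F$ on the tail) makes the worst-case ratio $\norm{Y}_F\cdot\sqrt{\smi\norm{\tilde v_i}^2}/\norm{A'X}_F^2$ bounded by $O(k)$, and separately that the choice $r=1+4\max_i\norm{a_i}^2/\eps^4$ makes the non-affine bound on $A'$ imply an affine bound on $A$ with only an $O(\eps)$ loss. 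Neither of these is immediate from fact (a) (which only computes $\sum_i v_iv_i^T$) or fact (b) (which is a qualitative alignment statement). Without proving them your argument does not close; you have essentially restated what needs to be shown. So your proposal is correct as a plan but incomplete as a proof, and the missing steps are precisely the content of the two external lemmas the paper relies on.
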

\ifproofs
\begin{proof}
???????????????????
\end{proof}
\fi

\section{Experimental Results}\label{sec:ER}
In this section we apply different coreset construction algorithms presented in this paper to a variety of applications, in order either to boost their running time, or to reduce their memory storage.

\paragraph{Software / Hardware.}
The algorithms were implemented in Python $3.6$~\cite{10.5555/1593511} using ``Numpy''~\cite{oliphant2006guide}. Tests were conducted on a PC with Intel i9-7960X CPU @2.80GHz x 32 and 128Gb RAM.

\newcommand{\uniform}{\texttt{Uniform}}
\newcommand{\sensonemean}{\texttt{Sensitivity-sum}}
\newcommand{\icmlfeldman}{\texttt{ICML17}}
\newcommand{\nipsfeldman}{\texttt{NIPS16}}
\newcommand{\slowonemean}{\texttt{Our-slow-sum}}
\newcommand{\fastonemean}{\texttt{Our-fast-sum}}
\newcommand{\slowsvd}{\texttt{Our-slow-svd}}
\newcommand{\fastsvd}{\texttt{Our-fast-svd}}
\newcommand{\senssvd}{\texttt{Sensitivity-svd}}
\newcommand{\ourrand}{\texttt{Our-rand-sum}}

\paragraph{Algorithms.}
We compared the following algorithms:
\begin{enumerate}[label=(\roman*)]
    \item \uniform: Uniform sampling.
    \item \sensonemean: vector summarization`` sensitivity'' sampling~\cite{tremblay2019determinantal}.
    \item \icmlfeldman: Algorithm~$2$ in~\cite{feldman2017coresets}.
    \item \ourrand: Our coreset construction from Lemma~\ref{prob:onemean}.
    \item \slowonemean: Our coreset construction from Corollary~\ref{weak-coreset-theorem}.
    \item \fastonemean: Our coreset construction from Corollary~\ref{weak-coreset-cor}.
    \item \senssvd: Sensitivty for $k$-SVD~\cite{varadarajan2012sensitivity}.
    \item \nipsfeldman:  Algorithm~$2$ in~\cite{feldman2016dimensionality}.
    \item \slowsvd{} and \textbf{(x) }\fastsvd: Corollary~\ref{dim-cor} offers a coreset construction for SVD using Algorithm~\ref{svdccore}, which utilizes Algorithm~\ref{alg:strong}. However, Algorithm~\ref{alg:strong} either utilizes Algorithm~\ref{alg:frankwolf} (see Theorem~\ref{weak-coreset-theorem}) or Algorithm~\ref{algboost} (see Corollary~\ref{weak-coreset-cor}). \slowsvd{} applies the former option while \fastsvd{} uses the latter option. See section~\ref{sec:algcomp} at the Appendix for full details about the competing algorithms.
\end{enumerate}


\paragraph{Datasets.} We used the following datasets from the UCI machine learning library~\cite{Dua:2019}:
\begin{enumerate}[label=(\roman*)]
\item \textbf{New York City Taxi Data~\cite{illinoisdatabankIDB-9610843,article}. }The data covers the taxi operations at New York city. We used the data describing $n=14.7M$ trip fares at the year of 2013. We used the $d = 6$ numerical features, which are all real numbers.

\item\textbf{US Census Data (1990) Data Set~\cite{dataset:sales}. }The dataset contains $n=2.4M$ entries. We used the entire $d=68$ real-valued attributes of the dataset.

\item \textbf{Buzz in social media Data Set~\cite{kawala2013predictions}. }It contains $n = 0.5M$ examples of buzz events from two different social networks: Twitter, and Tom's Hardware. We used the entire $d=77$ real-valued attributes.
\end{enumerate}
\paragraph{Experiments.}
\begin{enumerate}[label=(\roman*)]
\item \textbf{vector summarization: }The goal is to approximate the mean of the input using a weighted subset. The approximation error is $\norm{\mu - \mu_s}^2$, where $\mu$ is the mean of the full data and $\mu_s$ is the mean of the subset computed by each algorithm; see Fig.~\ref{fig:triperr}--\ref{fig:UStime}.

\item \textbf{$k$-SVD: }We compute the optimal $k$-dimensional non-affine subspaces $S^*$ and $S'$ either using the full data or using the subset at hand, respectively. The approximation error is defined as the ratio $\abs{(c^*-c')/c^*}$, where $c'$ and $c^*$ are the sum of squared distances between the rows of the full input matrix to $S'$ and $S^*$ respectively.
\end{enumerate}


\newcommand\s{0.24}
\begin{figure*}[t!]
\centering
    \begin{subfigure}[t]{\s\textwidth}
		\centering
		\includegraphics[width = \textwidth]{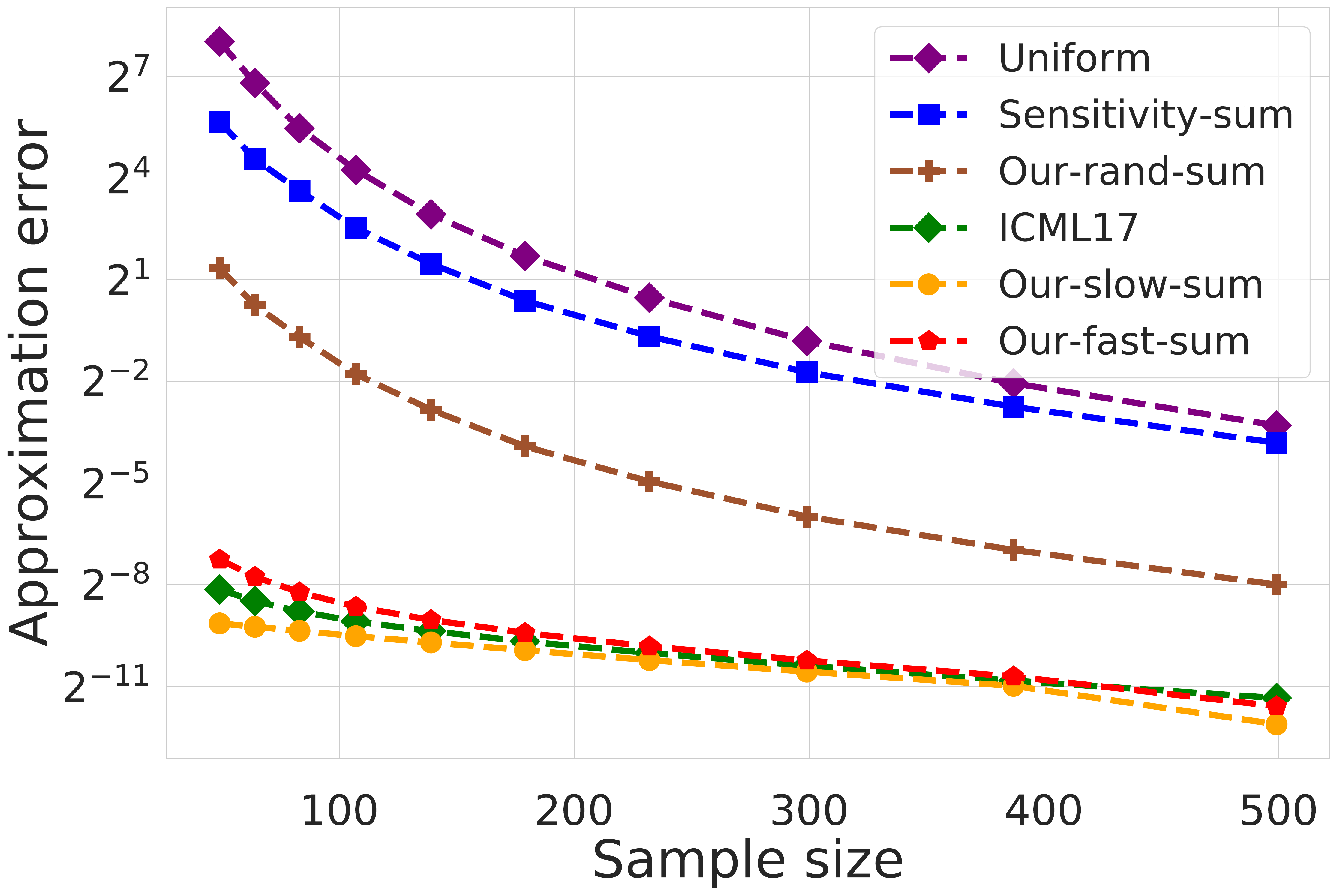}
        \caption{Dataset (i).}
        \label{fig:triperr}
	\end{subfigure}
    \begin{subfigure}[t]{\s\textwidth}
		\centering
		\includegraphics[width = \textwidth]{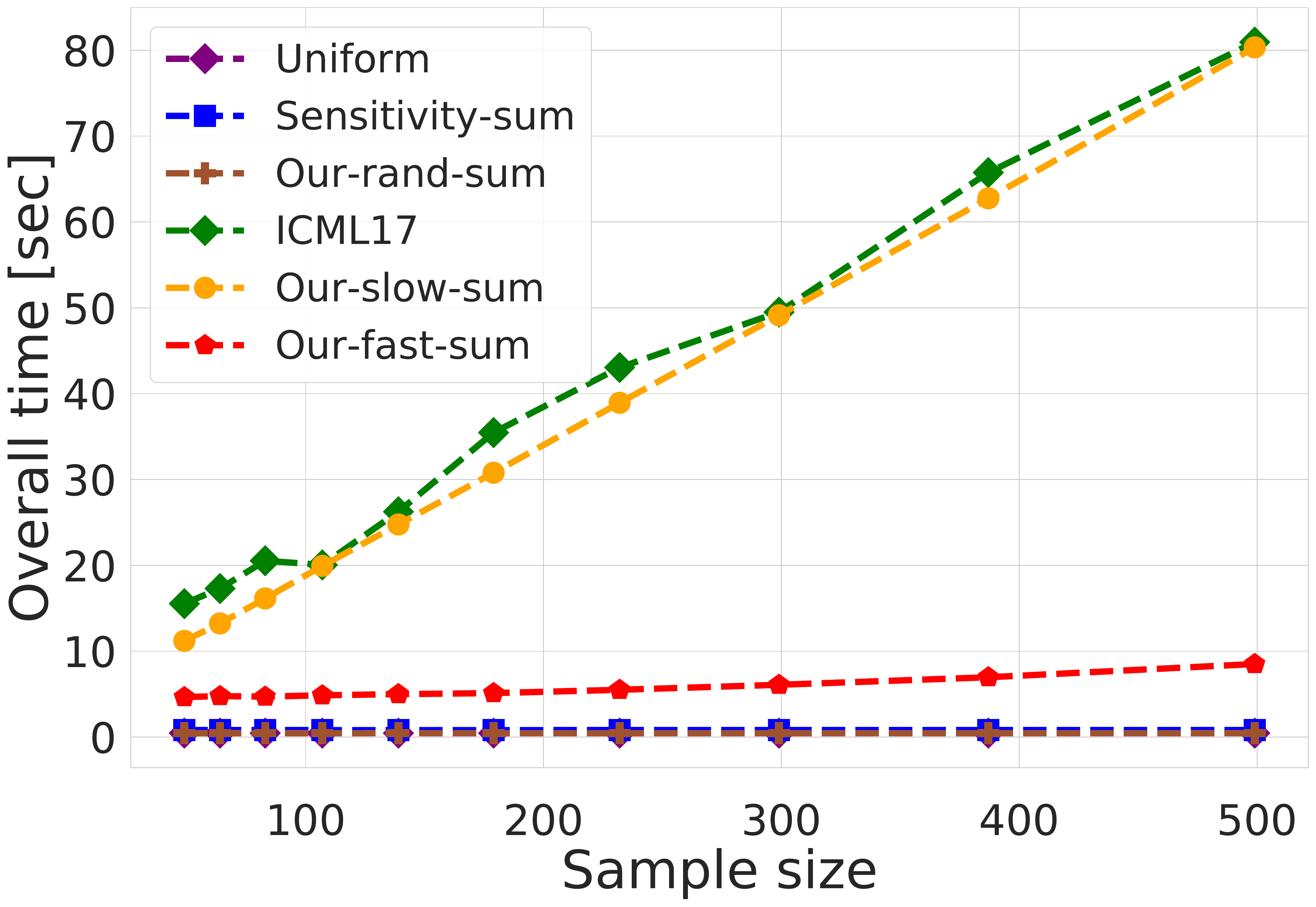}
        \caption{Dataset (i).}
        \label{fig:triptime}
	\end{subfigure}
	  \begin{subfigure}[t]{\s\textwidth}
		\centering		
		\includegraphics[width = \textwidth]{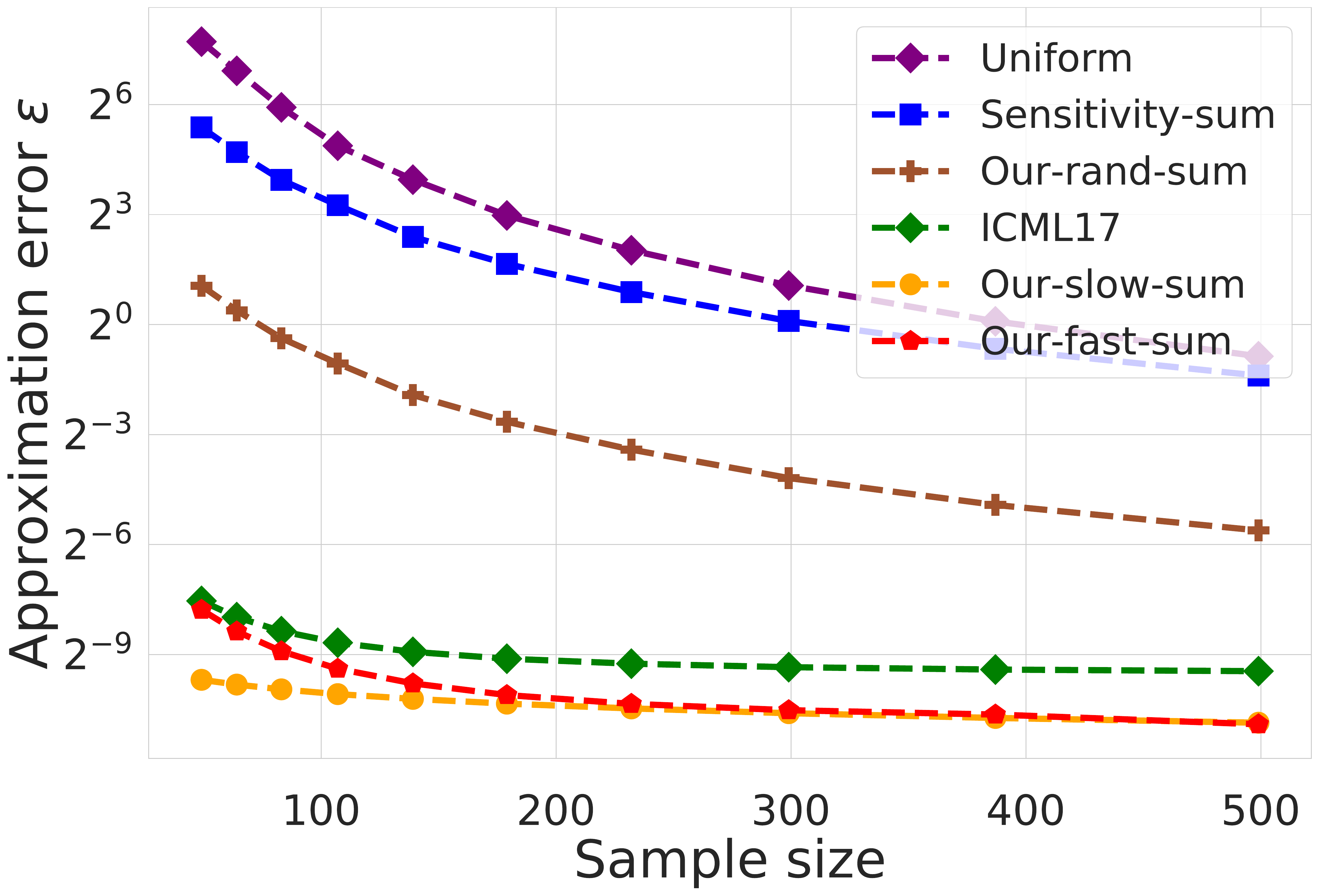}
        \caption{Dataset (ii).}
        \label{fig:USerr}
	\end{subfigure}
    \begin{subfigure}[t]{\s\textwidth}
		\centering		
		\includegraphics[width = \textwidth]{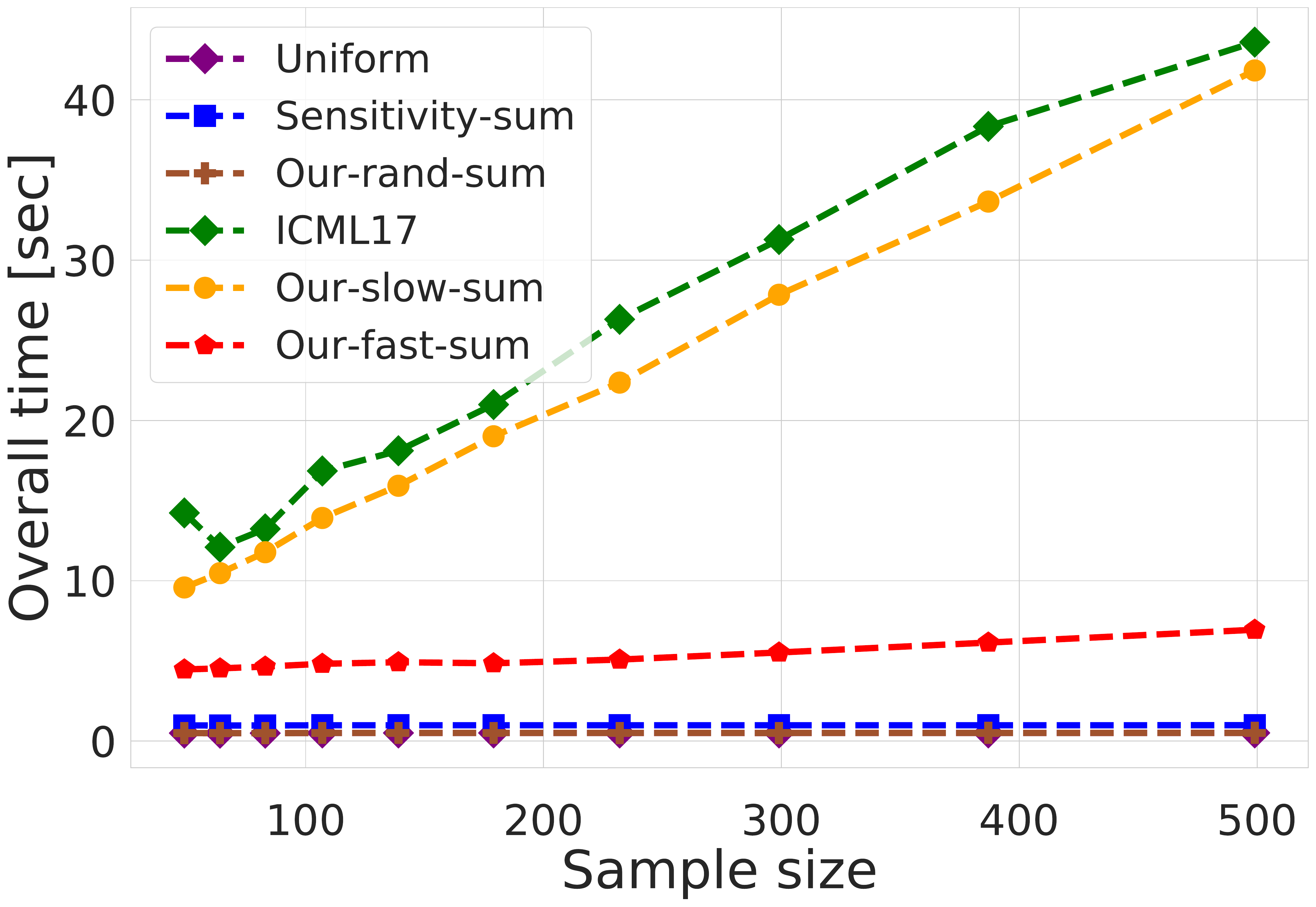}
        \caption{Dataset (ii).}
        \label{fig:UStime}
	\end{subfigure}
    \begin{subfigure}[t]{\s\textwidth}
		\centering
		\includegraphics[width = \textwidth]{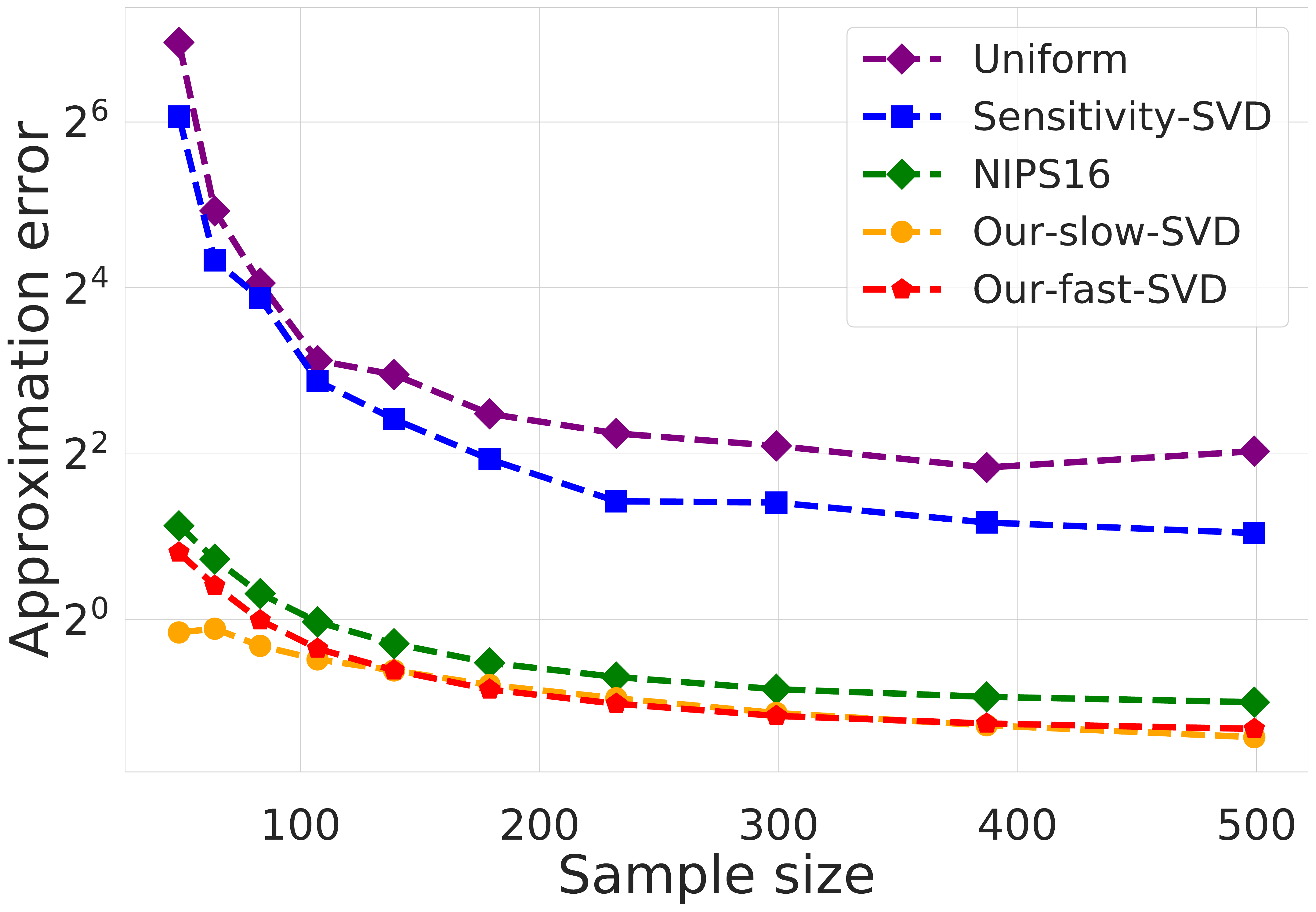}
        \caption{$k=40$, Dataset (iii).}
        \label{fig:twitter40err}
	\end{subfigure}
    \begin{subfigure}[t]{\s\textwidth}
		\centering		
		\includegraphics[width = \textwidth]{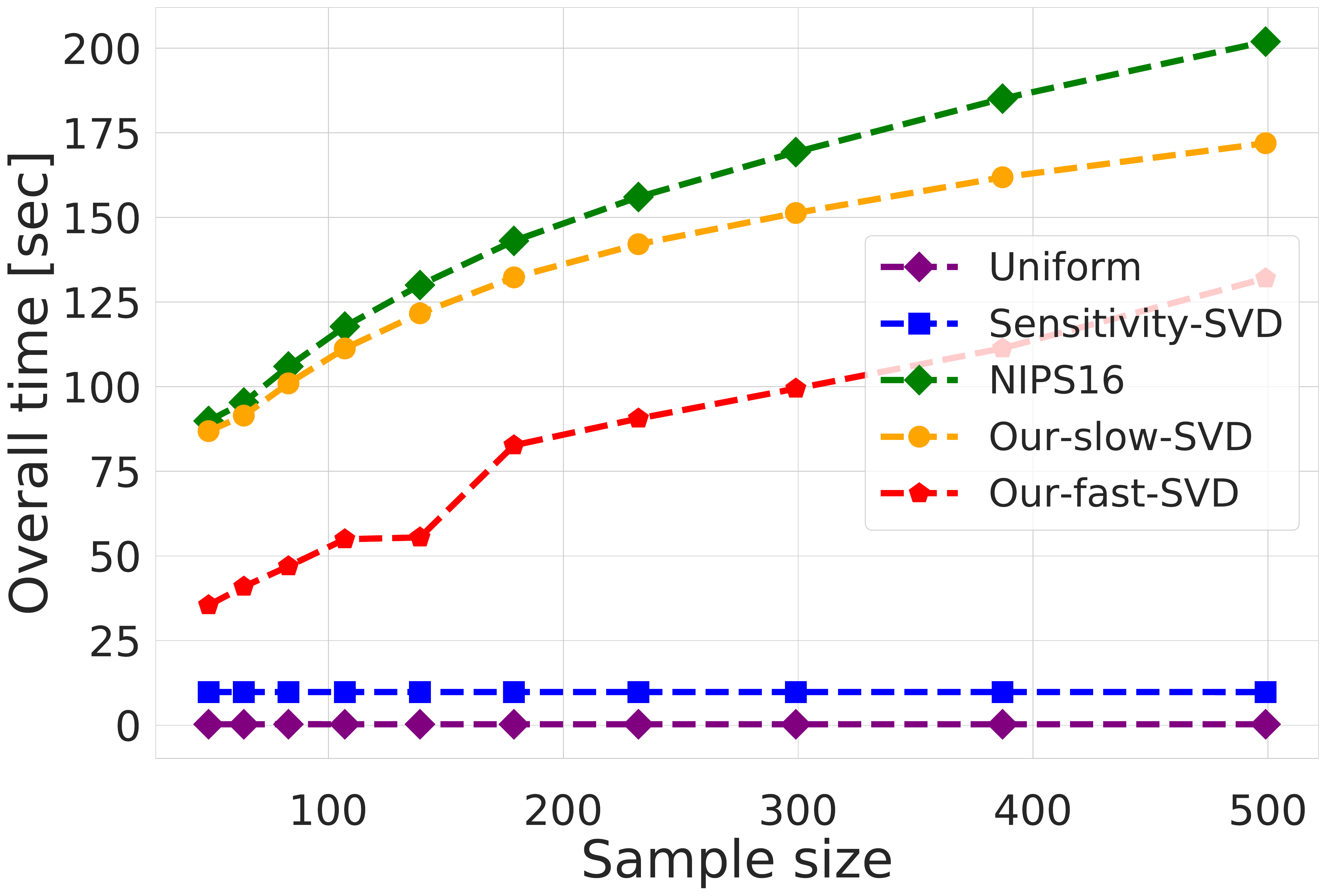}
        \caption{$k=40$, Dataset (iii).}
        \label{fig:twitter40time}
	\end{subfigure}
    \begin{subfigure}[t]{\s\textwidth}
		\centering		
		\includegraphics[width = \textwidth]{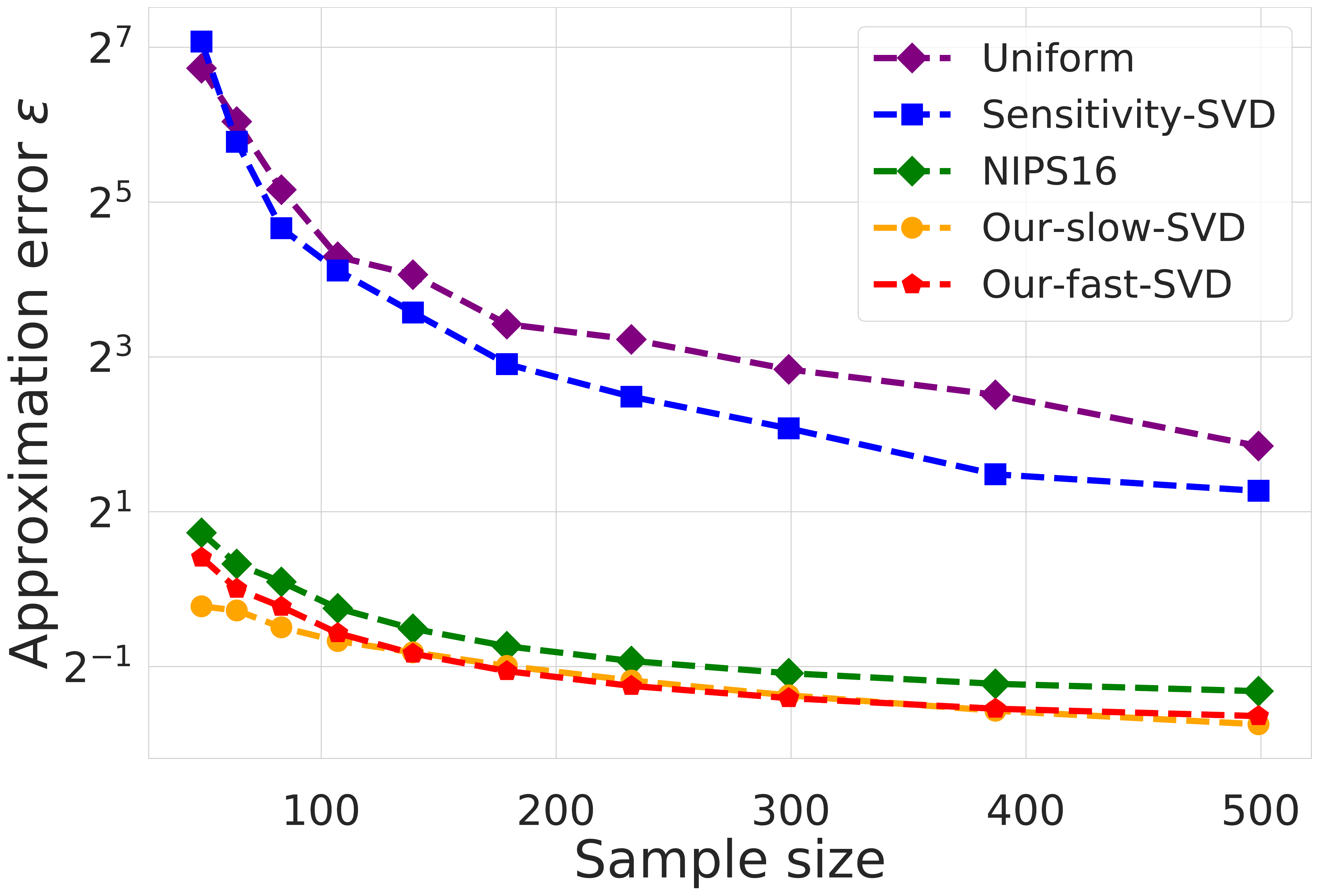}
        \caption{$k=50$, Dataset (iii).}
        \label{fig:twitter50err}
	\end{subfigure}
    \begin{subfigure}[t]{\s\textwidth}
		\centering		
		\includegraphics[width = \textwidth]{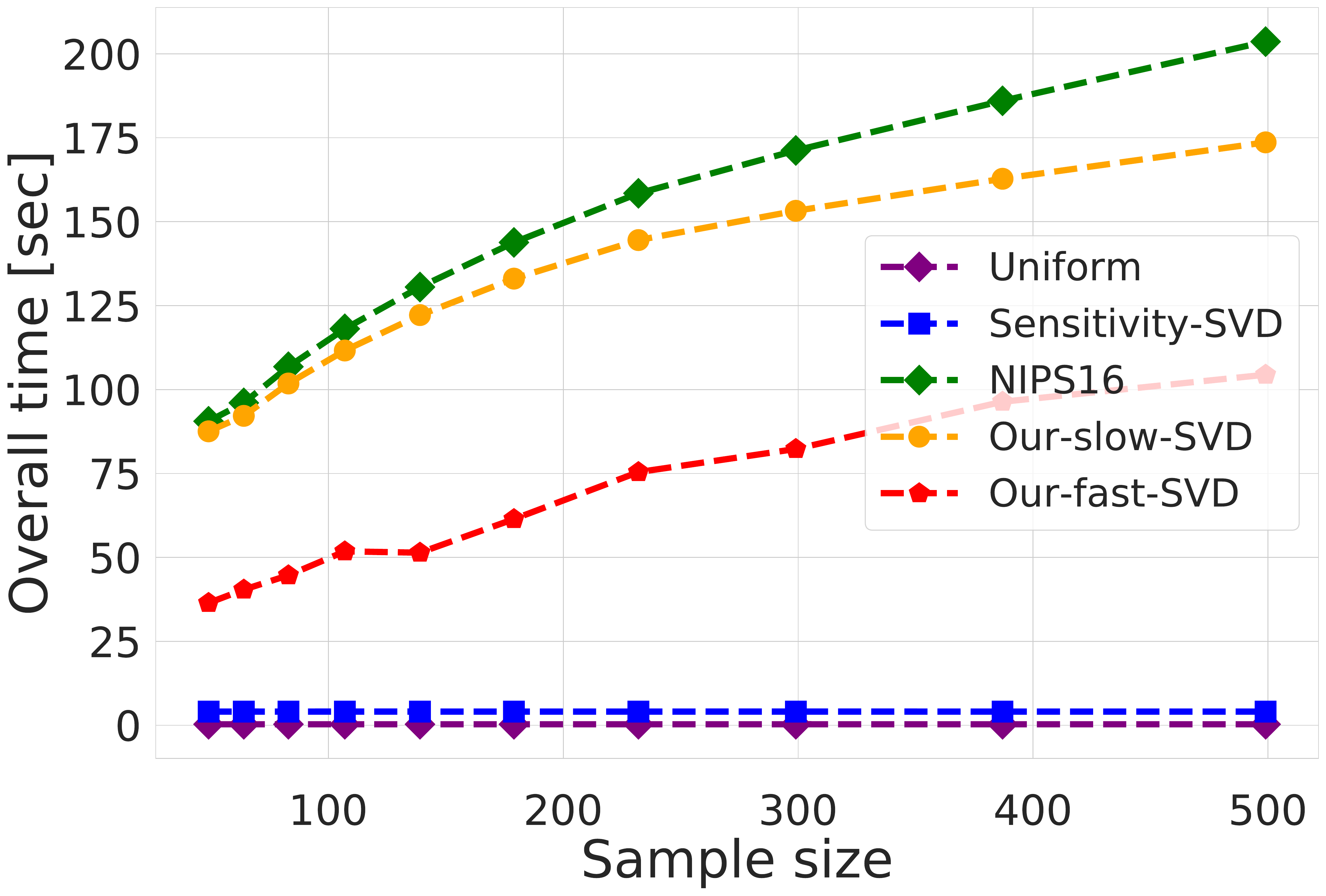}
        \caption{$k=50$, Dataset (iii).}
        \label{fig:twitter50time}
	\end{subfigure}
    \begin{subfigure}[t]{\s\textwidth}
		\centering		
		\includegraphics[width = \textwidth]{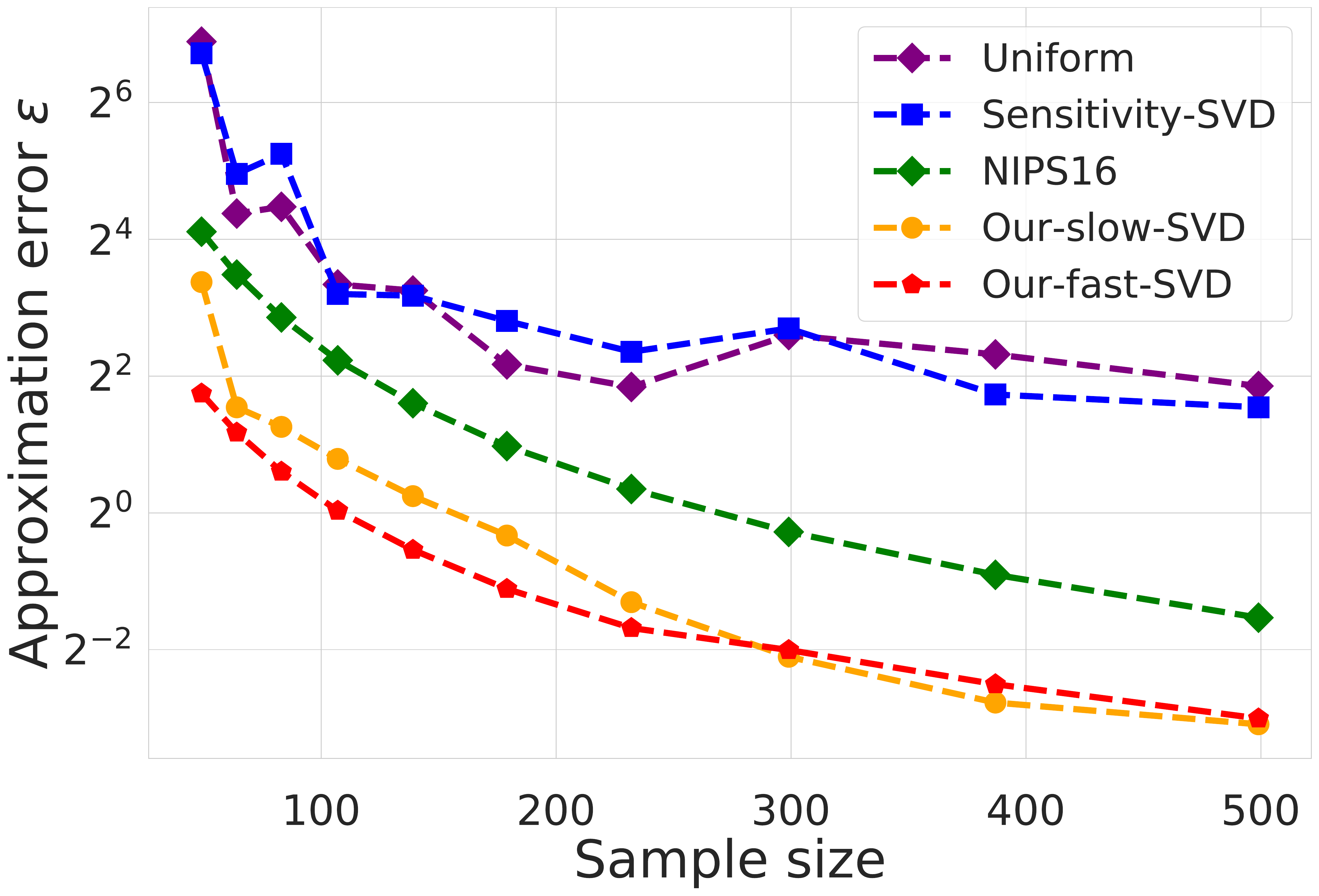}
        \caption{$k=60$, Dataset (iii).}
        \label{fig:twitter60err}
	\end{subfigure}
    \begin{subfigure}[t]{\s\textwidth}
		\centering		
		\includegraphics[width = \textwidth]{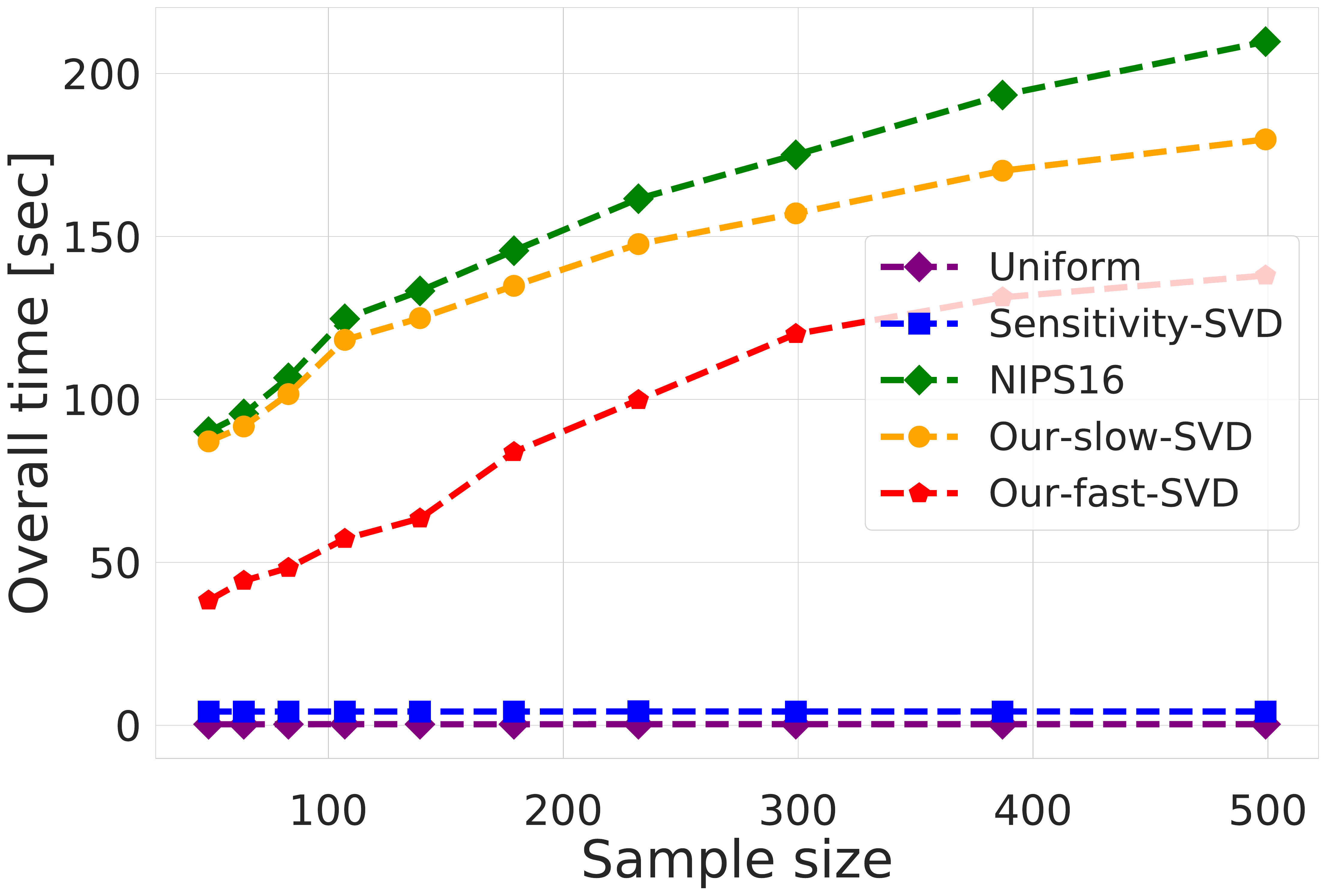}
        \caption{$k=60$, Dataset (iii).}
        \label{fig:twitter60time}
	\end{subfigure}
    \begin{subfigure}[t]{\s\textwidth}
		\centering		
		\includegraphics[width = \textwidth]{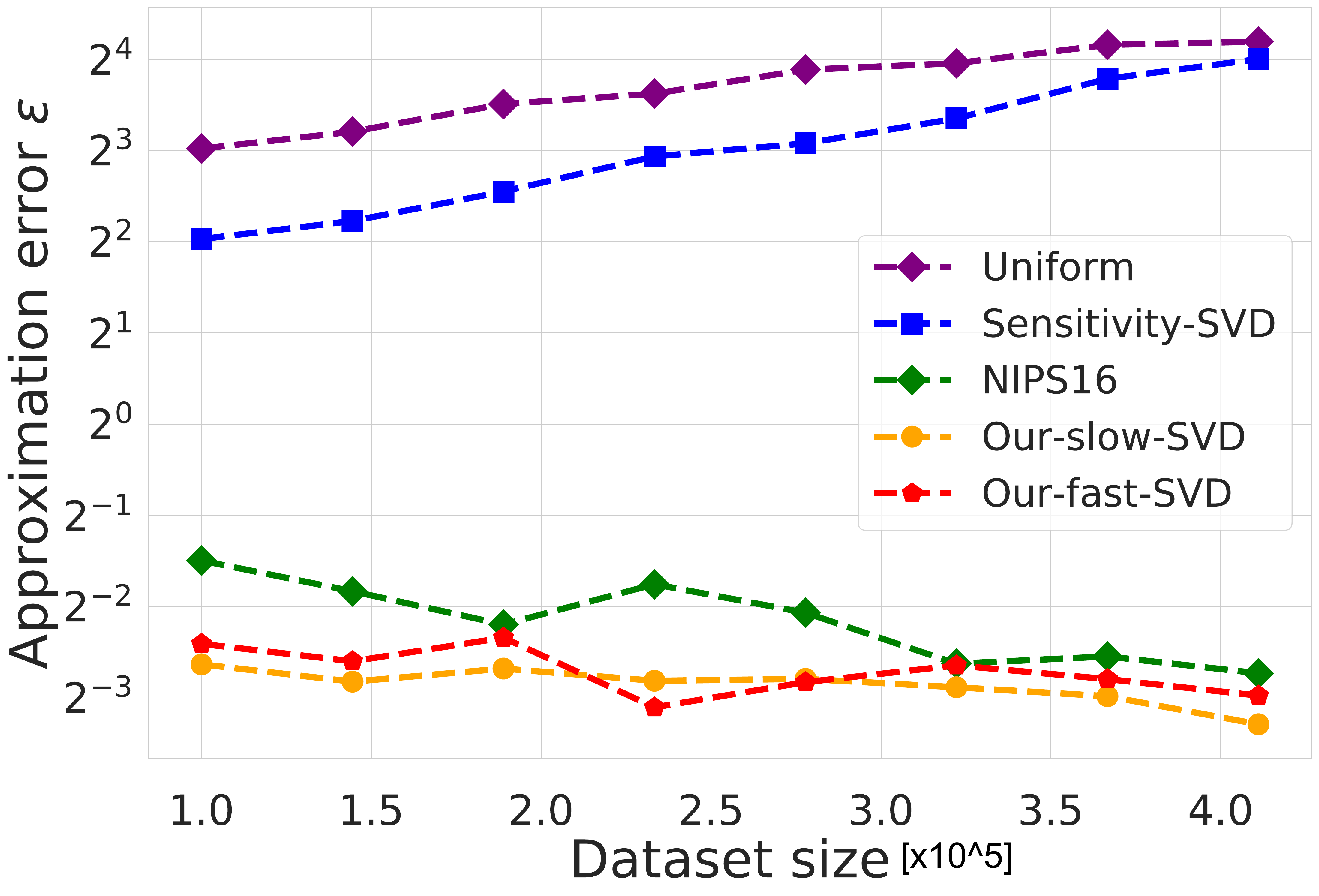}
        \caption{$k=40$, Dataset (iii).}
        \label{fig:TwitterSizeErr}
	\end{subfigure}
    \begin{subfigure}[t]{\s\textwidth}
		\centering		
		\includegraphics[width = \textwidth]{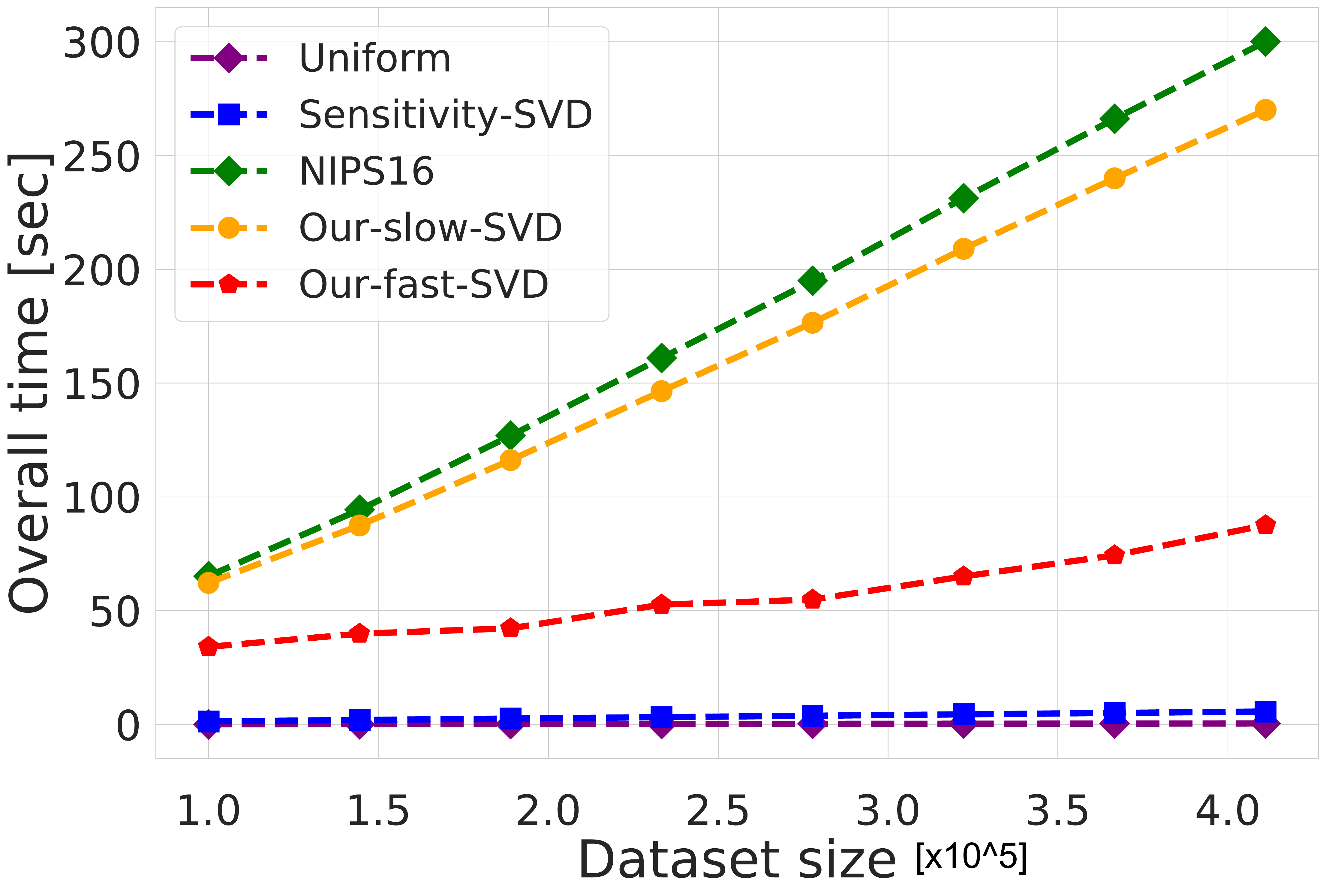}
        \caption{$k=40$, Dataset (iii).}
        \label{fig:TwitterSizeTime}
	\end{subfigure}
    \caption{Experimental results, we used Dataset (iii) in the last $8$ graphs. The $x$ axis in Fig.~\ref{fig:triperr}--\ref{fig:twitter60time} is the size of the subset, while in Fig.~\ref{fig:TwitterSizeErr}--\ref{fig:TwitterSizeTime} is the size of the dataset which we compress to subsample of size $150$.}
\end{figure*}

\section{Conclusions and Future Work} \label{sec:FW}
This paper generalizes the definitions of $\eps$-sample, sensitivities and coreset from the worst case error over every query to smooth (average) $\ell_2$ error. It also suggest deterministic and randomized algorithms for computing them.
Open problems include generalizing these results for other types of norms, or other functions such as M-estimators that are robust to outliers.  We hope that the source code and the promising experimental results would encourage also practitioners to use these new types of approximations.
Normalization via this new sensitivity type reduced the bounds on the number of iterations of the Frank-Wolfe algorithm by orders of magnitude. We believe that it can be used more generally for provably faster convex optimization, independently of coresets or $\eps$-samples. We leave this for future research.


\bibliographystyle{alpha}
\bibliography{references}

\clearpage
\appendix

\section{Competing Algorithms}\label{sec:algcomp}
\textbf{Algorithms. }
Throughout our experiments, we have used the following algorithms:

\textbf{(i) }\uniform: Uniform random sample of the input $Q$, which requires sublinear time to compute.

\textbf{(ii) }\sensonemean: Random sampling based on the ``sensitivity'' for the vector summarization problem~\cite{tremblay2019determinantal}. Sensitivity sampling is a widely known technique~\cite{DBLP:journals/corr/BravermanFL16}, which guarantees that a subsample of sufficient size approximates the input well. Here, the sensitivity of a point $q \in Q$ is $\frac{1}{n} + \frac{\norm{q}^2}{\sum_{q'\in Q}\norm{q'}^2}$. This algorithm requires $O(nd)$ time.

\textbf{(iii) }\icmlfeldman: The coreset construction algorithm from~\cite{feldman2017coresets} (see Algorithm~$2$ there), which runs in $O(nd/\eps)$ time.

\textbf{(iv) }\ourrand: Our coreset construction from Lemma~\ref{prob:onemean}, which requires $O\left(d\log{(\frac{1}{\delta})}^2 + \frac{d\log{(\frac{1}{\delta})}}{\eps}  \right)$ time.

\textbf{(v) }\slowonemean: Our coreset construction from Corollary~\ref{weak-coreset-theorem}, which requires $O(nd/\eps)$ time.

\textbf{(vi) }\fastonemean: Our coreset construction from Corollary~\ref{weak-coreset-cor}, which requires $O(nd + d\log(n)^2/\eps^2)$ time.

\textbf{(vii) }\senssvd: Similar to \sensonemean{} above, however, now the sensitivity is computed by projecting the rows of the input matrix $A$ on the optimal $k$-subspace (or an approximation of it) that minimizes its sum of squared distances to the rows of $A$, and then computing the sensitivity of each row $i$ in the projected matrix $A'$ as $\norm{u_i}^2$, where $u_i$ is the $i$th row the matrix $U$ from the SVD of $A'=UDV^T$; see~\cite{varadarajan2012sensitivity}. This takes $O(ndk)$ time.

\textbf{(viii) }\nipsfeldman: The coreset construction algorithm from~\cite{feldman2016dimensionality} (see Algorithm~$2$ there) which requires $O(nd^2k^2/\eps^2)$ time.

\textbf{(ix) }\slowsvd{}: Corollary~\ref{dim-cor} offers a coreset construction for SVD using Algorithm~\ref{svdccore}, which utilizes Algorithm~\ref{alg:strong}. However, Algorithm~\ref{alg:strong} either utilizes Algorithm~\ref{alg:frankwolf} (see Theorem~\ref{weak-coreset-theorem}) or Algorithm~\ref{algboost} (see Theorem~\ref{weak-coreset-cor}). \slowsvd{} applies the former option, which requires $O(nd^2k^2/\eps^2))$ time.

\textbf{(x) }\fastsvd: Corollary~\ref{dim-cor} offers a coreset construction for SVD using Algorithm~\ref{svdccore}, which utilizes Algorithm~\ref{alg:strong}. However, Algorithm~\ref{alg:strong} either utilizes Algorithm~\ref{alg:frankwolf} (see Theorem~\ref{weak-coreset-theorem}) or Algorithm~\ref{algboost} (see Theorem~\ref{weak-coreset-cor}). \fastsvd{} uses the latter option, which requires $O(nd^2 + d^2\log(n)^2k^4/\eps^4)$ time.

\section{Problem Reduction for Vector Summarization $\eps$-Coresets}

\begin{definition} [Normalized weighted set]\label{defNormalize}
A \emph{normalized weighted set} is a weighted set $(P,w)$ where $P=\br{p_1,\cdots,p_n} \subseteq \REAL^d$ and $w=(w_1,\cdots,w_n)^T\in\REAL^n$ satisfy the following properties:
\begin{enumerate}[label=(\alph*)]
\item Weights sum to one: $\smi w_i = 1$, \label{a}
\item The weighted sum is the origin: $\smi w_ip_i=\mathbf{0}$, and\label{b}
\item Unit variance: $\smi w_i\norm{p_i}^2=1$. \label{c}
\end{enumerate}
\end{definition}

\subsection{Reduction to normalized weighted set}
In this section, we argue that in order to compute a vector summarization $\eps$-coreset for an input weighted set $(Q,m)$, it suffices to compute a vector summarization $\eps$-coreset for its corresponding normalized (and much simpler) weighted set $(P,w)$ as in Definition~\ref{defNormalize}; see Corollary~\ref{cor:reduction}.


\observation \label{to_0_mean} Let $Q=\{q_1,\cdots,q_n\}$ be a set of $n\geq2$ points in $\REAL^d$, $m\in(0,\infty)^n$, $w \in(0,1]^n$ be a distribution vector such that $w=\frac{m}{\norm{m}_1}$, 
 ${\mu=\smi{w_i q_i}}$ and $\sigma=\sqrt{\smi w_i\norm{q_i - \mu}^2 }$.
Let $P=\br{p_1,\cdots,p_n}$ be a set of $n$ points in $\REAL^d$, such that for every $j\in [n]$ we have $p_j=\frac{q_j - \mu}{\sigma}$. Then, $(P,w)$ is the corresponding normalized weighted set of $(Q,m)$, i.e., (i)-(iii) hold as follows:\label{obs1}
\begin{enumerate}[label=(\roman*)]
\item $\smi w_i = 1$, \label{sumw_it}
\item $\smi w_ip_i=\mathbf{0}$, and \label{first_it}
\item $\smi w_i\norm{p_i}^2=1$. \label{sec_it}
\end{enumerate}

\begin{proof}
\begin{align*}
&{\textit{~\ref{sumw_it} }} \smi w_i = 1\text{ immediately holds by the definition of $w$.}\\
&{\textit{~\ref{first_it} }}
\sum_{i=1}^n w_ip_i =  \sum_{i=1}^n w_i \cdot \frac{q_i - \mu} {\sigma} = \frac{1}{\sigma} \left( \sum_{i=1}^n w_iq_i -  \sum_{i=1}^nw_i\mu \right) = \frac{1}{\sigma} \left( \mu -  \sum_{i=1}^nw_i \mu   \right) = \frac{1}{\sigma}  \mu \left(1  -  \sum_{i=1}^nw_i \right) = 0,
\intertext{where the first equality holds by the definition of $p_i$, the third holds by the definition of $\mu$, and the last is since $w$ is a distribution vector.}
&{\textit{~\ref{sec_it} }}
\smi w_i\norm{p_i}^2 = \smi w_i\norm{ \frac{q_i - \mu}{\sigma} }^2=\frac{1}{\sigma^2}\smi w_i\norm{{q_i - \mu}}^2 =\frac{\smi w_i\norm{{q_i - \mu}}^2}{\smi w_i\norm{q_i - \mu}^2} =1,
\end{align*}
where the first and third equality hold by the definition of $p_i$ and $\sigma$, respectively.
\end{proof}

\begin{corollary} \label{cor:reduction}
Let $(Q,m)$ be a weighted set, and let $(P,w)$ be its corresponding normalized weighted set as computed in Observation~\ref{to_0_mean}.
Let $(P,u)$ be a vector summarization $\eps$-coreset for $(P,w)$ and let $u' = \norm{m}_1 \cdot u$. Then $(Q,u')$ is a vector summarization $\eps$-coreset for $(Q,m)$.
\end{corollary}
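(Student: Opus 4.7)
The plan is to unwind the definitions and exploit the fact that the normalization in Observation~\ref{to_0_mean} is an affine transformation $p_i = (q_i-\mu)/\sigma$, which commutes nicely with weighted averages. The only two quantities that matter for the coreset guarantee are the weighted mean and the variance, and both transform predictably under this normalization.

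First I would observe that since $u' = \norm{m}_1 \cdot u$, the \emph{normalized} coreset weights are unchanged: $u'_i/\norm{u'}_1 = u_i/\norm{u}_1$ for every $i$. Thus the weighted mean of $(Q,u')$ is $\tilde{\mu}_Q := \sum_i (u_i/\norm{u}_1)\, q_i$, and similarly the weighted mean of $(Q,m)$ equals $\mu = \sum_i w_i q_i$ with $w_i = m_i/\norm{m}_1$, matching the notation of Observation~\ref{to_0_mean}. Likewise, the variance of $(Q,m)$ is $\sigma^2 = \sum_i w_i \norm{q_i-\mu}^2$.

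Next, I would substitute $q_i = \sigma p_i + \mu$ into the formula for $\tilde{\mu}_Q$, obtaining
\begin{equation*}
\tilde{\mu}_Q \;=\; \sum_{i=1}^n \frac{u_i}{\norm{u}_1}(\sigma p_i + \mu) \;=\; \sigma \tilde{\mu}_P + \mu,
\end{equation*}
where $\tilde{\mu}_P := \sum_i (u_i/\norm{u}_1) p_i$ is the weighted mean of $(P,u)$, and I used that $\sum_i u_i/\norm{u}_1 = 1$. Rearranging gives $\tilde{\mu}_Q - \mu = \sigma \tilde{\mu}_P$, hence $\norm{\tilde{\mu}_Q - \mu}^2 = \sigma^2 \norm{\tilde{\mu}_P}^2$.

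Finally, I would apply the hypothesis that $(P,u)$ is a vector summarization $\eps$-coreset for $(P,w)$. By Observation~\ref{to_0_mean}, the weighted mean of $(P,w)$ is $\mathbf{0}$ and its variance is $1$, so the $\eps$-coreset condition in Definition~\ref{def:weakCoreset} reduces to $\norm{\tilde{\mu}_P - \mathbf{0}}^2 \leq \eps \cdot 1$, i.e.\ $\norm{\tilde{\mu}_P}^2 \leq \eps$. Combining with the previous display yields $\norm{\tilde{\mu}_Q - \mu}^2 \leq \eps \sigma^2$, which is exactly the $\eps$-coreset guarantee for $(Q,u')$ with respect to $(Q,m)$. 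There is no real obstacle here; the proof is essentially a direct bookkeeping exercise, and the only thing to be careful about is that the scaling factor $\norm{m}_1$ cancels in $u'_i/\norm{u'}_1$ so that affine compatibility between $P$ and $Q$ carries the error estimate through cleanly.
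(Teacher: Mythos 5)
Your proof is correct and follows essentially the same route as the paper: normalize the coreset weights (noting $u'_i/\norm{u'}_1 = u_i/\norm{u}_1$), substitute the affine relation $q_i = \sigma p_i + \mu$, cancel the $\mu$ term using that both weight vectors sum to one, and then invoke the coreset hypothesis for $(P,u)$ together with the facts that $(P,w)$ has zero mean and unit variance. The only cosmetic difference is that you phrase the final step via $\norm{\tilde{\mu}_Q - \mu}^2 = \sigma^2\norm{\tilde{\mu}_P}^2$, while the paper keeps the whole expression as a single chain of equalities; the content is identical.
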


\begin{proof}
Put $x \in \REAL^d$ and let $y=\frac{x - \mu}{\sigma}$. Now, for every $j\in [n]$, we have that
\begin{align}\label{keyobs}
\norm{q_j - x}^2
=\norm{ \sigma p_j + \mu- (\sigma y + \mu) }^2
= \norm{ \sigma p_j  - \sigma y }^2
= \sigma^2 ||p_j-y||^2,
\end{align}
{where the first equality is by the definition of $y$ and $p_j$.}

Let $(P,u)$ be a vector summarization $\eps$-coreset for $(P,w)$. We prove that $(Q,u')$ is a vector summarization $\eps$-coreset for $(Q,m)$. We observe the following

\begin{equation}\label{reductiontop1}
\begin{split}
\norm{\smi \frac{m_i}{\norm{m}_1} q_i - \smi \frac{u'_i}{\norm{u'}_1} q_i}^2 &= \norm{\smi w_iq_i - \smi \frac{u_i}{\norm{u}_1} q_i}^2 = \norm{\smi (w_i -  \frac{u_i}{\norm{u}_1}) (p_i \sigma + \mu)}^2  \\
& = \norm{\smi (w_i -  \frac{u_i}{\norm{u}_1}) p_i \sigma + \mu \smi (w_i -\frac{u_i}{\norm{u}_1}))}^2  =  \norm{\smi (w_i -  \frac{u_i}{\norm{u}_1}) (p_i \sigma)}^2\\
&\leq \eps \sigma^2
\end{split}
\end{equation}
where the first equality holds since $w=\frac{m}{\norm{m}_1}$ and $\frac{u'}{\norm{u'}_1} = \frac{mu}{m\norm{u}_1} = \frac{u_i}{\norm{u}_1}$, the second holds by~\eqref{keyobs}, and the last inequality holds since $(P,u)$ is a vector summarization $\eps$-coreset for $(P,w)$
\end{proof}

\subsection{Vector Summarization Problem Reduction}
Given a normalized weighted set $(P,w)$ as in Definition~\ref{defNormalize}, in the following lemma we prove that a weighted set $(P,u)$ is a vector summarization $\eps$-coreset for $(P,w)$ if and only if the squared $\ell_2$ norm of the weighted mean of $(P,u)$ is smaller than $\eps$.

\begin{lemma}\label{weak_coreset_lema}
Let $(P,w)$ be a normalized weighted set of $n$ points in $\REAL^d$, $\eps\in (0,1)$, and $u\in \REAL^n$ be a weight vector.
Let $\displaystyle{\overline{p} =\smi \frac{w_i}{\norm{w}_1} p_i = \smi w_i p_i}$, $\displaystyle{\overline{s} = \smi \frac{u_i}{\norm{u}_1}p_i}$, and $\sigma^2 = \norm{p_i -\overline{p} }^2$. Then, $(P,u)$ is a vector summarization $\eps$-coreset for $(P,w)$, i.e.,
\[
\norm{\overline{p} -\overline{s} }^2 \leq \eps \smi w_i\sigma^2
\]
if and only if 	
\[
\norm{\overline{s}}^2  \leq\varepsilon.
\]\label{first}
\end{lemma}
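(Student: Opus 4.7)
The plan is to observe that this lemma is essentially an immediate unpacking of the three normalization properties of Definition~\ref{defNormalize}, once we carefully track what each quantity reduces to. The key point is that normalization zeros out $\overline{p}$ and rescales the variance to one, so the general coreset condition $\|\overline{p}-\overline{s}\|^2\le \eps\sigma^2$ of Definition~\ref{def:weakCoreset} collapses to $\|\overline{s}\|^2\le \eps$.

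First I would observe that since $w$ is a distribution (property~\ref{a} of Definition~\ref{defNormalize}), we have $\|w\|_1=1$, so $\overline{p}=\sum_i \frac{w_i}{\|w\|_1} p_i = \sum_i w_i p_i$. By property~\ref{b}, this sum is the origin $\mathbf{0}$. Hence $\overline{p}=\mathbf{0}$ and the left-hand side $\|\overline{p}-\overline{s}\|^2$ simplifies to $\|\overline{s}\|^2$.

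Next I would compute the right-hand side. In the notation of Definition~\ref{def:weakCoreset} the variance term is $\sigma^2=\sum_i w_i\|p_i-\overline{p}\|^2$ (the lemma's statement of $\sigma^2$ is shorthand for this weighted sum). Substituting $\overline{p}=\mathbf{0}$ and using property~\ref{c} of the normalized weighted set yields
\begin{equation*}
\sigma^2 \;=\; \sum_{i=1}^n w_i \|p_i\|^2 \;=\; 1.
\end{equation*}
Combining the two simplifications, the defining inequality $\|\overline{p}-\overline{s}\|^2\le \eps\sigma^2$ of a vector summarization $\eps$-coreset is exactly $\|\overline{s}\|^2\le \eps$, proving both directions of the ``if and only if''.

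There is no real obstacle here; the entire content of the lemma is the bookkeeping above. The only mildly delicate point is to reconcile the (slightly abbreviated) expression ``$\sigma^2=\|p_i-\overline{p}\|^2$'' in the statement with the weighted-sum definition from Definition~\ref{def:weakCoreset}, and to note that the $\sum_i w_i\sigma^2$ factor on the right-hand side of the coreset condition equals $\sigma^2$ because $\sum_i w_i=1$; both are trivial once made explicit.
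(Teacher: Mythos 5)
Your proof is correct and takes the same route as the paper: reduce the coreset inequality to $\|\overline{s}\|^2\le\eps$ by observing that normalization forces $\overline{p}=\mathbf{0}$ and makes the weighted variance equal to $1$. The paper's own proof is a one-line version of exactly this bookkeeping, so there is no meaningful difference in approach.
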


\begin{proof}
The proof holds since $(P,w)$ is a normalized wighted set, i.e., $\overline{p} = 0$, and $\sigma^2 = 1$.
\end{proof}

\section{Frank-Wolfe Theorem} \label{sup:frankwolfpapernotus}


We consider the measure $C_f$ defined in~\cite{clarkson2010coresets}; see equality~$(9)$ in Section~{$2.2$}.
For a simplex $S$ and concave function $f$, the quantity $C_f$ is defined as
\begin{align}
C_f := \sup \frac{1}{\alpha^2}( f(x) + (y-x)^T\nabla f(x)-f(y)),\label{C_F}
\end{align}
where the supremum is over every $x$ and $z$ in $S$, and over every $\alpha$ so that  $y =x + \alpha(z-x)$ is also in $S$.  The set of such $\alpha$ includes $[0, 1]$, but $\alpha$ can also be negative.

\begin{theorem}[\textbf{Theorem~$2.2$ from~\cite{clarkson2010coresets}}] \label{theorem22cla}
For simplex $S$ and concave function $f$, Algorithm~\ref{alg:frankwolf} (Algorithm~{1.1} from~\cite{clarkson2010coresets}) finds a point $x_{(k)}$ on a $k$-dimensional face of $S$ such that
\[
\frac{f(x*) - f(x_{(k)})}{4C_f} \leq \frac{1}{k + 3},
\]
for $k > 0$, where $f(x*)$ is the optimal value of $f$.

\end{theorem}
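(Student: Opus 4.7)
\textbf{Proof proposal for Theorem~\ref{theorem22cla}.}
The plan is to follow the standard Frank--Wolfe convergence template: bound the per-iteration improvement in $f$ from below in terms of the current suboptimality gap, then turn the resulting recurrence into the claimed $1/(k+3)$ rate. Throughout I write $h_k := f(x^*) - f(x_{(k)})$ for the suboptimality gap and
\[
g_k := (e_{(i')} - x_{(k)})^T \nabla f(x_{(k)})
\]
for the linearized (Frank--Wolfe) gap, where $i' = \argmax_i \nabla f(x_{(k)})_i$ is the index chosen in the algorithm. Since $x^*$ lies in the simplex $S$, it is a convex combination of vertices, so the maximizing direction among vertices of $S$ dominates the direction to $x^*$; concavity of $f$ then yields
\[
h_k \;\le\; (x^* - x_{(k)})^T\nabla f(x_{(k)}) \;\le\; g_k.
\]

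Next I would invoke the definition~\eqref{C_F} of $C_f$ with $x = x_{(k)}$, $z = e_{(i')}$, and $y = x_{(k)} + \alpha(e_{(i')}-x_{(k)})$ for $\alpha \in [0,1]$. Rearranging the inequality in~\eqref{C_F} gives the Frank--Wolfe descent lemma
\[
f\bigl(x_{(k)} + \alpha(e_{(i')} - x_{(k)})\bigr) \;\ge\; f(x_{(k)}) + \alpha\, g_k - \alpha^2 C_f.
\]
Because the algorithm picks $\alpha'$ to maximize $f$ along this segment, $f(x_{(k+1)})$ is at least the right-hand side at the optimal $\alpha^\star = \min\{1,\, g_k/(2C_f)\}$. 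A short case split (the ``interior'' case $g_k \le 2C_f$ giving an improvement of $g_k^2/(4C_f)$, the ``boundary'' case $g_k > 2C_f$ giving at least $g_k/2 \ge g_k^2/(4C_f)$) yields the uniform one-step recursion
\[
h_k - h_{k+1} \;\ge\; \frac{g_k^2}{4C_f} \;\ge\; \frac{h_k^2}{4C_f},
\]
where the last inequality used $g_k \ge h_k \ge 0$.

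Finally I would normalize by setting $\eta_k := h_k / (4C_f)$, so that the recursion becomes $\eta_{k+1} \le \eta_k - \eta_k^2 = \eta_k(1-\eta_k)$. A standard induction then gives $\eta_k \le 1/(k+3)$: if $\eta_k \le 1/(k+3)$, then since $t \mapsto t(1-t)$ is increasing on $[0,1/2]$, one checks algebraically that $\eta_k(1-\eta_k) \le 1/(k+4)$ whenever $\eta_k \le 1/(k+3)$. The base case $k=1$ is handled by the initialization: since $x_{(0)}$ is chosen as the vertex of $S$ with the largest $f$-value, one step of the update already guarantees $\eta_1 \le 1/4$, yielding the required start for the induction.

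The main obstacle I anticipate is not the per-step improvement bound (which is routine once the $C_f$ definition is unpacked), but rather anchoring the induction so that the constant $3$ in $k+3$ is correct. This forces a careful treatment of the very first iterate: one must exploit the fact that $x_{(0)}$ is the best vertex (not an arbitrary starting point) so that $h_0 \le 2C_f$ automatically, which in turn gives $\eta_0 \le 1/2$ and hence $\eta_1 \le 1/4$. With that single observation in hand, the rest of the induction is just the monotonicity of $t \mapsto t(1-t)$ on $[0,1/2]$.
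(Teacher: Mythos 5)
This statement is imported verbatim from Clarkson's paper (it is literally labeled ``Theorem~2.2 from~\cite{clarkson2010coresets}'') and the paper supplies no proof of it --- it is used as a black box in Section~\ref{sup:frankwolf}. So there is no in-paper argument to compare against; what you have written is a reconstruction of the standard curvature-based Frank--Wolfe analysis, which is indeed the route Clarkson takes (per-step improvement quantified by $C_f$, then an induction on the normalized gap $\eta_k = h_k/(4C_f)$ using the monotonicity of $t\mapsto t(1-t)$ on $[0,1/2]$). Your overall architecture, including the observation that the best-vertex initialization is what anchors the constant $3$ in $k+3$, is the right one.

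One step fails as written. In your ``boundary'' case $g_k > 2C_f$ you assert the improvement is ``at least $g_k/2 \ge g_k^2/(4C_f)$,'' but $g_k/2 \ge g_k^2/(4C_f)$ is equivalent to $g_k \le 2C_f$, which is exactly the negation of the case hypothesis; the inequality points the wrong way. The conclusion is still salvageable: in that case the line search with $\alpha=1$ gives improvement at least $g_k - C_f \ge g_k/2 \ge h_k/2$, and $h_k/2 \ge h_k^2/(4C_f)$ holds provided $h_k \le 2C_f$ --- which you must establish separately, e.g.\ from the initialization bound $h_0 \le C_f$ (taking $x = x_{(0)}$, $z = e_{(i')}$, $\alpha = 1$ in~\eqref{C_F} and using that $x_{(0)}$ is the $f$-maximizing vertex) together with monotone decrease of $h_k$. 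So the recursion $h_{k+1} \le h_k - h_k^2/(4C_f)$ is correct, but only after you route the boundary case through the a priori bound $h_k \le 2C_f$ rather than through the false comparison of $g_k/2$ with $g_k^2/(4C_f)$. With that repair the induction you describe goes through, since $\tfrac{k+2}{(k+3)^2} \le \tfrac{1}{k+4}$ reduces to $(k+2)(k+4) \le (k+3)^2$.
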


\section{Proof of Theorem~\ref{thm1}} \label{sup:frankwolf}

\begin{proof}
Let $C_f$ be defined for $f$ and $S$ as in~\eqref{C_F},
and let $f(x^*)$ be the maximum value of $f$ in $S$. Based on Theorem~\ref{theorem22cla} we have:
\begin{enumerate}
\item  $\tilde{u}$ is a point on a $\ceil{\frac{8}{\eps}}$-dimensional face of $S$,  i.e., $\norm{\tilde{u}}_0 \leq  \ceil{\frac{8}{\eps}}$, $u\in S\subset [0,1]^n$ and  $\smi \tilde{u_i}=1$. Hence, \eqref{prop1neded} is satisfied.
    \item  $\frac{ f(x^*) - f(x_{(k)}) }{4C_f} \leq \frac{1}{k+3},$ for every $k\in \br{0,\cdots, \ceil{\frac{8}{\eps}}}.$ \label{result of frank wolf}
\end{enumerate}

Since $f(x) \leq 0$ for every $x\in S$, we have that,
$$f(x^*) = f(w) = -\norm{\smi (w_i-w_i)p_i}^2 =0.$$

Define $A$ to be the matrix of $d\times n$ such that the $i$-th column of $A$ is the $i$-th point in $P$, and let $\mu = \smi w_ip_i$. We get that
\begin{equation}
\begin{split}
f(x) &=-\norm{\smi (w_i-x_i)p_i}^2 = -\norm{\mu -\smi x_ip_i}^2 = -\norm{\mu}^2 +2\mu^T(\smi x_ip_i) -\norm{\smi x_ip_i}^2 \\
&=  -\norm{\mu}^2 +2\mu^TAx - \norm{Ax}^2 =-\norm{\mu}^2 +2x^T  A^T\mu- x^TA^TAx,
\end{split}
\end{equation}
where the second equality holds by the definition of $\mu$, and the fourth equality holds by since $\smi x_i p_i = Ax$ for every $x \in \REAL^n$.

At section~$2.2.$ in~\cite{clarkson2010coresets}, it was shown that for any quadratic function $f':\REAL^n \to \REAL$ that is defined as
\begin{equation}\label{quadraticfunc}
    f'(x)= a+x^Tb + x^TMx,
\end{equation}
where $M$ is a negative semidefinite $n \times n$ matrix, $b\in \REAL^n$ is a vector, and $a\in \REAL$, we have that $C_{f'}\leq diam(A'S)^2$, where $A'\in \REAL^{d\times n}$ is a matrix that satisfies $M=A'^TA'$; see equality~$(12)$ at~\cite{clarkson2010coresets}.

Hence, plugging $a=-\norm{\mu}^2$, $b=2A^T\mu$, and $M=A^TA$ in~\eqref{quadraticfunc} yields that for the function $f$ we have
$C_f\leq diam(AS)^2$, and
\begin{align*}
 diam(AS)^2 =\sup_{a,b \in AS}\norm{a-b}_2^2=\sup_{x,y \in S}\norm{Ax-Ay}^2_2.
\end{align*}
Observe that $x$ and $y$ are distribution vectors, thus
\begin{align*}
\sup_{x,y \in S}\norm{Ax-Ay}_2^2=\sup_{i,j} \norm{p_i-p_j}^2_2.
 \end{align*}
Since $\norm{p_i}\leq1$ for each $i\in [n]$, we have that
\begin{align*}
\sup_{i,j} \norm{p_i-p_j}^2_2 \leq 2.
\end{align*}
By substituting  $C_f\leq 2$, $k=8/{\eps}$, $f(x_{(k)})=f(\tilde{u})=-\norm{\smi(w_i-\tilde{u_i})p_i}^2$,  and $f(x^*)=0$ in~\eqref{result of frank wolf} we get that,
\begin{align}
\frac{ \norm{\smi(w_i-\tilde{u_i})p_i}^2 } {\co} \leq \frac{1}{8/{\eps}+3}.
\end{align}
Multiplying both sides of the inequality by $\co$ and rearranging prove~\eqref{prop2neded} as
\begin{align}
\norm{\smi(w_i-\tilde{u_i})p_i}^2  \leq \frac{\co}{8/{\eps}+3}\leq \frac{\co}{8/{\eps}} =\eps.
\end{align}

\paragraph{Running time:} We have $K=\ceil{\frac{8}{\eps}}$ iterations in Algorithm~\ref{alg:frankwolf}, where each iteration takes $O(nd)$ time, since the gradient of $f$ based on the vector $x=(x_1,\cdots,x_n)^T\in S$ is $-2 A^T \sum\limits_{i=1}^n (w_i - x_i) p_i$. This term is the multiplication between an a matrix in $\REAL^{n \times d}$ and a vector in $\REAL^d$, which takes $O(nd)$ time. Hence, the running time of the Algorithm is $O(\frac{nd}{\eps})$.
\end{proof}

\section{Proof of Theorem~\ref{weak-coreset-theorem}} \label{sup:weakcore}

\begin{proof}
Let $(P,w)$ be the normalized weighted set that is computed at Lines~\ref{wcomp}--\ref{varcomp} of Algorithm~\ref{alg:strong} where $P=\br{p_1,\cdots,p_n}$, and let $\tilde{u} = \frac{u}{\norm{m}_1}$. We show that $(P,\tilde{u})$ is a vector summarization $\eps$-coreset for $(P,w)$, then by Corrolary~\ref{cor:reduction} we get that $(Q,u)$ is a vector summarization $\eps$-coreset for $(Q,m)$.
For every $i\in[n]$ let $w'_i$,$u'_i$,$u_i$ and $p'_i$ be defined as in Algorithm~\ref{alg:strong}, and let $\eps' = \frac{\eps}{16}$. First, by the definition of $u'$ we have that
\begin{align}
\norm{u'}_0 \leq \frac{\co}{\eps'} = \frac{128}{\eps} \label{used1},
\end{align}
and since $u_i = \norm{m}_1 \cdot \frac{2 u_i'}{\norm{(p^T_i\mid 1)}^2}$ for every $i\in [n]$, we get that
\begin{align}
\norm{u}_0 \leq \frac{128}{\eps} \label{used1}.
\end{align}

We also have by Theorem~\ref{thm1} that
\begin{align}
&4\eps' \geq 4\norm{\smi (w_i' -u_i')p'}^2 =4\norm {\smi \frac{ w_i\norm{(p^T_i\mid 1)}^2 - \frac{u_i}{\norm{m}_1}\norm{(p^T_i\mid 1)}^2 } {2}\cdot \frac{(p^T_i\mid 1)^T}{\norm{(p^T_i\mid 1)}^2}}^2\label{by definition of u'} \\
&=\norm { \smi ( w_i  -\tilde{u_i}) \cdot (p^T_i\mid 1)^T}^2 = \norm { \bigg(  \smi (w_i  -\tilde{u_i}) \cdot p^T_i \mid  \smi  ( w_i  -\tilde{u_i} ) \bigg )^T}^2 \label{bforefinish} \\
&\geq \norm {  \smi (w_i  -\tilde{u_i})\cdot p_i }^2, \label{4eps bound}
\end{align}
where the first derivative is by the definition of $u'$ in Algorithm~\ref{alg:strong} at line~\ref{define u'}, the second holds by the definition of $p',w'$ and $u$ at Lines~\ref{p'},~\ref{w'}, and~\ref{u} of the algorithm, the third holds since $\tilde{u}=\frac{u}{\norm{m}_1}$, and the last inequality holds since $\norm{(x\mid y)}^2 \geq x^2$ for every $x \in \REAL^d$ and $y \in \REAL$. Combining the fact that $\smi w_i p_i =0 $ with~\eqref{4eps bound} yields that
\begin{align}
4\eps' \geq \norm {\smi \tilde{u_i}  p_i }^2 .\label{weak-coreset2}
\end{align}

By~\eqref{bforefinish} and since $w$ is a distribution vector we also have that
\begin{align}
4{\eps'} \geq  \bigg| \smi  ( w_i  -\tilde{u_i} ) \bigg|^2 = \abs{1-\smi \tilde{u_i}}^2, \text{which implies that} \quad 2\eps' \geq \abs{1-\smi \tilde{u_i}} \label{bounding 1-sum ui22}
\end{align}
Combining~\eqref{bounding 1-sum ui22} and~\eqref{weak-coreset2} yields that:
\begin{align}
    \norm {\frac{\smi \tilde{u_i}  p_i}{\smi \tilde{u_i}} }^2 \leq \frac{4\eps'}{(1-\eps')^2}\leq 16\eps'=\eps,\label{weak-coreset}
\end{align}
where that second inequality holds since $\eps' = \eps/16 \leq 1/2$.

By Lemma~\ref{weak_coreset_lema},~Corollary~\ref{cor:reduction}, and~\eqref{weak-coreset}, Theorem~\ref{weak-coreset-theorem} holds as
\[
 \norm{\smi \frac{u_i}{\norm{u}_1}q_i - \smi \frac{m_i}{\norm{m}_1} q_i}_2^2 = \norm{\mu_u - \mu_m}_2^2 \leq \eps\sigma^2_m.
\]
\end{proof}
\section{Proof of Theorem~\ref{theorem:fastcoreset}}\label{sup:boost}

\begin{proof}
We use the notation and variable names as defined in Algorithm~\ref{algboost}.

First, we assume that $w(p) >0$ for every $p\in P$, otherwise we remove all the points in $P$ which have zero weight, since they do not contribute to the weighted sum.
Identify the input set $P =\br{p_1,\cdots,p_n}$ and the set $C$ that is computed at Line~\ref{compC} of Algorithm~\ref{algboost} as $C = \br{c_1,\cdots,c_{|C|}}$.
We will first prove that the weighted set $(C,u)$ that is computed in Lines~\ref{compC}--\ref{compW} at an arbitrary iteration satisfies:
\begin{enumerate}[label=(\alph*)]
    \item  $C \subseteq P$, \label{subs1}
    \item $\sum_{p\in C}$ u(p) =1, \label{1.5}
    \item  $\norm{ {\sum_{p\in P}w(p)\cdot p} - {\sum_{p\in C}u(p)\cdot p}}^2 \leq \frac{\eps }{\log{(n)}} $, and \label{22}
    \item $|C| \leq \ceil{\frac{\abs{P}}{2}}.$ \label{33}
\end{enumerate}

Let $(\tilde{\mu},\tilde{u})$ be the vector summarization $\frac{\eps}{\log(n)}$-coreset of $(\br{\mu_1,\cdots,\mu_k},w')$ that is computed during the execution of the current iteration at Line~\ref{compSlowCara}. 
Hence, by Theorem~\ref{thm1}
\begin{equation}\label{eqProps}
 \norm{{\sum_{\mu_i \in \tilde{\mu}} \tilde{u}(\mu_i) \mu_i} - {\sum_{i=1}^{k} w'(\mu_i) \cdot \mu_i}}^2 \leq \frac{\eps} {\log(n)}, \quad \tilde{\mu}\subseteq \br{\mu_1,\cdots,\mu_k} \text{ and } \quad |\tilde{\mu}| \leq \frac{8\cdot \log(n)}{\eps}.
\end{equation}

\paragraph{Proof of~\ref{subs1}. }
Property~\ref{subs1} is satisfied by Line~\ref{compC} as we have that $C \subseteq P$.

\paragraph{Proof of~\ref{1.5}. }
Property~\ref{1.5} is also satisfied since
\begin{equation} \label{sameWeights}
\begin{split}
\sum_{p\in C} u(p) &=
\sum_{\mu_i \in \tilde{\mu}} \sum_{p\in P_i}\frac{\tilde{u}(\mu_i)w(p)}{w'(\mu_i)} =
\sum_{\mu_i \in \tilde{\mu}}\frac{\tilde{u}(\mu_i)}{w'(\mu_i)}  \sum_{p\in P_i} {w(p)} \\
&= \sum_{\mu_i \in \tilde{\mu}}\frac{\tilde{u}(\mu_i)}{\sum_{p\in P_i} w(p)}  \sum_{p\in P_i} {w(p)}=
\sum_{\mu_i \in \tilde{\mu}}\tilde{u}(\mu_i) =1,
\end{split}
\end{equation}
where the first equality holds by the definition of $C$ at Line~\ref{compC} and $w(p)$ for every $p\in C$ at Line~\ref{compW}, and the third equality holds by the definition of $u'(\mu_i)$ for every $\mu_i \in \tilde{\mu}$ as in Line~\ref{compMuiWeight}.

\paragraph{Proof of~\ref{22}. }
By the definition of $w'$ and $\mu_i$, for every $i \in \br{1,\cdots,k}$
\begin{equation}\label{eqDefMui}
\begin{split}
{\sum_{i=1}^{k} w'(\mu_i)\cdot \mu_i} &=
{\sum_{i=1}^{k} w'(\mu_i)\cdot \left(\frac{1}{w'(\mu_i)} \cdot \sum_{p\in P_i}w(p)\cdot p\right)} \\&=
{\sum_{i=1}^{k} \sum_{p\in P_i} w(p)p }=
{\sum_{p\in P} w(p) p}.
\end{split}
\end{equation}

The weighted sum of $(C,u)$ is
\begin{equation} \label{sameWeightedSum}
\begin{split}
{\sum_{p\in C} u(p)p}
= \sum_{\mu_i \in \tilde{\mu}} \sum_{p\in P_i}\frac{\tilde{u}(\mu_i)w(p)}{w'(\mu_i)} \cdot  p
= \sum_{\mu_i \in \tilde{\mu}}\tilde{u}(\mu_i) \sum_{p\in P_i} \frac{w(p)}{w'(\mu_i)} p
= \sum_{\mu_i \in \tilde{\mu}} \tilde{u}(\mu_i) \mu_i,
\end{split}
\end{equation}
where the first equality holds by the definitions of $C$ and $w$, and the third equality holds by the definition of $\mu_i$ at Line~\ref{compMui}.

Plugging~\eqref{eqDefMui} and~\eqref{sameWeightedSum} in~\eqref{eqProps} satisfies~\eqref{22} as

\begin{equation} \label{donly}
\begin{split}
\norm{ {\sum_{p\in P}w(p)\cdot p} - {\sum_{p\in C}u(p)\cdot p}}^2 \leq \frac{\eps }{\log{(n)}}.
\end{split}
\end{equation}

\paragraph{Proof of~\eqref{33}. }
By~\eqref{eqProps} we have that $C$ contains at most $\frac{\log(n)}{\eps}$ clusters from $P$ and at most $|C| \leq \frac{\log(n)}{\eps} \cdot\ceil{\frac{n}{k}} $ points, and by plugging $k=\frac{2\log(n)}{\eps}$ we obtain that $|C| \leq \ceil{\frac{\abs{P}}{2}}$ as required.

We now prove~\ref{smalsubsetcp}--~\ref{runtimeofcorest}.
\paragraph{Proof of~\ref{smalsubsetcp}. }The first condition $\abs{C}\leq \co/\eps$ in~\ref{smalsubsetcp} is satisfied since at each iteration we reduce the data size by a factor of $2$, and we keep reducing until we reach the stopping condition, which is ${O(\frac{\log(n)}{\eps})}$ by Theorem~\ref{thm1} (since we require a $\frac{\eps}{\log(n)}$ error when we use Theorem~\ref{thm1}, i.e., we need coreset of size $O(\frac{\log(n)}{\eps})$).
Then, at Line~\ref{returnInput} when the if condition is satisfied (it should be, as explained) we finally use Theorem~\ref{thm1} again to obtain a coreset of size $\ceil{8/\eps}$ with $\eps$-error on the small data (that was of size $\frac{O(\log(n)}{\eps}$).

The second condition in~\ref{smalsubsetcp} is satisfies since at each iteration we either return such a pair $(C,u)$ at Line~\ref{recursiveCall}, we get by~\ref{1.5} that the sum of weight is always equal to $1$.

\paragraph{Proof of~\ref{epsweakcor}. }
By~\ref{33} we also get that we have at most $\log(n)$ recursive calls. Hence, by induction on~\eqref{2} we conclude that last computed set $(C,u)$ at Line~\ref{recursiveCall} satisfies~\ref{epsweakcor}
\[
\norm{ {\sum_{p\in P}w(p)\cdot p} - {\sum_{p\in C}w(p)\cdot p}}^2 \leq \log(n) \cdot \frac{\eps }{\log{(n)}} =\eps.
\]
At Line we return an $\eps$ coreset for the input weighted set $(P,w)$ that have reached the size of $(\frac{\log(n)}{\eps})$. Hence, the output of a the call satisfies
\[
\norm{ {\sum_{p\in P}w(p)\cdot p} - {\sum_{p\in C}w(p)\cdot p}}^2 \leq 2\eps.
\]

\paragraph{Proof of~\ref{runtimeofcorest}.}
As explained before, there are at most $log(n)$ recursive calls before the stopping condition at Line~\ref{stoprule} is met. At each iteration we  compute the set of means $\tilde{\mu}$, and compute a vector summarization $\left(\frac{\varepsilon}{\log{n}}\right)$-coreset for them. Hence, the time complexity of each iteration is $n'd + \Tfrank(k,d,\frac{\eps}{\log(n)})$ where $n'$ is the number of points in the current iteration, and $\Tfrank(k,d,\frac{\eps}{\log(n)})$ is the running time of Algorithm~\ref{alg:frankwolf} on $k$ points in $\REAL^d$ to obtain a $\frac{\eps}{\log(n)}$-coreset
 . Thus the total running of time the algorithm until the "If" condition at Line~\ref{ifififif} is satisfied is
\[
\sum_{i=1}^{\log(n)} \left(\frac{nd}{2^{i-1}}+ \Tfrank(k,d,\frac{\eps}{\log(n)})\right) \leq 2nd + \log(n) \cdot \Tfrank(k,d,\frac{\eps}{\log(n)}) \in O\left(nd+ \frac{kd}{\frac{\eps}{\log(n)}} \right),
\]
and plugging $k = \frac{2\log(n)}{\eps}$

and observing the the last compression at line~\ref{returnInput} is done on a data of size $O(\frac{\log(n)}{\eps})$ proves~\ref{runtimeofcorest} as the running time of Algorithm~\ref{boosterboost} is
\end{proof}
$$O\left(nd+ {\frac{\log(n)^2d}{\eps^2}} \right).$$

\section{Proof of Corollary~\ref{weak-coreset-cor}}

\begin{proof}
The corollary immediately holds by using Algorithm~\ref{alg:strong} with a small change. We change Line~\ref{define u'} in Algorithm~\ref{alg:strong} to use Algorithm~\ref{algboost} and Theorem~\ref{theorem:fastcoreset}, instead of Algorithm~\ref{alg:frankwolf} and Theorem~\ref{thm1}.
\end{proof}

\section{Proof of Lemma~\ref{prob:onemean}}

We first prove the following lemma:

\begin{lemma}  \label{weak_coreset_markov}
Let $P$ be a set of $n$ points in $\REAL^d$, $\mu = \frac{1}{n}\smip p$, and $\var^2=\frac{1}{n}\smip \norm{p - \mu}^2$.  Let $\eps,\delta \in (0,1)$, and let $S$ be a sample of $m = \frac{1}{\varepsilon\delta}$ points chosen i.i.d uniformly at random from $P$.
Then, with probability at least $1-\delta$ we have that
$$\norm{ \frac{1}{m}\smis p - \mu}^2 \leq  \eps \var^2.$$
\end{lemma}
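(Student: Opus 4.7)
}

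The plan is a textbook ``variance of the sample mean plus Markov's inequality'' argument, lifted from the scalar case to vectors in $\REAL^d$. Write $S=\{p^{(1)},\ldots,p^{(m)}\}$ where each $p^{(j)}$ is drawn i.i.d.\ uniformly from $P$, and let $\overline{S}=\tfrac{1}{m}\sum_{j=1}^{m} p^{(j)}$ denote the sample mean. I will treat $\|\overline{S}-\mu\|^2$ as a non-negative random variable and bound its expectation, then invoke Markov.

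First I would compute $\mathbb{E}[\overline{S}]=\mu$ by linearity, which is immediate from uniform sampling. The central calculation is to show that
\begin{equation*}
\mathbb{E}\!\left[\bigl\|\overline{S}-\mu\bigr\|^2\right]=\frac{\sigma^2}{m}.
\end{equation*}
Expanding the squared norm as an inner product and using independence of the samples, the cross terms $\mathbb{E}[(p^{(i)}-\mu)^\top(p^{(j)}-\mu)]$ with $i\neq j$ vanish because $\mathbb{E}[p^{(j)}-\mu]=0$, leaving only the diagonal terms $\frac{1}{m^2}\sum_{j=1}^{m}\mathbb{E}\|p^{(j)}-\mu\|^2 = \frac{1}{m^2}\cdot m\sigma^2 = \sigma^2/m$, where I used the definition of $\sigma^2$.

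Having this second-moment bound in hand, Markov's inequality applied to the non-negative random variable $\|\overline{S}-\mu\|^2$ gives
\begin{equation*}
\Pr\!\left[\bigl\|\overline{S}-\mu\bigr\|^2 \geq \eps\sigma^2\right] \;\leq\; \frac{\mathbb{E}[\|\overline{S}-\mu\|^2]}{\eps\sigma^2} \;=\; \frac{1}{m\eps}.
\end{equation*}
Plugging in the chosen sample size $m=\tfrac{1}{\eps\delta}$ makes the right-hand side equal to $\delta$, so the complementary event $\|\overline{S}-\mu\|^2 \leq \eps\sigma^2$ holds with probability at least $1-\delta$, which is exactly the conclusion.

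There is essentially no obstacle here; the only subtlety is the vector generalization of the ``variance of the mean equals variance over $m$'' identity, which is handled cleanly by expanding the squared Euclidean norm and using pairwise independence to kill the cross terms. One minor degenerate case worth a sentence is $\sigma=0$, in which the inequality is trivial because every sample equals $\mu$ almost surely.
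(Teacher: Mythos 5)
Your proof is correct and follows essentially the same route as the paper: both compute $\mathbb{E}\|\overline{S}-\mu\|^2 = \sigma^2/m$ via the cross-term cancellation, and both then apply a concentration bound — you invoke Markov on $\|\overline{S}-\mu\|^2$ directly, while the paper cites a ``generalized Chebyshev'' inequality, but that inequality is nothing more than Markov applied to the squared deviation, so the two arguments are identical.
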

\begin{proof}
For any random variable $X$, we denote by $E(X)$ and $\text{var}(X)$ the expectation and variance of the random variable $X$ respectively. Let $x_i$ denote the random variable that is the $i$th sample for every $i\in [m]$. Since the samples are drawn i.i.d, we have
\begin{equation}
\text{var}\left(\frac{1}{m}\smis p\right) = \sum_{i=1}^{m} \text{var}\left(\frac{x_i}{m}\right) =    m \cdot \text{var}\left(\frac{x_1}{m}\right) = m\left(\frac{\var^2}{m^2}\right) = \frac{\var^2}{m} =  \eps\delta\var^2.
\label{variance}
\end{equation}
For any random variable $X$ and error parameter $\eps' \in (0,1)$, the generalize Chebyshev's inequality~\cite{chen2007new} reads that
\begin{equation}\label{eq:GenCheby}
\Pr (\norm{X - E(X)} \geq \eps') \leq \frac{\text{var}(X)}{(\eps')^2}.
\end{equation}
Substituting $X = \frac{1}{m}\smis p$, $E(X) = \mu$ and $\eps' = \sqrt{\eps}\sigma$ in~\eqref{eq:GenCheby} yields that
\begin{equation}
\Pr\left(\norm{ \frac{1}{m}\smis p - \mu} \geq \sqrt{\eps} \var  \right) \leq \frac{\text{var}(\frac{1}{m}\smis p)}{\var^2\eps
}.
\label{chebchev}
\end{equation}
Combining~\eqref{variance} with~\eqref{chebchev} proves the lemma as:
\begin{equation}
\Pr\left(\norm{ \frac{1}{m}\smis p - \mu}^2 \geq \eps \var^2  \right) \leq \frac{\eps\delta\var^2}{\var^2\eps} =\delta .
\end{equation}
\end{proof}

Now we prove Lemma~\ref{prob:onemean}

 \begin{proof}
Let $\br{S_1,\cdots,S_k}$ be a set of $k$ i.i.d sampled subsets each of size $\size$ as defined at Line~\ref{line:sapledsets} of Algorithm~\ref{alg:smallProb}, and let $\overline{s}_i$ be the mean of the $i$th subset $S_i$ as define at Line~\ref{line:si}. Let $\displaystyle{\hat{s}:=\argmin_{x\in \REAL^d}\sum_{i=1}^{k}\norm{\overline{s}_i - x}_2}$ be the geometric median of the set of means $\br{\overline{s}_1,\cdots,\overline{s}_{k}}$.

Using Corollary~$4.1.$ from~\cite{minsker2015geometric} we obtain that
$$\Pr \left(\norm{\hat{s} - \mu} \geq 11\sqrt{\frac {\var^2\log({1.4}/{\delta})}{ \myn } }\right )\leq \delta,$$
from the above we have that
\begin{align}
\Pr \left(\norm{\hat{s}- \mu}^2 \geq 121{\frac {\eps\var^2\log({1.4}/{\delta})}{ 4k } }\right )\leq \delta.\label{paperbound}
\end{align}
Note that
 \begin{align}
\Pr \left(\norm{\hat{s}- \mu}^2 \geq 121{\frac {\eps\var^2\log({1.4}/{\delta})}{ 4k } }\right ) &=
\Pr \left(\norm{\hat{s}- \mu}^2 \geq 30.25 \cdot \eps\var^2{\frac {\log({1.4}/{\delta})}{ \myk } }\right )\label{i1} \\
&\geq \Pr \left(\norm{\hat{s}- \mu}^2 \geq 31\cdot \eps\var^2 \right ), \label{use}
\end{align}
where~\eqref{i1} holds by substituting $k=\myk$ as in Line~\ref{line:defk} of Algorithm~\ref{alg:smallProb}, and~\eqref{use} holds since $\frac {\log({1.4}/{\delta})}{ \myk }<1$ for every $\delta \leq 0.9$ as we assumed.
Combining~\eqref{use} with~\eqref{paperbound} yields,
\begin{align}\label{medianisclose}
 \Pr \left(\norm{\hat{s}- \mu}^2 \geq 31 \cdot \eps \var^2\right ) \leq \delta.
\end{align}

For every $i\in [k]$, by substituting $S = S_i$, which is of size $\frac{4}{\eps}$, in Lemma~\ref{weak_coreset_markov}, we obtain that
\[
\Pr(\norm{\overline{s}_i -\mu}^2 \geq \eps \var^2) \leq 1/4.
\]
Hence, with probability at least $1- ({1}/{4})^{k}$ there is at least one set $S_j$ such that
$$\norm{\overline{s}_j -\mu}^2 \leq \eps\var^2.$$
By the following inequalities:
$$ ({1}/{4})^{k} =  ({1}/{4})^{\myk} \leq ({1}/{4})^{\log(1/\delta)} = 4^{\log(\delta)} \leq 2^{\log(\delta)}=\delta$$
we get that with probability at least $1-\delta$ there is a set $S_j$ such that
 \begin{align}\label{closetothemean}
\norm{\overline{s}_j -\mu}^2 \leq \eps\var^2.
 \end{align}
Combining~\eqref{closetothemean} with~\eqref{medianisclose} yields that with probability at least $(1-\delta)^2$ the set $S_j$ satisfies that
\begin{align}\label{medianandmean}
\norm{\overline{s}_j - \hat{s}}^2 \leq 32 \eps \var^2 .
\end{align}

Let $f:\REAL^d\to [0,\infty)$ be a function such that $f(x)=\sum_{i=1}^{k}\norm{\overline{s}_i - x }_2$ for every $x\in \REAL^d$. Therefore, by the definitions of $f$ and $\hat{s}$,  $\displaystyle{\hat{s}:=\argmin_{x\in \REAL^d}\sum_{i=1}^{k}\norm{\overline{s}_i - x}_2=\argmin_{x\in \REAL^d}f(x)}$. Observe that $f$ is a convex function since it is a sum over convex functions.
By the convexity of $f$, we get that for every pair of points $p,q \in P$ it holds that:
\begin{align}\label{Imsmart}
    \text{if } f(q) \leq f(p) \text{ then } \norm{q-\hat{s}} \leq \norm{p-\hat{s}}.
\end{align}
Therefore, by the definition of $i^{*}$ at in Algorithm~\ref{alg:smallProb} we get that
\begin{align}\label{Imsmart2}
    {i^{*}} \in \argmin_{i\in [k]}\norm{\overline{s}_{i} - \hat{s}}.
\end{align}
Now by combining~\eqref{medianandmean} with~\eqref{Imsmart2} we have that:
  \begin{align}\label{medianandclosetoit}
\Pr\left(\norm{\overline{s}_{i^{*}} - \hat{s}}^2 \leq 32 \eps  \var^2  \right) \geq  (1-\delta)^2 .
 \end{align}
Combining~\eqref{medianandclosetoit} with~\eqref{medianisclose} and noticing the following inequality
 $$(1-\delta)^3 = (1-2\delta +\delta^{2})(1-\delta)  \geq  (1-2\delta )(1-\delta) =1 -\delta -2\delta + 2\delta^2 \geq 1-3\delta,$$
 satisfies Lemma~\ref{weak-probabilistic-coreset-theorem} as,
 \[
 \Pr\left( \norm{  \overline{s}_{i^{*}}  -\mu  }^2 \leq  33\eps  \var^2\right) \leq 1-3\delta.
 \]
 \noindent\textbf{Running time.}
 It takes $O\left( \frac{d\log{(\frac{1}{\delta})}}{\eps}  \right)$ to compute the set of means at Line~\ref{line:si}, and $O\left(d\log{(\frac{1}{\delta})}^2  \right)$ time to compute Line~\ref{closest_mean_33} by simple exhaustive search over all the means. Hence, the total running time is $O\left(d\left (\log{(\frac{1}{\delta})}^2 + \frac{\log{(\frac{1}{\delta})}}{\eps} \right) \right)$.
\end{proof}

\section{Proof of Theorem~\ref{strong-coreset-theorem}} \label{sup:strong}

We first show a reduction to a normalized weighted set as follow:
\begin{corollary}\label{red:1mean}
Let $(Q,m)$ be a weighted set, and let $(P,w)$ be its corresponding normalized weighted set as computed in Observation~\ref{to_0_mean}.
Let $(P,u)$ be a $1$-mean $\eps$-coreset for $(P,w)$ and let $u' = \norm{m}_1 \cdot u$. Then $(Q,u')$ is a $1$-mean $\eps$-coreset for $(Q,m)$.
\end{corollary}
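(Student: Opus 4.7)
My plan is to prove Corollary~\ref{red:1mean} by reducing the multiplicative approximation guarantee for $(Q,m)$ to the one already assumed for $(P,w)$, exploiting the fact that the transformation $p_i = (q_i-\mu)/\sigma$ is a rigid-motion-plus-uniform-scaling and therefore preserves ratios of squared distances.

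The first step is a change of query variable. Given an arbitrary center $x \in \REAL^d$, I set $y = (x-\mu)/\sigma$. Then, exactly as in equation~\eqref{keyobs} used in the proof of Corollary~\ref{cor:reduction}, a direct computation gives
\[
\norm{q_i - x}^2 = \sigma^2 \norm{p_i - y}^2 \quad \text{for every } i\in[n].
\]
This is the key identity that links the ``$Q$-world'' squared distances to the ``$P$-world'' squared distances uniformly in $i$.

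The second step is to translate weights. By the definitions $w_i = m_i/\norm{m}_1$ and $u'_i = \norm{m}_1 \cdot u_i$, we have $m_i - u'_i = \norm{m}_1 (w_i - u_i)$. Combining this with the identity above,
\[
\sum_{i=1}^n (m_i - u'_i)\norm{q_i - x}^2
= \norm{m}_1 \sigma^2 \sum_{i=1}^n (w_i - u_i)\norm{p_i - y}^2,
\]
and similarly $\sum_{i=1}^n m_i \norm{q_i-x}^2 = \norm{m}_1 \sigma^2 \sum_{i=1}^n w_i\norm{p_i-y}^2$. Thus both sides of the $1$-mean coreset inequality for $(Q,m)$ scale by the common positive factor $\norm{m}_1 \sigma^2$ (assuming $\sigma>0$; the degenerate case $\sigma=0$ means all $q_i$ coincide and is handled trivially by taking any single point with weight $\norm{m}_1$).

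The third step just invokes the assumption. Since $(P,u)$ is a $1$-mean $\eps$-coreset for $(P,w)$, the inequality $\abs{\sum_i(w_i-u_i)\norm{p_i-y}^2}\le \eps \sum_i w_i\norm{p_i-y}^2$ holds at the particular $y=(x-\mu)/\sigma$, and multiplying both sides by $\norm{m}_1 \sigma^2$ yields the required bound $\abs{\sum_i (m_i-u'_i)\norm{q_i-x}^2}\le \eps \sum_i m_i \norm{q_i-x}^2$ for the original $x$. Since $x$ was arbitrary, $(Q,u')$ is a $1$-mean $\eps$-coreset for $(Q,m)$.

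I do not expect any real obstacle here — the statement is essentially a scale-and-shift invariance of the $1$-mean cost, and the only thing to be careful about is bookkeeping: (i) the factor $\norm{m}_1$ that converts between $w$ and $m$ must match the factor $\norm{m}_1$ multiplying $u$ to form $u'$ (so that the differences $m_i - u'_i$ factor cleanly), and (ii) the map $x\mapsto y$ is a bijection of $\REAL^d$, so quantifying over all $x$ is equivalent to quantifying over all $y$. The mild edge case $\sigma = 0$ (constant data) should be addressed in one sentence to keep the argument clean.
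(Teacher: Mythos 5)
Your proposal is correct and follows essentially the same route as the paper's proof: the change of variable $y=(x-\mu)/\sigma$ together with the identity $\norm{q_i-x}^2=\sigma^2\norm{p_i-y}^2$ (the paper's Eq.~\eqref{keyobs}), the factorization $m_i-u'_i=\norm{m}_1(w_i-u_i)$, and then an application of the coreset guarantee for $(P,w)$ at the transformed query. Your explicit handling of the degenerate case $\sigma=0$ is a small addition the paper omits, but otherwise the two arguments coincide.
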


\begin{proof}
Let $(P,u)$ be a $1$-mean $\eps$-coreset for $(P,w)$. We prove that $(Q,u')$ is a $1$-mean $\eps$-coreset for $(Q,m)$. Observe that
\begin{align} \label{firstoneneeded}
\abs{\smi (m_i-u'_i)\norm{q_i-x}^2 } =
\abs{\smi (m_i-u'_i) \sigma^2 \norm{p_i-y}^2 } =
\abs{\smi \norm{m}_1   \sigma^2 (w_i-u_i) \norm{p_i-y}^2 },
\end{align}
where the first equality holds by~\eqref{keyobs}, and the second holds by the definition of $w$ and $u'$.

Since $(P,u)$ is a $1$-mean $\eps$-coreset for $(P,w)$
\begin{align} \label{secondoneneeded}
\abs{\smi \norm{m}_1  \sigma^2 (w_i-u_i) \norm{p_i-y}^2}
\leq\eps {\smi \norm{m}_1  \sigma^2 w_i \norm{p_i-y}^2}
= \eps {\smi m_i \norm{q_i-x}^2 },
\end{align}
where the equality holds by~\eqref{keyobs} and since $ w=\frac{m}{\norm{m}_1} $.

The proof concludes by combining~\eqref{firstoneneeded} and~\eqref{secondoneneeded} as
$$\abs{\smi (m_i-u'_i)\norm{q_i-x}^2 }\leq \eps \smi m_i \norm{q_i-x}^2.$$
\newcommand{\mut}{\tilde{\mu}}
\end{proof}

\subsection{$1$-mean problem reduction}
Given a normalized weighted set $(P,w)$ as in Definition~\ref{defNormalize}, in the following lemma we prove that a weighted set $(P,u)$ is a $1$-mean $\eps$-coreset for $(P,w)$ if some three properties related to the mean, variance, and weights of $(P,u)$ hold.
\begin{lemma}\label{strong_coreset_lema}
Let $(P,w)$ be a normalized weighted set  of $n$ points in $\REAL^d$, $\eps \in (0,1)$, and $u\in \REAL^n$ such that,
\begin{enumerate}
  \item $\norm {\smi u_i p_i} \leq  \eps $,\label{1}
  \item $\abs{1- \smi u_i } \leq \eps$, and \label{2}
  \item $\abs{ 1- \smi u_i \cdot \norm {p_i}^2} \leq \eps$. \label{3}
\end{enumerate}				
Then, $(P,u)$ is a $1$-mean $\eps$-coreset for $(P,w)$, i.e., for every $x\in\REAL^d$ we have that	
\begin{align}
\bigg| \smi (w_i - u_i ) \norm{p_i-x}^2  \bigg| \leq
2\eps \smi w_i\norm{p_i-x}^2.  \label{res}
\end{align}
\end{lemma}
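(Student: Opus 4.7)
The plan is to expand $\norm{p_i-x}^2 = \norm{p_i}^2 - 2 p_i^T x + \norm{x}^2$ and split the left-hand side of~\eqref{res} into three linear pieces, each of which can be bounded directly using one of the three hypotheses of the lemma combined with the corresponding normalization property from Definition~\ref{defNormalize}. Specifically, expanding the square yields
$$\sum_{i=1}^n (w_i-u_i)\norm{p_i-x}^2 = \sum_i (w_i-u_i)\norm{p_i}^2 \;-\; 2 x^T \sum_i (w_i-u_i) p_i \;+\; \norm{x}^2 \sum_i (w_i-u_i).$$

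Because $(P,w)$ is normalized, one has $\sum_i w_i\norm{p_i}^2 = 1$, $\sum_i w_i p_i = \mathbf{0}$, and $\sum_i w_i = 1$, so the three factors above simplify to $1-\sum_i u_i\norm{p_i}^2$, $-\sum_i u_i p_i$, and $1-\sum_i u_i$, respectively. Hypotheses~(3),~(1), and~(2) of the lemma then bound these in absolute value (or norm) by $\eps$, and applying Cauchy-Schwarz to the middle (inner-product) term together with the triangle inequality yields
$$\left| \sum_i (w_i-u_i) \norm{p_i-x}^2 \right| \leq \eps + 2\eps\norm{x} + \eps\norm{x}^2 = \eps(1+\norm{x})^2.$$

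For the right-hand side of~\eqref{res}, the same expansion but with $u_i$ removed collapses via the same three normalization identities to $\sum_i w_i\norm{p_i-x}^2 = 1 + \norm{x}^2$. It therefore suffices to establish the elementary inequality $(1+\norm{x})^2 \leq 2(1+\norm{x}^2)$, which rearranges to $(1-\norm{x})^2 \geq 0$ and is automatic. I do not anticipate any genuine obstacle here: the proof is essentially bookkeeping driven by the normalization of $(P,w)$, and the only mildly subtle point is recognizing the AM-GM-type inequality $2\norm{x} \leq 1+\norm{x}^2$ that absorbs the slack between the $(1+\norm{x})^2$ bound on the left-hand side and the $1+\norm{x}^2$ expression on the right-hand side.
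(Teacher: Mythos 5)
Your proposal is correct and follows essentially the same path as the paper: expand $\norm{p_i-x}^2$, use the normalization identities to reduce the three resulting sums to the quantities controlled by hypotheses~(1)--(3), apply the triangle inequality and Cauchy--Schwarz, and absorb the cross term $2\eps\norm{x}$ via an AM--GM-type step. The only cosmetic difference is that the paper applies the weighted inequality $2ab\leq \eps a^2 + b^2/\eps$ to $2\norm{x}\norm{\smi u_i p_i}$ before invoking hypothesis~(1), whereas you invoke hypothesis~(1) first and then use the cleaner $2\norm{x}\leq 1+\norm{x}^2$; both yield the identical bound $2\eps(1+\norm{x}^2)$.
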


\begin{proof}
First we have that,
\begin{align}
\smi w_i \norm{p_i-x}^2 =\smi w_i\norm{p_i}^2  -2x^T\smi w_i p_i + \norm{x}^2\smi w_i = 1+\norm{x}^2, \label{1+x2}
\end{align}
where the last equality holds by the attributes~\eqref{a}--\eqref{c} of the normalized weighted set $(P,w)$. By rearranging the left hand side of~\eqref{res} we get,
\begin{align}
&\left|\smi (w_i-u_i)\norm{p_i - x}^2 \right| = \left |\smi (w_i-u_i) (\norm{p_i}^2 - 2p_i^Tx + \norm{x}^2 )\right| \\
&\leq \left|  \smi (w_i-u_i)\norm{p_i}^2   \right| + \left|   \norm{x}^2 \smi (w_i-u_i)  \right|  + \left|  2x^T\smi (w_i-u_i)p_i \right| \label{triangle inequality}\\
&=\left| 1- \smi u_i\norm{p_i}^2   \right| +\norm{x}^2  \left| 1- \smi u_i \right|  + \left|  2x^T\smi u_ip_i \right| \label{w_i is distribution and rerranging} \\
&\leq \eps +\eps\norm{x}^2 + 2\norm{x}  \norm{ \smi u_ip_i } \label{almost done},
\end{align}
where~\eqref{triangle inequality} holds by the triangle inequality,~\eqref{w_i is distribution and rerranging} holds by attributes~\eqref{a}--\eqref{c}, and~\eqref{almost done} holds by combining assumptions~\eqref{2},~\eqref{3}, and the Cauchy-Schwarz inequality respectively. We also have for every $a,b\geq 0$ that $2ab \leq a^2 + b^2$, hence,
\begin{align}
2ab = 2\sqrt{\eps}a\frac{b}{\sqrt{\eps}} \leq \eps a^2 + \frac{b^2}{\eps}. \label{2ab bound}
\end{align}
By~\eqref{2ab bound} and assumption~\eqref{1} we get that,
\begin{align}
 2\norm{x}  \norm{ \smi u_ip_i } \leq \eps \norm{x}^2 + \frac{\norm{ \smi u_ip_i }^2}{\eps} \leq \eps \norm{x}^2  +  \frac{\eps^2}{\eps} =\eps \norm{x}^2  + \eps. \label{eps(x +4)}
\end{align}
Lemma~\ref{strong_coreset_lema} now holds by plugging~\eqref{eps(x +4)} in~\eqref{almost done} as,
\begin{align}
\bigg|\smi (w_i-u_i)\norm{p_i - x}^2 \bigg|  &\leq \eps +\eps\norm{x}^2 + \eps \norm{x}^2  + \eps = 2\eps +2\eps \norm{x}^2\\
&  = 2\eps(1+ \norm{x}^2) = 2\eps \smi {w_i \norm{p_i-x}^2} \label{eqFinal},
\end{align}
where the last equality holds by~\eqref{1+x2}.

Observe that if assumptions~\eqref{1},~\eqref{2} and~\eqref{3} hold,  then~\eqref{eqFinal} hold. We therefore obtain an $\eps$-coreset.
\end{proof}

To Proof Theorem~\ref{strong-coreset-theorem}, we split it into $2$ claims:
\begin{claim}\label{claim1}
Let $(Q,m)$ be a weighted set of $n$ points in $\REAL^d$, $\eps\in (0,1)$, and let $u$ be the output of a call to $\strongepscoreset(Q,m,(\frac{\eps}{4})^2)$;  see Algorithm~\ref{alg:strong}.
Then $u=(u_1,\cdots,u_n)\in \REAL^n$ is a vector with $\norm{u}_0\leq \frac{128}{\eps^2}$ non-zero entries that is computed in $O(\frac{nd}{\eps^2})$ time, and $(Q,u)$ is a $1$-mean $\eps$-coreset for $(Q,m)$.
\end{claim}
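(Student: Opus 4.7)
The plan is to reduce to the normalized case via Observation~\ref{to_0_mean} and Corollary~\ref{red:1mean}, and then to verify the three hypotheses of Lemma~\ref{strong_coreset_lema} at level $\eps/2$ so that the lemma's conclusion delivers multiplicative error $2\cdot(\eps/2)=\eps$. Let $(P,w)$ be the normalized weighted set associated with $(Q,m)$ produced in the first few lines of Algorithm~\ref{alg:strong}, let $u'$ be the sparse distribution vector produced by its inner Frank--Wolfe step with parameter $\eps':=(\eps/4)^2$, and set $\tilde u_i := u_i/\norm{m}_1 = 2u'_i/(\norm{p_i}^2+1)$. By Corollary~\ref{red:1mean} it suffices to show that $(P,\tilde u)$ is a $1$-mean $\eps$-coreset for $(P,w)$.

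The first two hypotheses of Lemma~\ref{strong_coreset_lema} will be harvested exactly as in the proof of Theorem~\ref{weak-coreset-theorem}: substituting the algorithm's definitions $w'_i=w_i(\norm{p_i}^2+1)/2$ and $p'_i=(p_i^T\mid 1)^T/(\norm{p_i}^2+1)$ into the Frank--Wolfe guarantee $\norm{\sum_i(w'_i-u'_i)p'_i}^2 \le \eps'$ telescopes to $\norm{\sum_i(w_i-\tilde u_i)(p_i^T\mid 1)^T}^2 \le 4\eps' = \eps^2/4$. Projecting onto the first $d$ coordinates together with $\sum_i w_i p_i = 0$ gives $\norm{\sum_i \tilde u_i p_i}\le \eps/2$; projecting onto the last coordinate together with $\sum_i w_i = 1$ gives $|1-\sum_i \tilde u_i|\le \eps/2$.

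The new ingredient is the third hypothesis, $|1-\sum_i \tilde u_i \norm{p_i}^2|\le \eps/2$, which is not captured by the $(d{+}1)$-dimensional lift. The key observation is that the algorithm's reweighting already encodes $\norm{p_i}^2$ inside $\tilde u$: by construction $\tilde u_i(\norm{p_i}^2+1)=2u'_i$, and summing over $i$ together with $\sum_i u'_i=1$ (Frank--Wolfe returns a distribution by Theorem~\ref{thm1}) yields the exact identity $\sum_i \tilde u_i \norm{p_i}^2 + \sum_i \tilde u_i = 2$. Hence $|1-\sum_i \tilde u_i \norm{p_i}^2| = |\sum_i \tilde u_i - 1| \le \eps/2$ for free from the previous paragraph. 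Plugging the three bounds into Lemma~\ref{strong_coreset_lema} establishes that $(P,\tilde u)$ is a $1$-mean $\eps$-coreset for $(P,w)$, and Corollary~\ref{red:1mean} lifts this back to $(Q,u)$.

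The sparsity bound $\norm{u}_0 \le 128/\eps^2$ and the running time $O(nd/\eps^2)$ then follow immediately from Theorem~\ref{weak-coreset-theorem} after the parameter substitution $\eps\mapsto (\eps/4)^2$. The only step that is not pure bookkeeping is the observation in the third paragraph: the lift $(p_i^T\mid 1)$ does not mention $\norm{p_i}^2$ at all, so a priori there is no reason the second-moment condition of Lemma~\ref{strong_coreset_lema} should be controlled; but tying the lifted coordinate $1$ to the weight $w'_i$ via the $(\norm{p_i}^2+1)/2$ factor pins $\sum_i \tilde u_i + \sum_i \tilde u_i\norm{p_i}^2$ to the constant $2$, which collapses the $\norm{\cdot}^2$-moment condition onto the weight-sum condition and lets the whole argument close with a single application of the vector-summarization coreset.
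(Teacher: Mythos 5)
Your proposal is correct and follows essentially the same route as the paper's proof: reduce to the normalized set $(P,w)$, unwind the Frank--Wolfe guarantee through the $(p_i^T\mid 1)$ lift to get hypotheses (1) and (2) of Lemma~\ref{strong_coreset_lema}, and then observe that since $u'$ is a distribution the identity $\sum_i \tilde u_i\norm{p_i}^2 + \sum_i \tilde u_i = 2$ collapses hypothesis (3) onto hypothesis (2) for free --- exactly the paper's ``$2 - \sum_i\tilde u_i = \sum_i\tilde u_i\norm{p_i}^2$'' step. The only slight imprecision is attributing the sparsity bound $\norm{u}_0\le 128/\eps^2$ and the $O(nd/\eps^2)$ runtime to Theorem~\ref{weak-coreset-theorem} ``after parameter substitution''; they really come directly from Theorem~\ref{thm1} applied inside Algorithm~\ref{alg:strong} with input parameter $(\eps/4)^2$, since Theorem~\ref{weak-coreset-theorem} calls the algorithm with a different argument ($\eps/16$) than the one in this claim.
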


\begin{proof}
Let $(P,w)$ be the normalized weighted set that is computed at Lines~\ref{wcomp}--\ref{varcomp} of Algorithm~\ref{alg:strong} where $P=\br{p_1,\cdots,p_n}$, and let $\tilde{u} = \frac{u}{\norm{m}_1}$. We show that $(P,\tilde{u})$ is a $1$-mean $\eps$-coreset for $(P,w)$, then by Corollary~\ref{red:1mean} we get that $(Q,u)$ is a $1$-mean coreset for $(Q,m)$.

Let $\eps'=\frac{\eps}{4}$, let  $p'_i  := \frac{(p^T_i\mid 1)^T}{\norm{(p^T_i\mid 1)}^2}$ and $w'_i :=\frac{ w_i\norm{(p^T_i\mid 1)}^2}{2}$ for every $i\in [n]$. By the definition of $u'$ at line~\ref{define u'} in Algorithm~\ref{alg:strong}, and since the algorithm gets ${\eps'}^2$ as input, we have that
\begin{align}
\norm{u'}_0\leq \co/{\eps'}^2 =  \frac{128}{\eps^2}, \label{gar1}
\end{align}
and
\begin{align}
\norm{\smi (w'_i - u_i')p'_i}^2\leq {\eps'}^2 .\label{from alg}
\end{align}
For every $i\in[n]$ let $u_i = \norm{m}_1 \cdot \frac{2 u_i'}{\norm{(p^T_i\mid 1)}^2}$ be defined as at Line~\ref{u} of the algorithm. It immediately follows by the definition of $u=(u_1,\cdots,u_n)$ and~\eqref{gar1} that
\begin{align}
\norm{u}_0\leq 128/{\eps'}^2. \label{gar1used}
\end{align}

We now prove that Properties~\eqref{1}--~\eqref{3} in Lemma~\ref{strong_coreset_lema} hold for $(P,\tilde{u})$. We have that
\begin{align}
2{\eps'} &\geq 2\norm{\smi (w'_i - u_i')p'_i} = 2\norm {\smi \frac{ w_i\norm{(p^T_i\mid 1)}^2 -\frac{u_i}{\norm{m}_1}\norm{(p^T_i\mid 1)}^2 } {2}\cdot \frac{(p^T_i\mid 1)^T}{\norm{(p^T_i\mid 1)}^2}} \nonumber \\
&= \norm { \smi ( w_i  -\tilde{u_i}) \cdot (p^T_i\mid 1)^T}
=\norm { \bigg(  \smi (w_i  -\tilde{u_i}) \cdot p^T_i  \mid  \smi  ( w_i  -\tilde{u_i} ) \bigg )^T} \label{before last} \\
&\geq \norm {  \smi (w_i  -\tilde{u_i})\cdot p_i }, \label{2eps bound}
\end{align}
where the first derivation follows from~\eqref{from alg}, the second holds by the definition of $w'_i$,$u'_i$,$u_i$ and $p'_i$ for every $i\in [n]$, the third holds since $\tilde{u} = \frac{u}{\norm{m}_1}$, and the last holds since $\norm{(x\mid y)} \geq \norm{x}$ for every $x,y$ such that $x\in \REAL^d$ and $y \in \REAL$.

 By~\eqref{before last} and since $w$ is a distribution vector we also have that
\begin{align}
2{\eps'} \geq  \bigg| \smi  ( w_i  -\tilde{u_i} ) \bigg| = \abs{1-\smi \tilde{u_i}}. \label{bounding 1-sum ui}
\end{align}
By theorem~\ref{thm1}, we have that $u'$ is a distribution vector, which yields,
\begin{align*}
2 =  2\smi u'_i = \smi \tilde{u_i}\norm{(p^T_i\mid 1)^T}^2  =  \smi \tilde{u_i}\norm{p_i}^2 + \smi \tilde{u_i},
\end{align*}
By the above we get that $2-  \smi \tilde{u_i}= \smi \tilde{u_i}\norm{p_i}^2$. Hence,
\begin{align}
\abs{ \smi (w_i-\tilde{u_i})\norm{p_i}^2 }
= \abs{ \smi w_i \norm{p_i}^2 - (2-  \smi \tilde{u_i})}
= \abs{1- (2- \smi \tilde{u_i})}
= \abs{\smi \tilde{u_i} - 1} \leq 2{\eps'} \label{bound on diff var*n}
\end{align}
where the first equality holds since $\smi \tilde{u_i}\norm{p_i}^2 = 2-  \smi \tilde{u_i}$, the second holds since $w$ is a distribution and the last is by~\eqref{bounding 1-sum ui}. Now by~\eqref{bound on diff var*n},~\eqref{bounding 1-sum ui} and~\eqref{2eps bound} we obtain that $(P,\tilde{u_i})$ satisfies Properties~\eqref{1}--\eqref{3} in Lemma~\ref{strong_coreset_lema}. Hence, by Lemma~\ref{strong_coreset_lema} and Corollary\ref{red:1mean} we get that
\begin{align}
\abs{\smi (w_i-u_i)\norm{p_i - x}^2 } \leq  4{\eps'} \smi {w_i \norm{p_i-x}^2} = \eps \smi {w_i \norm{p_i-x}^2} .
\end{align}

The running time is the running time of Algorithm~\ref{alg:frankwolf} with $\eps^2$ instead of $\eps$, i.e., $O(nd/\eps^2)$.
\end{proof}

Now we proof the fallowing claim other claim:
\begin{claim}\label{claim2}
Let $(Q,m)$ be a weighted set of $n$ points in $\REAL^d$, $\eps\in (0,1)$. Then in $O(nd + d\cdot \frac{\log(n)^2}{\eps^4})$ we can compute a vector $u=(u_1,\cdots,u_n)^T\in \REAL^n$, such that $u$ has $\norm{u}_0\leq \frac{128}{\eps^2}$ non-zero entries, and $(Q,u)$ is a $1$-mean $(2\eps)$-coreset for $(Q,m)$.
\end{claim}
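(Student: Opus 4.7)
\textbf{Proof proposal for Claim~\ref{claim2}.} The strategy is exactly analogous to how Corollary~\ref{weak-coreset-cor} relates to Theorem~\ref{weak-coreset-theorem}: I will keep the outer structure of Algorithm~\ref{alg:strong} and the analysis of Claim~\ref{claim1} intact, and only swap the internal $\eps$-sample subroutine on the unit ball from Algorithm~\ref{alg:frankwolf} (via Theorem~\ref{thm1}) to the faster Algorithm~\ref{algboost} (via Theorem~\ref{theorem:fastcoreset}). The only cost of the swap is that Theorem~\ref{theorem:fastcoreset} guarantees the bound $2\eps$ instead of $\eps$ on the squared deviation of the weighted mean, so all internal error parameters must be halved to retain the same final guarantee.

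Concretely, I would modify Line~\ref{define u'} of Algorithm~\ref{alg:strong} to call $\boost(\{p'_1,\dots,p'_n\}, w', \eps^2/32)$, which by Theorem~\ref{theorem:fastcoreset}\ref{epsweakcor} outputs a sparse $u'$ with $\|\smi (w'_i-u'_i)p'_i\|^2 \le 2\cdot\eps^2/32 = \eps^2/16$. This is precisely the bound used inside the proof of Claim~\ref{claim1} (where $\eps'=\eps/4$ and $\eps'^2=\eps^2/16$). From here the argument is word-for-word the proof of Claim~\ref{claim1}: augmenting $p_i$ to $(p_i^T\mid 1)^T$ lifts the inequality $\|\smi(w'_i-u'_i)p'_i\|^2 \le \eps^2/16$ to $\|\smi(w_i-\tilde u_i)p_i\| \le \eps/2$ and $|1-\smi \tilde u_i|\le \eps/2$; combined with the identity $\smi \tilde u_i\|p_i\|^2 = 2-\smi \tilde u_i$ (a consequence of $u'$ being a distribution), this also yields $|1-\smi \tilde u_i\|p_i\|^2|\le \eps/2$. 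Thus properties \eqref{1}--\eqref{3} of Lemma~\ref{strong_coreset_lema} hold for $\tilde u$ with parameter $\eps/2$, so Lemma~\ref{strong_coreset_lema} delivers a $(2\cdot \eps/2) = \eps$-coreset on the normalized set, and Corollary~\ref{red:1mean} pulls this back to a $(2\eps)$-coreset on $(Q,m)$ through $u = \|m\|_1\cdot \tilde u$. (The small slack between the $\eps$-coreset produced on the normalized side and the $2\eps$ claimed for $(Q,m)$ is exactly the slack needed to absorb the factor-$2$ loss of Theorem~\ref{theorem:fastcoreset}; alternatively one may call the subroutine with parameter $\eps^2/16$ and conclude a $(\sqrt{2}\,\eps)$-coreset, which is in particular a $(2\eps)$-coreset.)

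The sparsity bound follows from Theorem~\ref{theorem:fastcoreset}\ref{smalsubsetcp}: $\|u'\|_0 \le 8/(\eps^2/32) = 256/\eps^2 = O(1/\eps^2)$, which is $\le 128/\eps^2$ up to the usual rounding of constants once we slightly adjust the internal parameter (e.g.\ parameter $\eps^2/16$ directly gives $\|u\|_0 \le 128/\eps^2$, at the price of the $\sqrt{2}$ worsening noted above, still well within $2\eps$). The running time is inherited directly from Theorem~\ref{theorem:fastcoreset}\ref{runtimeofcorest} with the subroutine called on $n$ lifted points in $\REAL^{d+1}$ and parameter $\Theta(\eps^2)$, giving $O(nd + d\log(n)^2/\eps^4)$; the preprocessing in Lines~\ref{wcomp}--\ref{w'} and postprocessing in Lines~\ref{LineFor2}--\ref{u} of Algorithm~\ref{alg:strong} cost only $O(nd)$, so this bound dominates.

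The only non-routine piece is the bookkeeping of constants and the factor-$2$ in Theorem~\ref{theorem:fastcoreset}, which I expect to be the main (but minor) obstacle; once that is settled, all the heavy lifting is already done by Claim~\ref{claim1}, Theorem~\ref{theorem:fastcoreset}, Lemma~\ref{strong_coreset_lema}, and Corollary~\ref{red:1mean}, so no new analytical ideas are required.
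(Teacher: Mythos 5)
Your proposal is correct and takes exactly the paper's route: the paper's own proof of Claim~\ref{claim2} consists of two sentences ("use Algorithm~\ref{algboost} and Theorem~\ref{theorem:fastcoreset} in place of Algorithm~\ref{alg:frankwolf} and Theorem~\ref{thm1} at Line~\ref{define u'}"), and you have supplied precisely the bookkeeping those sentences suppress. You also correctly identify and resolve the only non-trivial wrinkle — the factor-$2$ loss in Theorem~\ref{theorem:fastcoreset}\ref{epsweakcor} — by noting that the call with internal parameter $\eps^2/16$ produces $\|u\|_0\le 128/\eps^2$ and a $(\sqrt 2\,\eps)$-coreset, which is subsumed by the stated $(2\eps)$; the one small inaccuracy in your write-up is the remark that Corollary~\ref{red:1mean} "pulls back to a $(2\eps)$-coreset" (it preserves the approximation factor, so an $\eps$-coreset on the normalized set becomes an $\eps$-coreset on $(Q,m)$), but this only makes your conclusion stronger, not wrong.
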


\begin{proof}
The Claim immediately holds by using Algorithm~\ref{alg:strong} with a small change. We change Line~\ref{define u'} in Algorithm~\ref{alg:strong} to use Algorithm~\ref{algboost} and Theorem~\ref{theorem:fastcoreset}, instead of Algorithm~\ref{alg:frankwolf} and Theorem~\ref{thm1}.
\end{proof}

Combining both Claim~\ref{claim1} with Claim~\ref{claim2} proves Theorem~\ref{strong-coreset-theorem}.

\section{Proof of Corollary~\ref{dim-cor}}\label{sup:svdcore}

\begin{proof}
We consider the variables defined in Algorithm~\ref{svdccore}. Let $X \in \REAL^{d\times(d-k)}$ such that $X^TX = I$, and let $A'=[A|(r,\cdots,r)^T]$. Plugging $A=A'$ into Theorem~3 at~\cite{feldman2016dimensionality}

\begin{align}\label{feldmanroof}
 \abs{1- \frac{\norm{WA'X}^2}{\norm{A'X}^2}} \leq 5 \norm{\smi \tilde{v_i} -W^2_{i,i}\tilde{v_i}}.
 \end{align}

We also have by the definition of $W$ and Theorem~\ref{weak-coreset-theorem}
\begin{align}\label{meuseweak}
 \norm{\smi \tilde{v_i} -W^2_{i,i}\tilde{v_i}} \leq (\eps/k) \sqrt{\smi \norm{\tilde{v_i}}^2 }\leq (\eps/k) {\smi \norm{\tilde{v_i}} },
 \end{align}
where the first inequality holds since $W_{i,i}=u^2_i$ for every $i\in [n]$, and the vector $u\in \REAL^{n}$ is a vector summarization $(\eps/5k)^2$-coreset for $(\br{\tilde{v_1},\cdots,\tilde{v_n}},(1,\cdots,1))$.

Finally, at~\cite{feldman2016dimensionality} they show that $(\eps/5k) {\smi \norm{\tilde{v_i}} } \leq \eps$. Hence, combining this fact with~\ref{feldmanroof}, and~\ref{meuseweak} yields
\begin{align}\label{wohoo}
\abs{1- \frac{\norm{WA'X}^2}{\norm{A'X}^2}} \leq \eps.
\end{align}

Finally, the corollary holds by combing Lemma~4.1 at~\cite{maalouf2019tight} with~\eqref{wohoo}
\end{proof}

\end{document}